%% file: iclr2025_conference.tex
\documentclass{article} % For LaTeX2e
\usepackage{iclr2025_conference,times}

\input{math_commands.tex}

\usepackage{xcolor}
\usepackage{hyperref}
\usepackage{tikz}
\usepackage{url}
\usepackage{amsmath}
\usepackage{amssymb}
\usepackage{graphicx}
\usepackage{array}
\usepackage{algorithm}
\usepackage{algpseudocode}
\usepackage{makecell}
\usepackage{multirow}
\usepackage{wrapfig}
\usepackage{dsfont}
\usepackage{pifont}
\usepackage{authblk}
\usepackage{amsthm}
\usepackage{booktabs}
\usepackage{wrapfig}

\usepackage{natbib}
\usepackage{tcolorbox}
\usepackage{verbatim}
\usepackage{listings}
\usepackage{subfig}

\lstdefinestyle{stateSnapshot}{
  basicstyle=\ttfamily\fontsize{7}{8}\selectfont,
  breaklines=true,
  frame=single,
  backgroundcolor=\color{blue!3},
  keywordstyle=\color{blue},
  stringstyle=\color{red!60!black},
  commentstyle=\color{green!50!black},
  showstringspaces=false,
  tabsize=2,
  numbers=none
}

\newcommand{\cmark}{\ding{51}}  % ✓
\newcommand{\xmark}{\ding{55}} 
\usepackage{enumitem}
\DeclareMathOperator*{\argmaxA}{\arg\max}

\title{ANDRE: An Attention-based Neuro-symbolic Differentiable Rule Extractor for Inductive Logic Programming}

\iclrfinalcopy
\author[1]{Iman Sharifi}
\author[1]{Peng Wei}
\author[2]{Saber Fallah}
\affil[1]{Dept. of Mechanical and Aerospace Engineering, George Washington University, USA}
\affil[2]{Dept. of Mechanical Engineering Sciences, University of Surrey, UK }
\affil[ ]{\texttt{\{i.sharifi,pwei\}@gwu.edu, s.fallah@surrey.ac.uk}}

\newtheorem{theorem}{Theorem}
\newtheorem{lemma}{Lemma}

\begin{document}

\maketitle

\begin{abstract}
Inductive Logic Programming (ILP) aims to learn interpretable first-order rules from data, but existing symbolic and neuro-symbolic approaches struggle to scale to noisy and probabilistic settings. Classical ILP relies on discrete combinatorial rule search and is brittle under uncertainty, while differentiable ILP methods typically depend on predefined rule templates or inaccurate fuzzy operators that suffer from vanishing gradients or poor approximation of logical structure when reasoning over probabilistic predicate valuations. This paper proposes an \textit{Attention-based Neuro-symbolic Differentiable Rule Extractor (ANDRE)}, a novel ILP framework that learns first-order logic programs by optimizing over a continuous rule space with attention-based logical operators. ANDRE replaces both rule templates and logical operators with fully differentiable, attention-driven conjunction and disjunction operators that approximate logical min–max semantics, enabling accurate, stable, and interpretable reasoning over probabilistic data. By softly selecting, negating, or excluding predicates within each rule, ANDRE supports flexible rule induction while preserving symbolic structure. Extensive experiments on classical ILP benchmarks, large-scale knowledge bases, and synthetic datasets with probabilistic predicates and noisy supervision demonstrate that ANDRE achieves competitive or superior predictive performance while reliably recovering correct symbolic rules under uncertainty. In particular, ANDRE remains robust to moderate label noise, substantially outperforming existing differentiable ILP methods in both rule extraction quality and stability.
\end{abstract}

\section{Introduction}
Symbolic artificial intelligence (AI) has long provided a foundation for interpretable reasoning, enabling systems to derive logical conclusions from structured knowledge~\citep{Colelough2025NeuroSymbolicAI,garcez2023neurosymbolic}. A prominent approach within this paradigm is Inductive Logic Programming (ILP), which combines machine learning with first-order logic to infer human-readable rules from examples~\citep{muggleton2012ilp}. Given a set of examples and background knowledge, ILP aims to discover logical clauses that explain observed facts, making it an essential tool in knowledge-base systems, explainable AI~\citep{socher2013reasoning,yang2017differentiable}, and even safety-critical domains such as autonomous driving~\citep{sharifi2023symbolic} and aviation~\citep{acharya2025integrating}. 

Classical ILP methods—such as Metagol~\citep{cropper2015logical,muggleton2015meta,inoue2009discovering} and Popper~\citep{cropper2021learning}—tackle the ILP problem using a variety of heuristic, search-based strategies. While these approaches produce fully interpretable results, a major limitation lies in their sensitivity to noise in the data labels~\citep{evans2018learning}. In addition, the rule discovery process typically requires considerable user intervention to define appropriate language bias—e.g., mode declarations and constraints—making the overall pipeline cumbersome and labor-intensive~\citep{cropper2022inductive}. Furthermore, the lack of seamless integration with neural networks restricts the scalability and adaptability of classical ILP methods in modern, large-scale machine learning applications.

To address these challenges, neuro-symbolic ILP methods have emerged, aiming to integrate ILP with neural networks to enhance scalability and robustness~\citep{cropper2022inductive}. By leveraging differentiable optimization, these approaches formulate rule induction as a gradient-based learning problem, which facilitates more effective handling of noisy data and redundant predicates. Broadly, these methods fall into two categories: those that assume prior knowledge of the rules, and those that operate entirely without it. In the first category, rule induction is guided by candidate rules typically generated through general rule templates~\citep{evans2018learning} or beam search techniques~\citep{shindo2021differentiable,shindo2023alpha}. Although these approaches can efficiently identify plausible rules, they often depend on domain-specific knowledge or involve ad hoc search procedures. Furthermore, due to their reliance on prior rule generation using templates and the manual conversion of background knowledge into readable valuations, these methods are not fully differentiable, limiting their compatibility with end-to-end neural network architectures~\citep{evans2018learning}.

In contrast, the second category assumes no prior knowledge of candidate rules~\citep{yang2017differentiable,gao2022learning,gao2024differentiable}. Instead, these methods explore rules by assigning trainable weights to vector or matrix embeddings that represent potential predicate valuations. While these approaches are  differentiable, partially robust to noise, and free from reliance on predefined rule templates, they mostly depend on soft, inaccurate approximations of logical conjunctions and disjunctions, using product-based $t$-norms and $s$-norms~\citep{yang2017differentiable,de2008probabilistic} and regularized geometric mean~\citep{wang-pan-2022-deep}, or trainable, less-interpretable parameterized function approximators~\citep{serafini2016logic,sen2022neuro}. Although product-based operators are accurate for Boolean inputs~\citep{sen2022neuro,payani2019inductive}, their performance deteriorates with probabilistic data, where the product tends to rapidly decay toward zero~\citep{hajek2001metamathematics}, leading to vanishing gradient issues~\citep{shakarian2023neuro}. Parameterized approximators, on the other hand, often sacrifice interpretability by representing logical operations as complex nonlinear functions~\citep{richardson2006markov,dong2018neural}. To date, attention-based neuro-symbolic ILP approaches, such as NeuralLP~\citep{yang2017differentiable} and DILR~\citep{wang-pan-2022-deep}, have emerged to overcome the limitations of the existing approaches, integrating attention mechanisms~\citep{vaswani2017attention} with logical operators. However, despite their effectiveness in general reasoning and question answering, their internal architectures still rely on classical logical operators, thereby inheriting the associated limitations. Given the limitations of existing neuro-symbolic ILP methods, there is a clear need for an alternative approach---one that extracts rules without predefined templates, avoids inefficient logical operators, and maintains both interpretability and structural flexibility.

This paper proposes a novel neuro-symbolic ILP method, called \textbf{\textit{Attention-based Neuro-symbolic Differentiable Rule Extractor (ANDRE)}}, which learns first-order logic programs from positive and negative examples using a fully differentiable and interpretable architecture. Operating without any assumptions about candidate rules or search-based heuristics, ANDRE reframes rule induction as learning over a continuous rule space in which each body predicate is softly selected, negated, or excluded using a novel fuzzy activation function. Furthermore, ANDRE employs softmin/softmax attention-based operators instead of classical logical operators, to effectively represent conjunction/disjunction operators. To the best of our knowledge, this is the first work to explicitly use softmin and softmax attention as a differentiable approximation of conjunction and disjunction in ILP. ANDRE's novel attention-based operators are both accurate and fully differentiable. These modules are well-suited not only for Boolean data but also for continuous probabilistic data and mitigate vanishing gradient issues that typically hinder differentiable rule learners. Moreover, ANDRE is both scalable to large knowledge bases and strongly robust to noise in the data labels. %To further enhance rule extraction, ANDRE incorporates a brand-new curriculum learning framework that discovers subrules in a progressive, accuracy-guided manner. The approach is inherently interpretable, as the learned weights highlight the structure and significance of each predicate, ensuring compliance with subrule formatting constraints. 
% Our main contributions are as follows:
% \begin{itemize}
% \item We propose ANDRE, a novel logic-based network that explores a well-crafted rule space and introduces adaptive, attention-based conjunction and disjunction operators—enabling robustness, flexibility, and noise tolerance while improving accuracy on Boolean and probabilistic data.
% % \item ANDRE features an efficient learning framework that can not only learn subrules progressively but also do one-shot training, where the learned weights are fully interpretable. 
% \item We conduct comprehensive experiments comparing ANDRE with state-of-the-art methods, demonstrating its competitive performance over classical ILP datasets as well as superior performance over knowledge bases and complex synthetic probabilistic datasets, in terms of rule extraction accuracy and generalization—even under noisy conditions.
% \end{itemize}
The main contributions of this paper are:
\vspace{-0.5em}
\begin{itemize}[leftmargin=1.5em,itemsep=0.1em]
\item First, we introduce a new continuous rule space formulation for ILP, in which the discrete inclusion, negation, and exclusion of predicates are relaxed into differentiable probability distributions, enabling gradient-based learning of symbolic programs without rule templates.

\item Second, we propose attention-based logical operators that provide a differentiable approximation to symbolic min–max conjunction and disjunction, allowing logical reasoning to be performed reliably over probabilistic predicate valuations rather than crisp Boolean facts.

\item Third, we derive a set of syntactic regularization losses, with a loss hyperparameter scheduling technique, that enforce variable inclusion, connectivity, and digitization, ensuring that the learned programs correspond to valid and extractable first-order logic rules.

\item Finally, through extensive experiments on classical ILP benchmarks, large-scale knowledge bases, and synthetic datasets with noisy supervision, we demonstrate that ANDRE achieves competitive or superior predictive performance while significantly improving rule extraction quality and robustness under uncertainty compared to existing differentiable ILP methods.
\end{itemize}
The remainder of the paper is organized as follows. Section~\ref{sec:preliminaries} reviews the foundational concepts and necessary background. Section~\ref{sec:methodology} introduces ANDRE's methodology in detail. Section~\ref{sec:experiments} presents experimental evaluations. Finally, Section~\ref{sec:conclusion} draws conclusions.

\section{Preliminaries}
\label{sec:preliminaries}
\textbf{Inductive Logic Programming (ILP).}
As a subfield of machine learning, ILP focuses on learning logical rules from structured data. Given a background knowledge base \( \mathbb{B} \), a set of positive examples \( E^+ \), and a set of negative examples \( E^- \), the objective of ILP is to learn a hypothesis \( H \) such that:
\begin{equation}
\forall e \in E^+ : \mathbb{B} \cup H \models e, \quad
\forall e \in E^- : \mathbb{B} \cup H \not\models e,
\label{eq:ilp}
\end{equation}
where \( \models \) denotes logical entailment. The hypothesis \( H \) comprises a set of first-order logic rules that best explain the observed examples while remaining consistent with the background knowledge. A typical ILP rule is a same-head disjunction composed of several subrules, defined as:
\begin{equation}
\texttt{h}(X_1^h, X_2^h, ..., X_{n_h}^h) \leftarrow \bigvee_{i=1}^n \texttt{h}_i(X_1^h, X_2^h, ..., X_{n_h}^h),
\label{rule_form}
\end{equation}
where \( \texttt{h} \) is the target head predicate, and the arguments \( X_k^h \in \mathbb{R} \) are the head variables shared across all subrules. Inheriting all head variables from \( \texttt{h} \), each subrule \( \texttt{h}_i \) is defined as a conjunction of specific body predicates:
\begin{equation}
\texttt{h}_i(X_1^h, X_2^h, ..., X_{n_h}^h) \leftarrow \bigwedge_{j=1}^m \texttt{B}_{ij}(X_1^{b_j}, X_2^{b_j}, ..., X_{n_{b_j}}^{b_j}),
\label{subrule_form}
\end{equation}
where \( \texttt{B}_{ij} \) represents a potential body predicate in \( \texttt{h}_i \), and \( X_k^{b_j} \in \mathbb{R} \) are the variables associated with the \( j \)-th predicate. The arity \( n_{b_j} \) can vary depending on the predicate. The ILP objective is to identify the optimal assignment of \( \texttt{B}_{ij} \) in every subrule \( \texttt{h}_i \), ultimately constructing the complete rule structure in Eq.~\ref{rule_form}. Due to the combinatorial nature of subrules, which involve various configurations of body predicates, optimizing them using neural networks—typically designed for fixed-size inputs—poses significant challenges.

\textbf{Neuro-symbolic Inductive Logic Programming.}
Neuro-symbolic ILP aims to combine the symbolic reasoning capabilities of traditional ILP systems with the robustness and adaptability of neural networks. These methods reformulate the ILP task as a continuous, differentiable process, casting rule induction as an optimization problem in which logical rules are learned by minimizing a loss function over a dataset of \( N \) labeled examples \( E^{N \times (m+1)} \). Each example \( e \in [0,1]^{(m+1)} \) includes the valuations of \( m \) potential body predicates, along with a corresponding binary output \( h \in \{0,1\} \). The goal is to map the inputs to outputs while discovering explicit logical connections between the body predicates and the target head predicate. This formulation enables training via gradient descent, allowing the model to generalize from ambiguous patterns and tolerate noise or imperfections in the data.

From a technical perspective, neuro-symbolic ILP typically maps positive and negative examples to binary labels, transforming the problem into a binary classification task that can be addressed using standard loss functions. The total loss function commonly consists of two main components: (1) \textit{semantic loss} and (2) \textit{syntactic loss}. Semantic loss—typically formulated as binary cross-entropy—guides the model in identifying high-level relationships between body predicates and the head predicate. It evaluates which subpredicates contribute most to reducing prediction error. Syntactic loss, in contrast, enforces structural correctness by ensuring that the learned rules conform to valid logical formats and general ILP constraints.

\section{Attention-based Neuro-symbolic Rule Extractor Architecture}
\label{sec:methodology}
ANDRE learns logical rules from structured examples by introducing a rule space that enables smooth and interpretable rule induction. It also employs attention-based conjunction and disjunction mechanisms that approximate symbolic reasoning operations. Building on these components, a learning strategy is developed to discover subrules sequentially, followed by a detailed procedure for symbolic rule extraction after training.

\subsection{Rule Space Construction}
\label{subsec:rule-space}
Since the rule and subrule structures vary dynamically in ILP, it is highly challenging to develop a universal, systematic neural network-based approach for identifying subrules for a given head predicate. This challenge is more akin to structure induction or representation learning in natural language processing, where models must infer latent semantic and syntactic relationships from variably structured prompts. To address this issue, a zero-padding technique is typically employed to enforce a fixed-size input compatible with neural networks. Adopting a similar technique in ILP mitigates the difficulty caused by the dynamic transformation of logical subrules, enabling a fixed structure suitable for neural modeling. Just as adding zero to neural inputs has a neutral effect, adding one to a logical conjunction also has no effect, due to the logical \textit{identity property} ($1 \wedge b = b$). Based on this property, we represent the exclusion of $\texttt{B}_{ij}$ from subrule $\texttt{h}_i$ in Eq.~\ref{subrule_form} using the symbolic subpredicate $\texttt{1}$. Thus, each $\texttt{B}_{ij}$ can take only one of three possible symbolic subpredicates: $\texttt{b}_j$, $\neg \texttt{b}_j$, or $\texttt{1}$. Here, \( \texttt{b}_j \) denotes the positive form of the predicate, \( \neg \texttt{b}_j \) its negation, and \( \texttt{1} \) the absence of the predicate.
\begin{wrapfigure}{r}{0.6\textwidth}
    \centering
    \includegraphics[width=\linewidth, clip, trim=0.7cm 0.28cm 1.17cm 0.33cm]{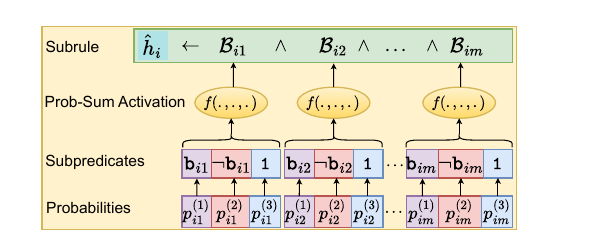}
    \vspace{0.1mm}
    \caption{\textbf{Graphical representation} of a logical subrule structure within the context of the rule space.}
    \label{fig:rule_structure}
    \vspace{1mm}
\end{wrapfigure}

Following this strategy, we reformulate the ILP problem by converting Eqs.~\ref{rule_form} and~\ref{subrule_form} into a matrix representation. Let the matrix $\texttt{B}^{n \times m}$ represents $m$ body predicates across $n$ subrules. As discussed, each array $\texttt{B}_{ij}$ is:
\begin{equation}
\texttt{B}_{ij} = \left\{ \texttt{b} \mid \texttt{b} \in  \{\texttt{b}_j, \neg \texttt{b}_j, \texttt{1}\} \right\}.
\label{subpredicates}
\end{equation}
Let $\texttt{S}_j = \{\texttt{b}_j, \neg \texttt{b}_j, \texttt{1}\}$; then $\texttt{B}_{ij} \in \texttt{B}$ refers to the target symbolic predicate to be determined. Let $b_j$, $1 - b_j$, and $1$ be the valuations of $\texttt{b}_j$, $\neg \texttt{b}_j$, and $\texttt{1}$, respectively. There are $3^m \times n$ possible combinations of subpredicates for $\texttt{B}$, making the rule space incredibly large. The challenge is to navigate this large discrete space using gradient-based optimization to recover the desired $\texttt{B}$ given a set of examples $E$ containing all $b_j$ truth values and corresponding binary outputs.

To overcome this, we convert the discrete space into a continuous one and apply gradient-based methods to learn optimal subpredicates. Specifically, we define an adaptive probability distribution over the subpredicates in $\texttt{S}_j$ to determine the most likely option. A trainable weight vector $\mathcal{W}_{ij} \in \mathbb{R}^3$ is assigned to each $\texttt{S}_j$, which is then passed through a softmax function to produce a probability distribution $P_{ij} \in [0,1]^3$, where $P_{ij} = \{ p_{ij}^{(1)}, p_{ij}^{(2)}, p_{ij}^{(3)} \}$ and \( \sum_{k=1}^3 p_{ij}^{(k)} = 1 \).

Let $\mathcal{S}_j = \{ p_{ij}^{(1)} b_j, \; p_{ij}^{(2)} (1-b_j),\; p_{ij}^{(3)} \}$ be the set of weighted valuations for the subpredicates in $\texttt{S}_j$ (illustrated in Figure~\ref{fig:rule_structure}). The goal is to amplify the most significant subpredicate while suppressing the others. While a hard max operator would select the largest value in $\mathcal{S}_j$, it introduces non-differentiability. Instead, we use a differentiable approximation based on the probabilistic sum (prob-sum), which emphasizes the dominant input among probabilistic values (see Appendix~\ref{ap:prob-sum-justification} for further details):
\begin{equation}
f(x, y, z) = x + y + z - xy - xz - yz + xyz,
\label{prob_sum}
\end{equation}
where $f: [0,1]^3 \rightarrow [0,1]$ is the prob-sum activation function applied to $x, y, z \in [0,1]$. Using this function, the valuation of $\texttt{B}_{ij}$ is computed as:
\begin{equation}
\mathcal{B}_{ij} = f \bigl(p_{ij}^{(1)} b_j,\, p_{ij}^{(2)} (1-b_j),\, p_{ij}^{(3)}\bigl),
\end{equation}
where $\mathcal{B}_{ij}$ denotes the soft valuation of $\texttt{B}_{ij}$. This formulation quantifies the combined contribution of the three symbolic subpredicates and emphasizes the most dominant one.

Figure~\ref{fig:rule_structure} visually demonstrates how the probabilities are assigned and then aggregated via Eq.~\ref{prob_sum} to compute $\mathcal{B}_{ij}$. This step is essential for constructing the continuous rule space. The overall objective is to efficiently explore this parameterized space and discover a set of subrules that accurately explain the positive and negative examples provided in the training set.
 
\begin{figure}[!t]
    \centering
    \includegraphics[width=\linewidth, clip, trim=0.7cm 0.35cm 0.7cm 0.22cm]{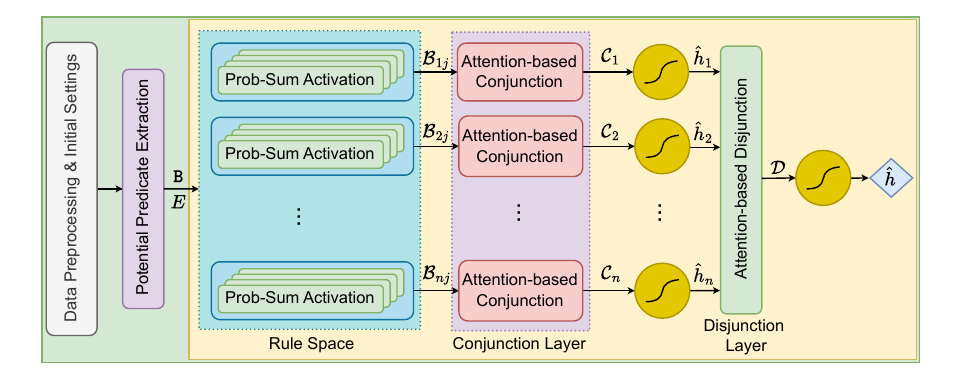}
    \vspace{-2mm}
    \caption{\textbf{Overview of ANDRE's architecture.} ANDRE includes an innovative rule space with an \textit{Attention-based Conjunction-Disjunction} Network.}
    \label{fig:ANDRE}
\end{figure}

\subsection{Logical Network of ANDRE}
Having constructed the continuous rule space with trainable probabilities, we now describe ANDRE's logical network. Following the computation of all body predicate valuations, ILP systems typically apply a logical conjunction-disjunction mechanism to infer the value of the head predicate. Recently, \cite{hu2025minimalist} demonstrated that a single softmax attention layer can approximate $k$-Boolean functions (e.g., AND/OR). Inspired by this work, we adopt and customize a new strategy to approximate hard minimum and maximum operators using soft attention.

\textbf{Conjunction Layer.}
As shown in Figure~\ref{fig:ANDRE}, after constructing the rule space and computing $\mathcal{B}_{ij}$ for subrule $\texttt{h}_i$, the valuation of the subrule head is determined by aggregating the values of all $\mathcal{B}_{ij}$ through a conjunction operation. The attention-based conjunction is calculated as follows:
\begin{equation}
\mathcal{C}_i = \sum_{j=1}^m \frac{e^{s_{\text{min}}(\mathcal{B}_{ij})}}{\sum_{k=1}^m e^{s_{\text{min}}(\mathcal{B}_{ik})}} \mathcal{B}_{ij},
\label{conjunction}
\end{equation}
where \( \mathcal{C}_i \in [0, 1] \) represents the soft conjunction of the body predicates in subrule $\texttt{h}_i$. This mechanism uses adaptive attention scores \( s_{\text{min}}: [0,1] \rightarrow (-\infty, 0] \), which assign higher attention to smaller values:
\begin{equation}
s_{\text{min}}(\mathcal{B}_{ij}) = -\beta \mathcal{B}_{ij},
\label{min-score}
\end{equation}
where \( \beta \in (0, \infty) \) is a hyperparameter controlling the sharpness of the attention distribution. Higher values of \( \beta \) improve the approximation to a hard minimum but may introduce instability. Eq.~\ref{conjunction} yields a weighted average of the $\mathcal{B}_{ij}$ terms, where the weights are given by a softmax over the negative attention scores. Figure~\ref{fig:att-conj-disjunction} illustrates how each $\mathcal{C}_i$ is computed using the attention-based conjunction. 

Since the output of each subrule is expected to be binary, a sigmoid activation function is applied to convert \( \mathcal{C}_i \) into a value close to $0$ or $1$: \(\hat{h}_i = \text{sigmoid}\left( \lambda(\mathcal{C}_i - \gamma) \right)\), where \( \lambda \in (0, \infty) \) controls the steepness of the sigmoid function and \( \gamma \in (0, 1) \) determines its midpoint. This process is repeated for each subrule \( \texttt{h}_i \), yielding predicted outputs \( \hat{h}_i \in [0, 1] \), which are then used in the disjunction step.

\textbf{Disjunction Layer.}
Once all subrule outputs \( \hat{h}_i \) are computed, they must be combined to infer the overall head predicate value, as shown in Figure~\ref{fig:ANDRE}. To do this, we apply a similar attention-based strategy that approximates the maximum among the subrule outputs. The disjunction is computed as:
\begin{equation}
\mathcal{D} = \sum_{i=1}^n \frac{e^{s_{\text{max}}(\hat{h}_i)}}{\sum_{k=1}^n e^{s_{\text{max}}(\hat{h}_k)}} \hat{h}_i,
\label{disjunction}
\end{equation}
where \( \mathcal{D} \in [0, 1] \) represents the soft disjunction of all subrule outputs. The attention scores \( s_{\text{max}}: [0,1] \rightarrow [0, \infty) \) are defined in opposition to those in Eq.~\ref{min-score}:
\(s_{\text{max}}(\hat{h}_i) = -s_{\text{min}}(\hat{h}_i)\).
This formulation emphasizes higher-valued subrules, allowing the model to approximate a logical OR operation. Figure~\ref{fig:att-conj-disjunction} indicates how $\mathcal{D}$ is computed using an attention-based disjunction. As with conjunction, a sigmoid activation is applied to map the disjunction output to a binary value:
\( p(\hat{h} \mid e, \mathcal{W}) = \text{sigmoid}\left( \lambda(\mathcal{D} - \gamma) \right) \),
where \( p(\hat{h} \mid e, \mathcal{W}) \in [0,1] \) is the final predicted probability of the head predicate for input example $e$ with ANDRE’s weight configuration \( \mathcal{W} \in \mathbb{R}^{n \times m \times 3} \), which includes all subpredicate weights \( \mathcal{W}_{ij} \).

\begin{figure*}
    \centering    \includegraphics[width=\linewidth]{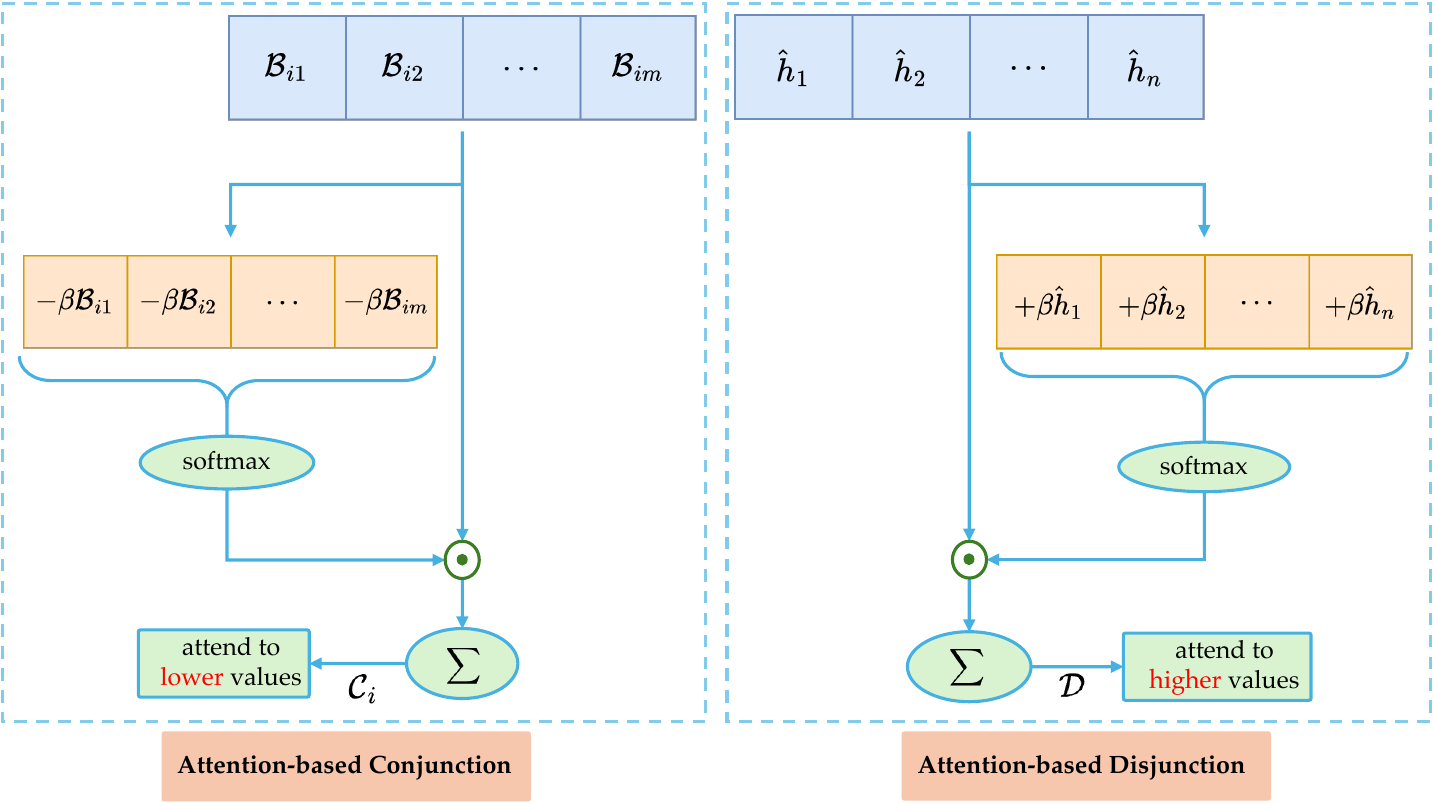}
    \vspace{-1mm}
    \caption{Graphical representations of attention-based conjunction and disjunction operators. Softmax with $-\beta$ represents softmin attention (conjunction), while softmax with $+\beta$ stands for softmax attention (disjunction). Symbol $\bigodot$ indicates a dot-product operator.}
    \label{fig:att-conj-disjunction}
\end{figure*}

Figure~\ref{fig:ANDRE} illustrates the complete ANDRE architecture, including the construction of the rule space and the attention-based conjunction and disjunction layers. This design enables symbolic rule extraction through a fully differentiable neural structure that learns logical dependencies directly from binary-labeled examples. Moreover, Appendix~\ref{ap:attention-product} mathematically proves that the proposed attention-based operators approximate hard minimum and maximum operators more accurately compared to product-based $t$-norms and $s$-norms on probabilistic data.

\subsection{Optimization}
Given the valuations of all potential body predicates and corresponding Boolean outputs, ANDRE is designed not only to identify the most relevant subpredicates but also to ensure that the resulting rules adhere to valid logical formats. To achieve this, the learning process is guided by two complementary objectives: \textit{semantic learning} and \textit{syntactic learning}.

\textbf{Semantic Learning.} This learning paradigm aims to learn a set of optimal weights $\mathcal{W}$ that enable the network to accurately predict the Boolean outputs while highlighting the most confident subpredicates. It involves optimizing three general loss functions:

\textit{Binary Cross-Entropy Loss:} This loss function evaluates the confidence of each prediction. By optimizing this loss, ANDRE learns to predict Boolean outcomes for each sample, ensuring consistency with the training examples. The loss function tailored for ANDRE is defined as:
\begin{equation}
    \mathcal{L}_{\text{BCE}} = \sum_{(e, h) \in \mathcal{D}} - h \cdot \log p(\hat{h} \,|\, e, \mathcal{W})  - (1 - h) \cdot \log \left(1 - p(\hat{h} \,|\, e, \mathcal{W})\right),
\label{eq:bce-loss}
\end{equation}
where \( p(\hat{h} \, | \, e, \mathcal{W}) \) denotes the predicted probability of the head predicate given input example \( e \) and weight configuration \( \mathcal{W} \), and the Boolean output \( h \in \{0, 1\} \) is the corresponding ground-truth label.

\textit{Entropy Loss:} To encourage exploration and prevent premature convergence to overly confident or arbitrary predicate selections, we introduce an entropy-based regularization term. For each trainable probability distribution $P_{ij}$ associated with $\texttt{S}_j$, the total entropy loss across all probabilities in all subrules is defined as:
\begin{equation}
\mathcal{L}_{\text{E}} = -\sum_{i=1}^{n} \sum_{j=1}^{m} \sum_{k=1}^{3} p_{ij}^{(k)} \cdot\log \left( p_{ij}^{(k)} + \epsilon \right),
\label{eq:entropy-loss}
\end{equation}
where $\epsilon$ is a small constant (e.g., $10^{-6}$) for numerical stability. By minimizing this loss, ANDRE encourages one of the probabilities in $P_{ij}$ to converge to $1$, while the remaining two converge to $0$. 

\textit{Similarity Loss:} To ensure subrule diversity, we minimize the cosine similarity of each subrule weights with the rest of subrule weights, defined as:
\begin{equation}
\mathcal{L}_{\text{S}} =
\frac{2}{n(n+1)}
\sum_{i=1}^{n}
\sum_{j=i+1}^{n}
% \frac{1}{d}
% \sum_{r=1}^{d}
\frac{
\left\langle W_i \, ,\, W_j \right\rangle
}{
\left\| W_i \right\| \cdot
\left\| W_j \right\|
},
\end{equation}
where $W_i$ and $W_j$ are vectorized weights of subrule $i$ and $j$, respectively.

In general, by minimizing \( \mathcal{L}_{\text{BCE}} \), \( \mathcal{L}_{\text{E}} \), and \( \mathcal{L}_{\text{S}} \), ANDRE iteratively updates its parameters to emphasize the most informative subpredicates and suppress irrelevant or redundant ones. This semantic optimization enables the model to learn interpretable and high-performing logical rules.

\textbf{Syntactic Learning.} In addition to achieving high predictive accuracy, it is essential that the extracted rules conform to valid logical structures. Two common formats are considered to ensure syntactic validity: (1) the \textit{head variable inclusion} format, and (2) the \textit{auxiliary variable connectivity} format (see Appendix~\ref{app:syntactic-formats} for more details).
To enforce these structural constraints during training, we introduce two syntactic loss functions: the \textit{Range-Restricted Loss} \( \mathcal{L}_{\text{R}} \), corresponding to head variable inclusion, and the \textit{Connected Loss} \( \mathcal{L}_{\text{C}} \), corresponding to auxiliary variable connectivity.

To compute \( \mathcal{L}_{\text{R}} \) and \( \mathcal{L}_{\text{C}} \), we first quantify how frequently each variable \( X_k \) appears in the body of a subrule \( \hat{h}_i \), given the current weights \( \mathcal{W} \). This frequency is denoted by \( M_k^i \), representing the expected count of variable \( X_k \) in the body predicates of subrule \( \hat{h}_i \).
The computation begins by identifying which body predicates contain \( X_k \) as an argument. For this, we define a Boolean indicator \( V_k^{ij} = \mathds{1}(x = X_k, \texttt{b} = \texttt{b}_j) \), which evaluates to 1 if variable \( X_k \) appears in predicate \( B_{ij} \), and 0 otherwise. This information is domain-specific and assumed to be fixed for all subrules.
We then estimate the probability that each predicate \( \texttt{B}_{ij} \) is active in subrule \( \hat{h}_i \). Since \( p_{ij}^{(3)} \) represents the probability of selecting the identity subpredicate \( \texttt{1} \), the effective inclusion probability of predicate \( \texttt{B}_{ij} \) is \( 1 - p_{ij}^{(3)} \). The expected usage count of variable \( X_k \) in \( \hat{h}_i \) is then computed as:
\begin{equation}
    M_k^i = \sum_{j=1}^m (1 - p_{ij}^{(3)}) \cdot \mathds{1}(x = X_k, \texttt{b} = \texttt{b}_{j}).
\end{equation}

\textit{Range-Restricted Loss:} To enforce the head variable inclusion constraint, we penalize subrules where a head variable is missing from the body. Let \( K_h \) denote the set of head variables. The loss is formulated as:
% \begin{equation}
% \mathcal{L}_{\text{R}} =\frac{1}{|K_h|} \sum_{k \in K_h} \text{sigmoid} \left( -\lambda_{\text{R}} (M_k^i - \gamma_r) \right)
% \end{equation}
\begin{equation}
\mathcal{L}_{\text{R}} =
% \frac{1}{|K_h| \, n}
\sum_{i=1}^{n} \sum_{k \in K_h}
\begin{cases}
\left(M_k^i - 1\right)^2, & \text{if } M_k^i < 1, \\[6pt]
\eta \left(M_k^i - 1\right), & \text{if } M_k^i \ge 1,
\end{cases}
\label{rr-loss}
\end{equation}
which ensures the inclusion of head variables at least once in the body predicates. The loss increases when \( M_k^i < 1 \), encouraging inclusion of all head variables. Parameter $\eta \in [0,\infty)$ adjusts the magnitude of penalty when \( M_k^i \ge 1 \). Higher $\eta$ values encourage the head variable to appear in body predicates only once.
% where \( \lambda_{\text{R}} \) controls the sharpness of the sigmoid function and \( \gamma_r \in (0,1) \) defines the decision boundary. 

\textit{Connected Loss:} To enforce auxiliary variable connectivity, we penalize subrules where an auxiliary variable appears only once and encourage subrules to have two auxiliary variables. Let \( K_a \) denote the set of auxiliary variables. The corresponding loss is defined as:
% \begin{equation}
%     \mathcal{L}_{\text{C}} = \frac{1}{|K_a|} \sum_{k \in K_a} c_1 \exp\left(-c_2 (M_k^i - \gamma_c)^2\right),
% \end{equation}
\begin{equation}
\mathcal{L}_{C} =
% \frac{1}{|K_a| \, n}
\sum_{i=1}^{n}
\sum_{k \in K_a}
\begin{cases}
c_1 \cdot \exp\left(-c_2 (M_k^i - 1)^2\right),
& \text{if } M_k^i < 2, \\[10pt]

\left(M_k^i - 2\right)^2,
& \text{if } M_k^i \ge 2,
\end{cases}
\label{eq:conn-loss}
\end{equation}
where \( c_1 \) and \( c_2 \) are tuning parameters. The bell-shaped penalty is maximized when \( M_k^i \approx 1 \), disfavoring singleton auxiliary variables.

\textit{Digitization Loss:} Optionally, to encourage \( M_k^i \) to be an integer (e.g., 0, 1, or 2), we introduce a regularization term:
\begin{equation}
    \mathcal{L}_{\text{D}} = \frac{1}{|K|} \sum_{k \in K} \frac{1}{2} \left(1 - \sin\left(2\pi M_k^i + \frac{\pi}{2}\right)\right),
\end{equation}
where \( K \) is the set of all variable indices. While this periodic and non-convex loss can improve interpretability, it may introduce instability if over-weighted. Therefore, it should be used with a small coefficient.

Combining the semantic and syntactic objectives, the total loss is formulated as:
\begin{equation}
    \mathcal{L} = \mathcal{L}_{\text{BCE}} + \lambda_{\text{E}} \mathcal{L}_{\text{E}} + 
    \lambda_{\text{S}} \mathcal{L}_{\text{S}} + \lambda_{\text{R}} \mathcal{L}_{\text{R}} + \lambda_{\text{C}} \mathcal{L}_{\text{C}} + \lambda_{\text{D}} \mathcal{L}_{\text{D}},
\label{eq:total-loss}
\end{equation}
where \( \lambda_{\text{E}} \), \( \lambda_{\text{S}} \), \( \lambda_{\text{R}} \), \( \lambda_{\text{C}} \), and \( \lambda_{\text{D}} \) are scalar weights controlling the influence of each regularization term. As detailed in Appendix~\ref{app:loss-scheduling}, the loss weights are scheduled during training based on the role of each loss component. The loss scheduling technique gradually introduces or reweights loss components during training, allowing the model to first learn stable, high-confidence structures before enforcing stricter semantic and syntactic constraints. By dynamically adjusting loss weights over training epochs, it prevents early over-regularization and promotes more robust, interpretable convergence. Given the total loss, optimization proceeds via gradient-based methods on a given batch of data. During training, the network minimizes \( \mathcal{L} \) through backpropagation, progressively refining its weights to discover valid subrules that collectively define the same-head logical rule.
% \textbf{Curriculum Learning.}
% To extract multiple subrules from data, ANDRE employs a curriculum learning strategy that discovers subrules sequentially in order of confidence. This staged training process allows the model to prioritize simpler or more confident subrules before addressing more complex ones, thereby improving both stability and interpretability in rule extraction.
% Algorithm~\ref{andre_algo} (Appendix~\ref{ap:curriculum-learning}) outlines the complete curriculum learning procedure used by ANDRE. Given a dataset, the model is first initialized with a single subrule. The weights associated with this subrule are optimized by minimizing the total loss over multiple epochs and restarts. The best-performing set of weights across restarts is retained. The model is then extended by adding a new subrule. Previously learned weights are frozen to prevent forgetting, and only the newly added subrule is trained while earlier subrules remain fixed. This process is repeated iteratively: at each stage, the model is augmented with one new subrule, and only the latest subrule undergoes training. The procedure continues until either a predefined maximum number of subrules \( R_{\max} \) is reached, or the accuracy ratio \( a_r \) exceeds a threshold \( \tau \), triggering early stopping.

ANDRE assumes no knowledge of the number of rules in the target dataset, thereby running iteratively with different number of rules to find the most accurate model, even though it is capable of finding rules with only one-shot training. The accuracy metric used for evaluation is defined as the proportion of correctly predicted labels over all examples. At each stage, multiple restarts are performed to avoid poor local minima, ensuring that the most effective subrule is selected. This incremental process not only enhances the model convergence but also enables subrules to be naturally ranked by their learned confidence. 
After training, it is essential to extract the effective body predicates from the probabilities learned by ANDRE. Appendix~\ref{app:pred-identification} describes the criterion we use to identify all body predicates.

% To summarize, the proposed approach reformulates neuro-symbolic ILP as an attention-based conjunction-disjunction architecture operating over a continuous rule space. Each body predicate is governed by a compact trainable distribution over three symbolic forms—positive, negated, or excluded—enabling flexible and interpretable subrule construction. ANDRE introduces tailored loss functions that not only guide semantic learning for accurate prediction but also enforce syntactic compliance with logical rule structures. Furthermore, a progressive curriculum learning strategy is employed to iteratively extract multiple subrules for the same-head predicate, enhancing both stability and generalization. This fully differentiable design facilitates efficient gradient-based training while preserving interpretability through symbolic post-processing, offering a unified, scalable, and interpretable solution to differentiable ILP.

\section{Experimental Results}
\label{sec:experiments}
This section presents a comprehensive evaluation of ANDRE’s performance in learning interpretable logical rules. All experiments were implemented using PyTorch and ADAM optimizer on an NVIDIA GeForce RTX 5090. The default hyperparameter settings are provided in Table~\ref{tab:andre_params} (Appendix~\ref{app:andre-params}) but may vary slightly per dataset. To ensure stable optimization, batch training with label balancing, gradient clipping, and learning rate decay were applied across all training phases. Moreover, ANDRE model is completely vectorized and GPU-friendly, having the ability to train on large-scale datasets rapidly. 

Prior to the core experiments, two illustrative case studies were conducted on synthetic datasets in Appendix~\ref{ap:visual-example} to visualize how ANDRE updates its predicate probabilities, how loss and accuracy evolve over training, and how symbolic rules are extracted. For \texttt{grandparent} task, figure~\ref{fig:grandparent-weights} illustrates the final learned subpredicate weights. Accordingly, after applying the predicate identification procedure described in Appendix~\ref{app:pred-identification}, ANDRE extracts the following symbolic rule set:
\begin{equation*}
\begin{aligned}
\texttt{grandparent}(X_1, X_3) \leftarrow {} &
\texttt{mother}(X_1, X_2) \wedge \texttt{mother}(X_2, X_3), \\
\texttt{grandparent}(X_1, X_3) \leftarrow {} &
\texttt{father}(X_1, X_2) \wedge \texttt{father}(X_2, X_3), \\
\texttt{grandparent}(X_1, X_3) \leftarrow {} &
\texttt{mother}(X_1, X_2) \wedge \texttt{father}(X_2, X_3), \\
\texttt{grandparent}(X_1, X_3) \leftarrow {} &
\texttt{father}(X_1, X_2) \wedge \texttt{mother}(X_2, X_3),
\end{aligned}
\end{equation*}
which are both semantically and syntactically correct. Variables $X_1, X_3$ are head variables, while $X_2$ is an auxiliary variable. More details are deferred to Appendix~\ref{ap:visual-example}.
\begin{figure}[t]
    \centering
    \includegraphics[width=\linewidth]{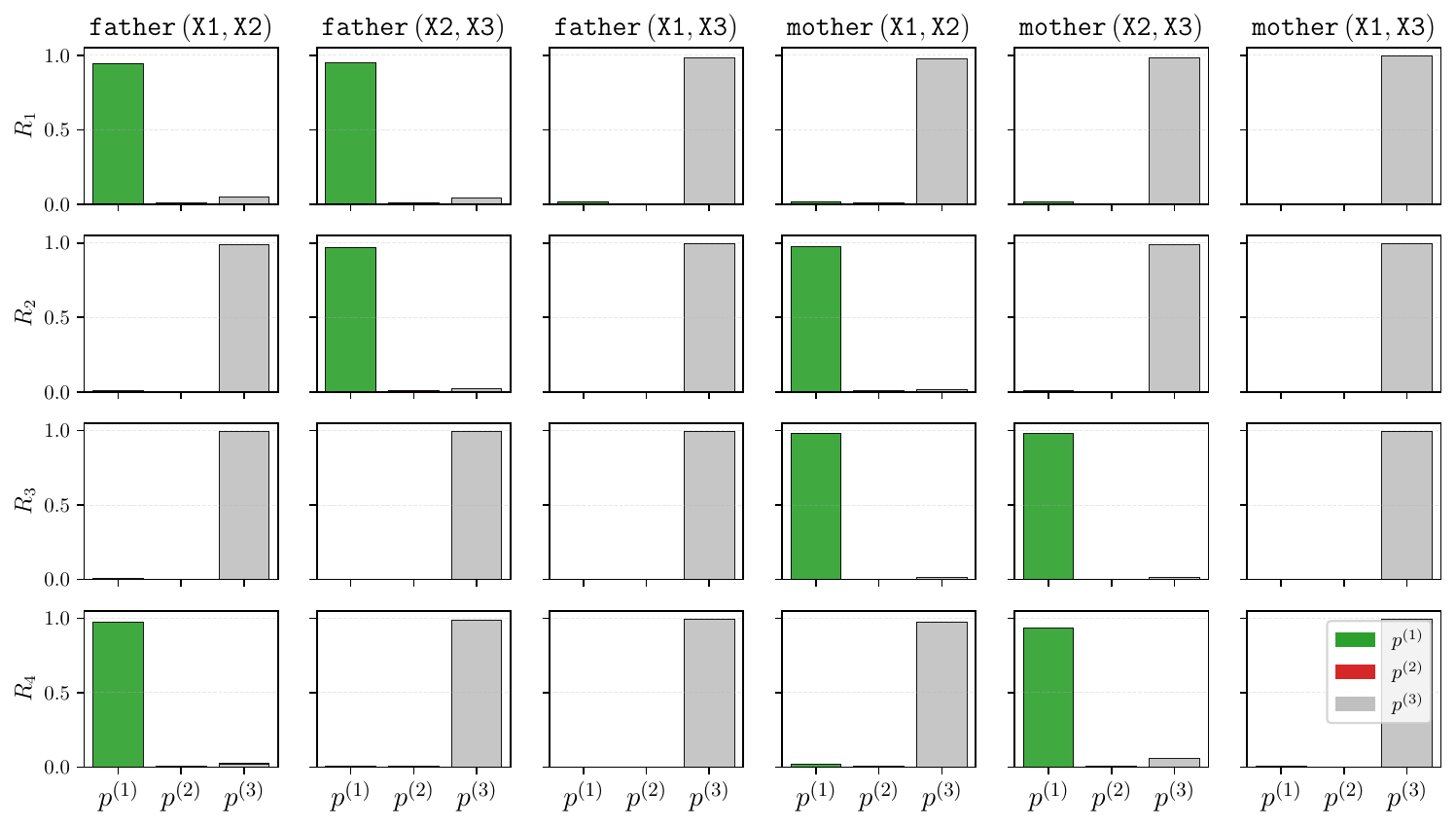}
    % \fbox{\parbox[c][5cm][c]{0.95\linewidth}{\centering Final learned ANDRE subpredicate weights}}
    \caption{Final softmax-normalized subpredicate probabilities for the Grandparent task.
    Each subrule converges to a sparse and interpretable predicate configuration. Green, red, and gray bars indicate the probability of inclusion, negation, or exclusion of each body predicate in each subrule $R_i$.}
    \label{fig:grandparent-weights}
\end{figure}

The experiments are organized into three main categories: 

\begin{wraptable}{r}{0.48\textwidth}
\centering
\scriptsize
\vspace{-5mm}
\caption{The results on ILP datasets. The symbols \cmark, $*$, and $-$ indicate that the accuracy of the generated logic program is equal to 100\%, less than 100\%, and equal to 0\%, respectively \citep{gao2024differentiable}.}
\label{tab:ilp_comparison}
\renewcommand{\arraystretch}{1.2}
\begin{tabular}{lcccc}
\toprule
\textbf{Task} & $\partial$ILP & NeuralLP & DFORL & ANDRE \\
\toprule
Predecessor       & \cmark & \cmark & \cmark & \cmark\\
Odd               & \cmark & $-$          & \cmark & \cmark\\
Even              & $-$          & $-$          & \cmark & \cmark\\
LessThan          & \cmark & \cmark & \cmark & \cmark\\
\midrule
Son               & \cmark & $*$          & \cmark & \cmark\\
Grandparent       & \cmark & \cmark & \cmark & \cmark\\
Related       & \cmark & $*$          & \cmark & \cmark\\
Father            & $-$          & $-$          & \cmark & \cmark\\
\midrule
Direct Edge     & \cmark & $*$          & \cmark & \cmark \\
Connected     & \cmark & $*$          & \cmark & \cmark \\

\toprule
\end{tabular}
\vspace{-3mm}
\end{wraptable}
\vspace{-3mm}

\subsection{Classical ILP Datasets}
To assess the generalizability and interpretability of ANDRE in real-world symbolic settings, we first evaluate its performance on classical ILP benchmarks. Similar to prior methods such as $\partial$ILP~\citep{evans2018learning}, NeuralLP~\citep{yang2017differentiable}, and DFORL~\citep{gao2024differentiable}, ANDRE is tested on over ten widely used ILP tasks. %These tasks span diverse domains, including arithmetic reasoning, family tree inference, and graph connectivity, and serve to benchmark both scalability and robustness under crisp, symbolic input conditions. 
As these benchmarks are originally unstructured, a preprocessing step is necessary to transform them into a form compatible with differentiable learning. Therefore, we apply a propositionalization procedure introduced by \citet{gao2022learning}, that converts the logical relationships into a supervised dataset $E$, consisting of $m$ predicate valuations and a corresponding Boolean target output. % Our approach extends the transformation strategy introduced by \citep{gao2022learning}, while allowing for a broader range of predicate arities beyond unary and binary forms.

Following this conversion, ANDRE is tasked with discovering a logical rule that best explains each dataset while preserving accuracy over data. Table~\ref{tab:ilp_comparison} summarizes the tasks and the performance of ANDRE compared to the baselines. Each dataset is described in full detail in Appendix~\ref{app:classical-ilp}, including body predicates, background knowledge, and sets of positive and negative examples.
ANDRE achieves $100\%$ rule extraction accuracy across all ten tasks, matching DFORL and outperforming both $\partial$ILP and NeuralLP. These results underscore ANDRE's effectiveness in symbolic environments and confirm its capacity to induce interpretable logical rules from examples. Moreover, the subrules extracted by ANDRE for each task consistently adhered to the two syntactic constraints, demonstrating the model’s built-in capability for learning rule structures.

\begin{wraptable}{r}{0.48\textwidth}
\vspace{-4mm}
\centering
\scriptsize
\caption{Performance comparison across knowledge bases. ACC@$h$ means the accuracy of the model on a dataset with head predicate $h$. Predicates \textit{blo}, \textit{int}, \textit{neg}, and \textit{intw} stand for \textit{blockpositionindex}, \textit{intergovorgs3}, \textit{negativecomm}, and \textit{interacts\_with}.}
\label{tab:benchmark_results}
\begin{tabular}{llcccc}
\toprule
\rotatebox{90}{\textbf{Dataset}} &
\rotatebox{90}{\textbf{Metrics}} &
\rotatebox{90}{\textbf{NTP$\lambda$}} &
\rotatebox{90}{\textbf{NeuralLP}} &
\rotatebox{90}{\textbf{DFORL}} &
\rotatebox{90}{\textbf{ANDRE}} \\
\toprule

% ---------------- Countries ----------------
\multirow{4}{*}{\rotatebox{90}{Countries}}
 & ACC@S1 & \textbf{100.0} & \textbf{100.0} & \textbf{100.0} & \textbf{100.0} \\
 & ACC@S2 & \textbf{100.0} & \textbf{100.0} & \textbf{100.0} & \textbf{100.0} \\
 & ACC@S3 & \textbf{100.0} & -- & \textbf{100.0} & \textbf{100.0} \\
 & & & & & \\
\midrule

% ---------------- Nations ----------------
\multirow{7}{*}{\rotatebox{90}{Nations}}
 & MRR      & 0.418 & 0.565 & 0.789 & \textbf{0.831} \\
 & HITS@1   & 41.79 & 52.49 & 73.88 & \textbf{79.52} \\
 & HITS@3   & 41.79 & 60.95 & 84.58 &  \textbf{84.73} \\
 & HITS@10  & 41.79 & 61.19 & 85.07 & \textbf{93.81} \\
 & ACC@blo  & \textbf{100.00} & 50.00 & \textbf{100.00} & 99.06 \\
 & ACC@int  & 84.62 & 84.62 & 84.62 & \textbf{89.38} \\
 & ACC@neg  & 37.50 & 75.00 & 75.00 & \textbf{89.24} \\
\midrule

% ---------------- UMLS ----------------
\multirow{6}{*}{\rotatebox{90}{UMLS}}
 & MRR      & 0.301 & 0.667 & 0.750 & \textbf{0.772} \\
 & HITS@1   & 29.95 & 61.27 & \textbf{71.41} & 69.28 \\
 & HITS@3   & 30.11 & 72.31 & 78.82 & \textbf{81.19} \\
 & HITS@10  & 30.11 & 72.31 & 78.97 & \textbf{83.36} \\
 & ACC@isa  & 65.96 & 63.83 & 91.48 & \textbf{94.53} \\
 & ACC@intw & 83.67 & 86.67 & \textbf{100.0} & 98.97 \\
% \midrule

% ---------------- Kinship ----------------
% \multirow{6}{*}{\rotatebox{90}{Kinship}}
%  & MRR        & 0.544 & 0.401 & \textbf{0.876} &  \\
%  & HITS@1     & 52.51 & 40.13 & \textbf{83.15} &  \\
%  & HITS@3     & 56.33 & 40.13 & \textbf{92.27} & 91.02 \\
%  & HITS@10    & 56.33 & 40.13 & 92.36 & \textbf{98.37} \\
%  & ACC@term15 & 91.21 & 83.52 & 95.86 & \textbf{97.37} \\
%  & ACC@term16 & 92.37 & 88.55 & 96.95 & \textbf{97.05} \\
\bottomrule
\end{tabular}
\vspace{-2mm}
\end{wraptable}

\subsection{Large-Scale Knowledge Bases}
In this section, we test ANDRE on three large-scale knowledge bases, including Countries dataset~\citep{bouchard2015approximate}, Unified Medical Language System (UMLS), and Nations dataset~\citep{Kok2007StatisticalPI}. Following~\citet{gao2022learning}, the Countries dataset is divided into three different subcategories ordered by learning difficulty: S1, S2, and S3 sub-datasets. Statistical details of the dataset are deferred to Appendix~\ref{app:andre-knowledge-base}. Following the propositionalization step, a trainable dataset is extracted from each knowledge base, based on the target head predicate. Then, each dataset is split into a train and a validation set. After training, three different performance metrics are computed: Accuracy, Mean Reciprocal Ranks (MRR), and HITS@K, where $\text{K} \in \{1, 3, 10\}$.

Table~\ref{tab:benchmark_results} compares the performance of ANDRE with baselines, including NTP$\lambda$~\citep{Kok2007StatisticalPI}, NeuralLP, and DFORL. Based on all metrics, ANDRE outperforms the baselines in all datasets, except Nations - \textit{blo} and UMLS - \textit{intw} datasets. In general, the performance of ANDRE is either comparable or better than DFORL as the strongest baseline, confirming the reliability and scalability of ANDRE. The extracted rules for each dataset and predicate are presented in Appendix~\ref{app:andre-knowledge-base}. Moreover, we computed and compared the training and evaluation time for ANDRE and the baselines in Table~\ref{tab:runtime_comparison} (Appendix~\ref{tab:train-time}), which shows that ANDRE is significantly faster than DFORL, though it is not the fastest method. Based on Appendix~\ref{ap:further-results}, ANDRE also outperformed the baselines on the UW-CSE~\citep{uwcse2005} dataset and has shown competitive performance on the Alzheimer-amine dataset compared to the strongest baseline.

\subsection{Synthetic Probabilistic Data}
As ANDRE is specifically designed to handle probabilistic (non-Boolean) data, classical ILP benchmarks—with their crisp symbolic representations—are not ideal for fully evaluating its capabilities. To address this, we construct synthetic datasets tailored to assess ANDRE’s ability to learn interpretable logical programs under uncertainty. Unlike traditional ILP benchmarks, these datasets allow controlled manipulation of data properties such as noise, ambiguity, and rule complexity, thereby providing a more rigorous and scalable testbed for neuro-symbolic methods like ANDRE.

We generated a diverse collection of synthetic datasets, each corresponding to a logical rule of varying complexity and number of subrules. All data points were sampled from a uniform distribution and split into training and validation subsets using the same underlying distribution to ensure consistency. A fixed set of random seeds was used throughout all experiments for reproducibility. Each dataset consists of the probabilistic valuations of $m$ candidate body predicates with the corresponding Boolean label derived from a target logical rule (see Appendix~\ref{ap:visual-example} for an example of how $h$ is determined).

We generated three different datasets, based on the following hierarchical rules with increasing complexity: $\texttt{R}_1 \leftarrow \texttt{b}_1 \wedge \neg \texttt{b}_9$, $\texttt{R}_2 \leftarrow \texttt{R}_1 \vee (\neg \texttt{b}_2 \wedge \texttt{b}_8)$, and $\texttt{R}_3 \leftarrow \texttt{R}_2 \vee (\texttt{b}_3 \wedge \neg \texttt{b}_5 \wedge \texttt{b}_7)$. In each case, the model must infer the correct rules from probabilistic input features only with semantic losses. We only benchmark ANDRE against DFORL, as the only publicly available and most competitive method on classical ILP datasets and knowledge bases.

% The corresponding results are enumerated in two tables: Table~\ref{tab:andre_vs_dforl_full}  as the extended version and the other as the compressed one.
Based on Table~\ref{tab:andre_vs_dforl_all}, ANDRE consistently outperformed DFORL by $21.63\%$ on average. According to the extended Table~\ref{tab:andre_vs_dforl_full} in Appendix~\ref{ap:results}, ANDRE successfully extracts the target logical program when provided with a sufficient number of labeled examples. %For instance, on rules $\texttt{R}_1$, $\texttt{R}_2$, and $\texttt{R}_3$, ANDRE fails to extract the correct rule when using $20$, $50$, and $200$ samples respectively, despite achieving average training and test accuracies of $0.91$ and $0.82$. In these cases, the model exhibits mild overfitting due to limited data. DFORL performs worse, with corresponding averages of $0.78$ and $0.78$, and similarly fails to extract the target rules.
% As the number of samples increases, ANDRE performs even better. 
%Across the remaining experiments with larger datasets, it achieves average training and test accuracies of $0.94$ and $0.91$, successfully extracting all target rules. DFORL, by contrast, records average accuracies of $0.74$ and $0.74$ and extracts only a fraction of the rules. 
% These results affirm that while ANDRE is data-driven, its performance scales reliably with dataset size. As a practical guideline, we recommend using at least $10$ samples per potential predicate for effective rule extraction.
%From the perspective of rule complexity, ANDRE remains robust: as complexity increases from $\texttt{R}_1$ to $\texttt{R}_3$, its average test accuracy decreases only slightly from $0.92$ to $0.90$. DFORL, in contrast, exhibits a sharper decline from $0.845$ to $0.697$, due to the reliance on the inaccurate product-based conjunctions. The drop in accuracy for DFORL is $5.92\times$ greater than that observed for ANDRE, highlighting the latter's resilience to rule complexity. Furthermore, ANDRE successfully extracts subrules in $8/11$ experiments, compared to only $2/11$ for DFORL, demonstrating superior structural extraction under uncertainty.
From the perspective of rule complexity, ANDRE remains robust: as complexity increases from $\texttt{R}_1$ to $\texttt{R}_3$, the drop in accuracy for DFORL is $5.92\times$ greater than that observed for ANDRE, highlighting the latter's resilience to rule complexity. %Furthermore, based on the extended table, ANDRE successfully extracts subrules in $8/11$ experiments, compared to only $2/11$ for DFORL, demonstrating superior rule extraction on non-Boolean data.
\begin{wraptable}{r}{0.5\textwidth}
\vspace{1mm}
\centering
\scriptsize
\caption{Validation accuracy on complex synthetic datasets with varying number of subrules (compressed version of Table~\ref{tab:andre_vs_dforl_full} and \ref{tab:andre_vs_dforl_noisy}).}
\vspace{0.1in}
\renewcommand{\arraystretch}{1.2}
% \scriptsize
\begin{tabular}{lcccc}
\toprule
\multirow{2}{*}{\centering\makecell{\textbf{Dataset}}} &
\multirow{2}{*}{\centering\makecell{\textbf{Sample}\\\textbf{Size Range}}} & \multirow{2}{*}{\centering\makecell{\textbf{Noise }\\\textbf{Range}}} &
\multicolumn{2}{c}{\textbf{Average Accuracy}} \\
\cline{4-5}
& & &
\textbf{DFORL} &
\textbf{ANDRE} \\
% \cline{4-7}
% & & & \textbf{Train} & \textbf{Test} & \textbf{Train} & \textbf{Test} & \textbf{ANDRE} & \textbf{DFORL} \\
\midrule
$\texttt{R}_1$ & $20 - 200$ & $-$  & $0.85$ & \textbf{0.92}\\
% \hline
$\texttt{R}_2$ & $50 - 1000$ & $-$  & $0.73$ & \textbf{0.94} \\
% \hline
$\texttt{R}_3$ & $200 - 2000$ & $-$  & $0.70$ & \textbf{0.90} \\
\midrule
$\texttt{R}_4$ & $200$ & $10 - 30$  & $0.61$ & \textbf{0.71} \\
% \hline
$\texttt{R}_5$ & $500$ & $5 - 45$  & $0.52$ & \textbf{0.71} \\
% \hline
$\texttt{R}_6$ & $1000$ & $5 - 25$  & $0.48$ & \textbf{0.75} \\
\toprule
\end{tabular}
\label{tab:andre_vs_dforl_all}
\vspace{-3mm}
\end{wraptable}

\textbf{Noisy Synthetic Data.}
To assess ANDRE's robustness in the presence of noisy labels, we conducted a series of experiments using probabilistic datasets with intentionally corrupted labels. In these datasets, a controlled percentage of both positive and negative labels were randomly flipped, simulating real-world scenarios with mislabeled data. While the complexity of the target rules was kept consistent across experiments, the number of training samples was large and fixed for each rule to eliminate underfitting as a confounding factor. %As in the previous section, DFORL serves as the baseline for comparison under identical noisy conditions.

The target rules used in this study are structurally similar to those in the previous experiments but drawn from a separate hierarchy: $\texttt{R}_4 \leftarrow \neg \texttt{b}_1 \wedge \texttt{b}_9$, $\texttt{R}_5 \leftarrow \texttt{R}_4 \vee (\texttt{b}_2 \wedge \neg \texttt{b}_8)$, $\texttt{R}_6 \leftarrow \texttt{R}_5 \vee (\neg \texttt{b}_3 \wedge \texttt{b}_5 \wedge \neg \texttt{b}_7)$.
Table~\ref{tab:andre_vs_dforl_noisy} (Appendix~\ref{ap:results}) reports the performance of ANDRE and DFORL across varying levels of label noise, while Table~\ref{tab:andre_vs_dforl_all} compresses the results. %As expected, the accuracy of both models declines as noise increases. However, ANDRE exhibits significantly greater resilience: it successfully extracts the correct rule structure for \( \texttt{R}_4 \) and \( \texttt{R}_5 \) under noise levels as high as $25$\%, and for \( \texttt{R}_6 \) up to $15$\% noise. 
On the synthetic probabilistic benchmarks (\( \texttt{R}_1 \)–\( \texttt{R}_6 \)), ANDRE consistently achieves higher predictive accuracy and substantially improved rule extraction quality compared to DFORL. In particular, while DFORL collapses under moderate label noise due to the use of product-based fuzzy operators, ANDRE remains stable, correctly recovering 9 out of 14 ground-truth rules across noisy settings. Although performance degrades for the deepest rule (\( \texttt{R}_6 \)) at the highest noise level, ANDRE still extracts significantly more correct symbolic structure than competing differentiable ILP methods. These results empirically confirm that attention-based operators preserve dominant predicates under uncertainty, enabling more reliable rule induction from probabilistic data.
% In contrast, DFORL fails to extract the rule in nearly all noisy settings. In total, ANDRE successfully extracted the correct subrules in $9/14$ noisy experiments, compared to only $1/14$ for DFORL. 
% These results underscore ANDRE’s exceptional robustness to moderate levels of label corruption—an essential property for real-world applications where supervision is often noisy or imperfect. This resilience stems directly from the flexibility of ANDRE’s rule space and the accuracy of its attention-based operators, which are well-suited to reasoning over probabilistic data.

\textbf{Ablation Study.}
To assess the impact of the attention-based logical network on ANDRE's performance, we conducted an ablation study by comparing the standard ANDRE model to a variant in which the attention-based conjunction and disjunction operators were replaced with product-based alternatives. The experimental setup mirrors the evaluations in Table~\ref{tab:andre_vs_dforl_full} (Appendix~\ref{ap:results}), and we report the average test accuracy for both configurations. As shown in Table~\ref{tab:ablation}, the attention-based operators improve accuracy by $21.05\%$, demonstrating their critical role in effective rule learning.
\vspace{-1mm}
\begin{wraptable}{r}{0.45\textwidth}
\centering
\scriptsize
\caption{Ablation Study Results.}
\renewcommand{\arraystretch}{1.2}
\begin{tabular}{lcc}
\toprule
\textbf{Method} & \textbf{ANDRE w/o Attention} & \textbf{ANDRE} \\
\hline
\textbf{Accuracy} & $0.76$ & $0.92$ \\
\toprule
\end{tabular}
\vspace{-4mm}
\label{tab:ablation}
\end{wraptable}
\vspace{-2mm}

Overall, the strong performance of ANDRE arises from its carefully designed architecture. The introduced rule space allows flexible inclusion or exclusion of body predicates, even in the presence of ambiguity. Unlike product-based operators, ANDRE’s attention-based conjunction and disjunction mechanisms maintain full differentiability, preserve accuracy, and alleviate the vanishing gradient problem common in fuzzy logic reasoning. Additionally, the syntactic loss components encourage syntactically valid rule structures, even when semantic signals are noisy or degraded. %Finally, ANDRE’s curriculum learning strategy enables progressive rule extraction, which mitigates error propagation from earlier subrules and enhances robustness under noisy supervision.

% Taken together, these findings highlight ANDRE’s contributions as a neuro-symbolic ILP framework that combines interpretability, robustness, and generalization. Its ability to reliably extract symbolic rules from noisy, probabilistic data positions it as a compelling tool for domains such as sensor fusion, healthcare diagnostics, and scientific discovery—where uncertainty and label imperfections are common.

% ANDRE has a bright future: (1) ANDRE has the ability to be connected to the regular NNs serially, helping to improve the reasoning and transparency in dense models by proposing a fully tractable network. (2) This model can be used as a policy network in reinforcement learning. (3) Moreover, the weights of the model can be manually set and fixed in order to impose specific background knowledge or logic-based constraints into the network.
\vspace{-3mm}

\section{Conclusion}
\label{sec:conclusion}
We introduced an \textit{Attention-based Neuro-symbolic Differentiable Rule Extractor (ANDRE)}, a novel neuro-symbolic ILP method that formulates rule learning as optimization over a continuous rule space. By leveraging soft attention mechanisms for conjunction and disjunction, ANDRE enables differentiable learning of symbolic rules without requiring predefined rule templates. The model identifies the most relevant predicates through a tailored learning strategy and enforces logical consistency using specific syntactic loss functions. Experimental results across classical ILP benchmarks, knowledge bases, and synthetic probabilistic datasets demonstrated ANDRE’s strong performance in both rule extraction and classification accuracy. Notably, ANDRE remains robust in the presence of noise, successfully extracting symbolic rules under up to $25$\% mislabeled data depending on the task complexity. Compared to state-of-the-art data-driven ILP methods, ANDRE consistently outperforms in terms of accuracy, interpretability, and stability. These findings highlight ANDRE’s potential as a reliable and interpretable component in neuro-symbolic learning pipelines. %Its ability to extract structured rules under uncertainty makes it particularly well-suited for real-world applications that demand integration of logical reasoning with gradient-based learning, including scientific discovery, robotics, and decision support systems.

\paragraph{Limitations and Future Research.}
Despite its strong performance, ANDRE has several limitations. First, while the proposed continuous rule space enables flexible predicate selection, the current architecture is primarily designed for relatively shallow rule structures and may struggle to efficiently represent more complex or deeply nested logical compositions. Second, the model relies on propositionalization of relational data, which may limit its ability to fully exploit rich relational structure in large knowledge graphs. Finally, symbolic rule extraction depends on confidence thresholds over learned distributions, which may introduce sensitivity in borderline cases. Future work will focus on extending the framework to support more expressive rule compositions and developing more principled mechanisms for direct relational reasoning without full propositionalization.

\bibliography{iclr2025_conference}
\bibliographystyle{iclr2025_conference}

\appendix
\section*{Appendix}

\section{Justification for Prob-Sum Activation Function}
\label{ap:prob-sum-justification}

To identify the most confident subpredicate among three possible candidates, we employ Eq.~\ref{prob_sum}, which serves as a smooth and differentiable approximation of logical disjunction over fuzzy truth values. This formulation is grounded in fuzzy logic principles, particularly the product $t$-norm and De Morgan’s laws.

In fuzzy logic, disjunctions are typically modeled using t-conorms. One such t-conorm is derived from the product t-norm, where conjunction is approximated as \( x \wedge y \approx xy \). Applying De Morgan’s law, the disjunction can be expressed as:
\begin{equation*}
    \mu_{x \lor y} = 1 - (1 - \mu_x)(1 - \mu_y),
\end{equation*}
where \( \mu_x \) and \( \mu_y \) denote the fuzzy membership values of inputs \( x \) and \( y \), respectively. This formulation naturally extends to three variables:
\begin{equation*}
    \mu_{x \lor y \lor z} = 1 - (1 - \mu_x)(1 - \mu_y)(1 - \mu_z).
\end{equation*}

Letting \( \mu_x = x \), \( \mu_y = y \), and \( \mu_z = z \), we expand the expression:
\begin{equation*}
\begin{aligned}
    f(x, y, z) &= \mu_{x \lor y \lor z} \\
               &= 1 - (1 - x)(1 - y)(1 - z) \\
               &= 1 - \left(1 - x - y - z + xy + xz + yz - xyz \right) \\
               &= x + y + z - xy - xz - yz + xyz.
\end{aligned}
\end{equation*}

This derivation confirms that \( f(x, y, z) \) corresponds exactly to the fuzzy disjunction defined by the product t-conorm extended to three variables. The function captures the cumulative confidence across all subpredicates while attenuating redundancy via subtraction of pairwise interactions and reinforcing agreement via the \( +xyz \) term.

By design, this function produces higher values when at least one input is close to 1.0, thereby emphasizing the most confident subpredicate. Since our objective is to identify the subpredicate with the highest contribution to the overall rule activation, this function offers a principled, differentiable, and interpretable measure rooted in fuzzy logic semantics.

\section{Attention-based vs. Product-based Operators}
\label{ap:attention-product}
\subsection{Softmin Attention vs. Product $t$-norm}
\begin{lemma}
Let $\mathbf{b} = (b_1, b_2, \dots, b_m) \in \mathbb{R}^m$ and let $b_{\min} = \min_i b_i$. Then for any $\beta > 0$, the softmin operator
\[
T_{\text{attention}}(\beta) = \sum_{i=1}^m \alpha_i(\beta) \cdot b_i, \quad \text{where } \alpha_i(\beta) = \frac{e^{-\beta b_i}}{\sum_{j=1}^m e^{-\beta b_j}},
\]
satisfies:
\[
T_{\text{attention}}(\beta) \geq b_{\min},
\]
with equality if and only if $b_i = b_{\min}$ for all $i$ or in the limit $\beta \to \infty$.
\end{lemma}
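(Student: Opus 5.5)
The inequality follows from the fact that $T_{\text{attention}}(\beta)$ is a convex combination of the $b_i$. For every $\beta>0$ the weights satisfy $\alpha_i(\beta) = e^{-\beta b_i}/\sum_j e^{-\beta b_j} > 0$, because exponentials are strictly positive. They also satisfy $\sum_i \alpha_i(\beta) = 1$. Hence
\[
T_{\text{attention}}(\beta) - b_{\min} = \sum_{i=1}^m \alpha_i(\beta)\,(b_i - b_{\min}).
\]
Each term is a product of a strictly positive weight and a nonnegative gap $b_i - b_{\min}\ge 0$. The sum is therefore nonnegative, which gives $T_{\text{attention}}(\beta)\ge b_{\min}$ for all $\beta>0$.

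For the equality characterization at finite $\beta$, I use strict positivity of the weights. The sum above vanishes if and only if every gap $b_i-b_{\min}$ is zero, that is, $b_i=b_{\min}$ for all $i$. The converse is immediate: if all entries are equal, then $T_{\text{attention}}(\beta) = b_{\min}\sum_i\alpha_i = b_{\min}$. Note that the statement's phrase ``or in the limit $\beta\to\infty$'' cannot mean equality at some finite $\beta$ for non-constant $\mathbf b$. I will interpret it as $\lim_{\beta\to\infty}T_{\text{attention}}(\beta)=b_{\min}$, with the gap strictly positive for every finite $\beta$ whenever $\mathbf b$ is not constant, and say explicitly that this is how I read the claim.

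To prove the limit, let $I=\{i: b_i=b_{\min}\}$ with $|I|=r\ge 1$. Let $\delta=\min_{i\notin I}(b_i-b_{\min})>0$; this applies when $I^c\neq\emptyset$, since otherwise we are in the constant case. Multiply numerator and denominator of $\alpha_i$ by $e^{\beta b_{\min}}$. Then
\[
\alpha_i(\beta)=\frac{e^{-\beta(b_i-b_{\min})}}{r+\sum_{j\notin I}e^{-\beta(b_j-b_{\min})}},
\]
so for $i\notin I$ we get $\alpha_i(\beta)\le e^{-\beta\delta}/r\to 0$. This yields the quantitative bound
\[
0\le T_{\text{attention}}(\beta)-b_{\min}\le \frac{e^{-\beta\delta}}{r}\sum_{i\notin I}(b_i-b_{\min})\le \frac{(m-r)\,(b_{\max}-b_{\min})}{r}\,e^{-\beta\delta},
\]
which tends to $0$. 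It shows exponential convergence and sets up the later comparison with the product $t$-norm.

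The main obstacle is not technical but interpretive: stating the equality clause precisely, as above, so that the proof is correct. Optionally, I would also note that $T_{\text{attention}}$ is nonincreasing in $\beta$. Its derivative is $-\beta^{-1}\cdot\beta\,\mathrm{Var}_{\alpha}(b) = -\mathrm{Var}_{\alpha(\beta)}(b)\le 0$, so the approach to $b_{\min}$ is monotone. This is not needed for the lemma.
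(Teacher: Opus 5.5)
Your proof is correct and follows the same route as the paper: you write $T_{\text{attention}}(\beta)-b_{\min}=\sum_i\alpha_i(\beta)(b_i-b_{\min})$ and use the strict positivity of the softmin weights to get both the inequality and the finite-$\beta$ equality case. You also go beyond the paper in two ways. First, you prove $\lim_{\beta\to\infty}T_{\text{attention}}(\beta)=b_{\min}$ with an explicit exponential bound, whereas the paper only asserts it. Second, you read the ``equality in the limit'' clause as convergence, with a strictly positive gap at every finite $\beta$ for non-constant $\mathbf{b}$, which is the right interpretation.
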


\begin{proof}
Each $\alpha_i(\beta)$ is strictly positive and the weights form a convex combination: $\sum_i \alpha_i(\beta) = 1$ and $\alpha_i(\beta) > 0$. Let $b_{\min} = b_{k}$ for some index $k$. Then:
\[
T_{\text{attention}}(\beta) = \sum_{i=1}^m \alpha_i(\beta) \cdot b_i = b_{\min} \sum_{i=1}^m \alpha_i(\beta) + \sum_{i=1}^m \alpha_i(\beta) \cdot (b_i - b_{\min}).
\]
Since each $b_i - b_{\min} \geq 0$ and at least one $b_i > b_{\min}$ (unless all values are equal), the second term is strictly positive. Therefore:
\[
T_{\text{attention}}(\beta) > b_{\min} \quad \text{unless } b_i = b_{\min} \text{ for all } i.
\]
In the limit $\beta \to \infty$, the softmin attention converges to the minimum:
\[
\lim_{\beta \to \infty} T_{\text{attention}}(\beta) = b_{\min}.
\]
\end{proof}

\begin{theorem}
Let $\mathbf{b} = (b_1, b_2, \dots, b_m) \in (0, 1]^m$ be a vector of probabilistic values, and let $b_{\min} = \min_i b_i$. Define the following:
\begin{itemize}
    \item Softmin attention: \\ \[ T_{\text{attention}}(\beta) = \sum_{i=1}^m \alpha_i(\beta) \cdot b_i \quad \text{where} \,\,\alpha_i(\beta) = \dfrac{e^{-\beta b_i}}{\sum_{j=1}^m e^{-\beta b_j}} \]
    \item Product t-norm: \\ \[T_{\text{prod}} = \prod_{i=1}^m b_i\]
\end{itemize}
Then, there exists a finite $\beta^* > 0$ such that the approximation error of the softmin attention operator is strictly less than that of the product t-norm:
\[
\epsilon_{\text{attention}}(\beta^*) = T_{\text{attention}}(\beta^*) - b_{\min} < b_{\min} - T_{\text{prod}} = \epsilon_{\text{prod}}.
\]
\end{theorem}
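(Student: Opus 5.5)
The plan is to compare a quantitative upper bound on $\epsilon_{\text{attention}}(\beta)$, which decays like $1/\beta$, with the fixed quantity $\epsilon_{\text{prod}}$. First, though, the statement needs a nondegeneracy assumption, because as written it fails in edge cases. Since $b_i\in(0,1]$, we have $T_{\text{prod}} = b_{\min}\prod_{i\neq k} b_i \le b_{\min}$, where $k$ is an index with $b_k=b_{\min}$. Equality $\epsilon_{\text{prod}}=0$ holds exactly when every $b_i$ with $i\neq k$ equals $1$; this includes the cases $m=1$ and $\mathbf{b}=(1,\dots,1)$. In that situation $\epsilon_{\text{attention}}(\beta)\ge 0$ by Lemma 1, so the strict inequality cannot hold. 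The cleanest example is $\mathbf{b}=(b_{\min},1,\dots,1)$ with $b_{\min}<1$: then Lemma 1 gives $\epsilon_{\text{attention}}(\beta)>0=\epsilon_{\text{prod}}$ for every $\beta$. I would therefore prove the theorem under the hypothesis that some $b_i<1$ with $i\neq k$, which is equivalent to $\epsilon_{\text{prod}}>0$. I would also remark that in the excluded case the weaker conclusion $\epsilon_{\text{attention}}(\beta)\to 0=\epsilon_{\text{prod}}$ still holds.

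Under this hypothesis, $\epsilon_{\text{prod}} = b_{\min}\bigl(1-\prod_{i\neq k} b_i\bigr) > 0$, because $b_{\min}>0$ and the product is $<1$. Next, I would use the decomposition from the proof of Lemma 1:
\[
\epsilon_{\text{attention}}(\beta)=\sum_{i=1}^m \alpha_i(\beta)\,(b_i-b_{\min}).
\]
Write $\delta_i=b_i-b_{\min}\ge 0$. The denominator of $\alpha_i(\beta)$ contains the term $e^{-\beta b_{\min}}$, so $\alpha_i(\beta)\le e^{-\beta\delta_i}$. Hence
\[
\epsilon_{\text{attention}}(\beta)\le \sum_{i:\,\delta_i>0}\delta_i e^{-\beta\delta_i}\le (m-1)\sup_{t\ge 0} t e^{-\beta t} = \frac{m-1}{e\beta}.
\]
The supremum is attained at $t=1/\beta$. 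The count $m-1$ is valid because the index $k$ contributes nothing. This bound is uniform in $\mathbf{b}$, which avoids any dependence on the gap between the smallest and second-smallest entries.

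Finally, I would choose any finite $\beta^* > \frac{m-1}{e\,\epsilon_{\text{prod}}}$, for example $\beta^*=\frac{m}{e\,\epsilon_{\text{prod}}}$. Then $\epsilon_{\text{attention}}(\beta^*)\le \frac{m-1}{e\beta^*}<\epsilon_{\text{prod}}$, and monotonicity of the bound shows that the inequality in fact holds for every $\beta\ge\beta^*$. The only delicate step is recognising and stating the nondegeneracy condition. The analytic part is routine once the bound $\alpha_i\le e^{-\beta\delta_i}$ is in hand. Simply citing the limit in Lemma 1 would also suffice for existence, but it would give no explicit $\beta^*$.
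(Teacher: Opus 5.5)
Your proof is correct, and it takes a different, quantitative route from the paper's. The paper argues only qualitatively. It asserts $\epsilon_{\text{prod}}>0$ ``unless all $b_i=1$'' and takes $\epsilon_{\text{attention}}(\beta)\to 0$ from Lemma~1. It then appeals to continuity, to strict monotonicity of $T_{\text{attention}}$ in $\beta$ (stated without proof), and to the intermediate value theorem to obtain some finite $\beta^*$; the limit alone would already have sufficed for existence. Your bound $\alpha_i(\beta)\le e^{-\beta\delta_i}$, hence $\epsilon_{\text{attention}}(\beta)\le (m-1)/(e\beta)$, replaces all of that. It needs no monotonicity. It gives the explicit choice $\beta^*=m/(e\,\epsilon_{\text{prod}})$, which depends only on $m$ and $\epsilon_{\text{prod}}$ and not on the gap between the smallest entries. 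It also shows the inequality for every $\beta\ge\beta^*$, whereas the paper's argument yields nothing beyond existence. Your nondegeneracy analysis is a genuine correction, not pedantry. The paper's exception ``unless all $b_i=1$'' is incomplete: $T_{\text{prod}}=b_{\min}$ whenever all entries other than a single minimal one equal $1$, for example $\mathbf{b}=(1/2,1)$ or any $m=1$. In that case $\epsilon_{\text{prod}}=0\le\epsilon_{\text{attention}}(\beta)$ for every $\beta$, so the theorem as stated fails there. Your added hypothesis $\epsilon_{\text{prod}}>0$ is exactly the condition needed.
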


\begin{proof}
Since all $b_i \in (0, 1]$, the product t-norm satisfies $T_{\text{prod}} < b_{\min}$ (unless all $b_i = 1$), so:
\[
\epsilon_{\text{prod}} = b_{\min} - T_{\text{prod}} > 0.
\]

By the Lemma, $T_{\text{attention}}(\beta) > b_{\min}$ for all finite $\beta$, and:
\[
\lim_{\beta \to \infty} T_{\text{attention}}(\beta) = b_{\min} \quad \Rightarrow \quad \lim_{\beta \to \infty} \epsilon_{\text{attention}}(\beta) = 0.
\]

Because $T_{\text{attention}}(\beta)$ is a continuous and strictly decreasing function of $\beta$, and since:
\[
\epsilon_{\text{prod}} > 0 = \lim_{\beta \to \infty} \epsilon_{\text{attention}}(\beta),
\]
then by the intermediate value theorem, there exists a finite $\beta^* > 0$ such that:
\[
\epsilon_{\text{attention}}(\beta^*) < \epsilon_{\text{prod}} \quad \Leftrightarrow \quad T_{\text{attention}}(\beta^*) < 2b_{\min} - T_{\text{prod}}.
\]
Hence, softmin attention provides a strictly better approximation to $\min$ than the product t-norm for some finite value of $\beta$.
\end{proof}

\subsection{Softmax Attention vs. Product-based $s$-norm}
We now compare two fuzzy disjunction operators used to approximate the logical \textsc{or} (\(\max\)) over fuzzy values $\mathbf{b} = (b_1, b_2, \dots, b_m) \in (0,1]^m$:

\begin{itemize}
    \item Softmax Attention:
    \[
    S_{\text{attention}}(\beta) = \sum_{i=1}^m \alpha_i(\beta) \cdot b_i, \quad \text{where } \alpha_i(\beta) = \frac{e^{\beta b_i}}{\sum_{j=1}^m e^{\beta b_j}}
    \]
    \item Product-based $s$-norm (disjunction):
    \[
    S_{\text{prod}} = 1 - \prod_{i=1}^m (1 - b_i)
    \]
\end{itemize}

Both operators approximate \(\max(b_1, \dots, b_m)\), but through different mechanisms. The first is a parametric softmax that converges to the maximum as $\beta \to \infty$, while the second uses De Morgan’s law and the product $t$-norm to approximate the complement of a conjunction.

\begin{theorem}
For any fuzzy values $\mathbf{b} \in (0, 1]^m$, there exists a finite $\beta^* > 0$ such that the softmax attention approximation is strictly more accurate than the product-based $s$-norm:
\[
\epsilon_{\text{attention}}(\beta^*) = \max_i b_i - S_{\text{attention}}(\beta^*) < \max_i b_i - S_{\text{prod}} = \epsilon_{\text{prod}}.
\]
\end{theorem}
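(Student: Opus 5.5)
The plan is to mirror the argument given for the softmin case, first correcting the sign convention. Write $b_{\max}=\max_i b_i$ and let $k$ be an index attaining it. Two observations fix the two sides of the comparison.

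\textbf{The attention side.} Applying the Lemma to the vector $-\mathbf{b}$ (equivalently, repeating its convex-combination argument with $e^{\beta b_i}$ in place of $e^{-\beta b_i}$) gives $S_{\text{attention}}(\beta)\le b_{\max}$. Equality holds only if all $b_i$ are equal, and in addition
\[
\lim_{\beta\to\infty}S_{\text{attention}}(\beta)=b_{\max}.
\]

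\textbf{The product side.} The factorisation
\[
S_{\text{prod}}-b_{\max}=(1-b_{\max})\Bigl(1-\prod_{i\neq k}(1-b_i)\Bigr)\ge 0
\]
shows that the product $s$-norm \emph{over}-estimates the maximum. Hence the signed quantity $\max_i b_i-S_{\text{prod}}$ in the statement is $\le 0$, while $\max_i b_i - S_{\text{attention}}(\beta)\ge 0$. So the displayed strict inequality cannot hold literally. I would therefore prove it for the absolute errors
\[
\epsilon_{\text{attention}}(\beta)=b_{\max}-S_{\text{attention}}(\beta),\qquad \epsilon_{\text{prod}}=S_{\text{prod}}-b_{\max},
\]
which is clearly the intended meaning.

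\textbf{Step 1: when $\epsilon_{\text{prod}}>0$.} From the factorisation, $\epsilon_{\text{prod}}>0$ exactly when $m\ge 2$ and $b_{\max}<1$, since all $b_i>0$ forces $\prod_{i\ne k}(1-b_i)<1$. The statement must exclude the degenerate cases. If $b_{\max}=1$, then $\epsilon_{\text{prod}}=0$ and strictness fails unless all $b_i=1$. If $m=1$, both errors vanish. I would add the hypotheses $m\ge2$ and $b_{\max}<1$.

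\textbf{Step 2: an explicit $\beta^*$.} Rather than invoking the intermediate value theorem and monotonicity, which requires a separate derivative argument, I would choose $\beta^*$ directly. Let $I=\{i: b_i=b_{\max}\}$; if $I$ is everything, the attention error is $0$ and we are done. Otherwise let $\delta=b_{\max}-\max_{i\notin I}b_i>0$. Then
\[
\epsilon_{\text{attention}}(\beta)=\sum_{i\notin I}\alpha_i(\beta)(b_{\max}-b_i)\le \sum_{i\notin I}\frac{e^{\beta b_i}}{|I|e^{\beta b_{\max}}}\le (m-1)\,e^{-\beta\delta}.
\]
Any $\beta^*>\delta^{-1}\log\bigl((m-1)/\epsilon_{\text{prod}}\bigr)$ then gives $\epsilon_{\text{attention}}(\beta^*)<\epsilon_{\text{prod}}$, and by the same bound so does every larger $\beta$.

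\textbf{Main obstacle.} The calculus is straightforward once the errors are set up; the real difficulty is the statement itself. With signed errors it is false, so the proof must be reframed around absolute error. The boundary cases $b_{\max}=1$ and $m=1$, where the product $s$-norm is exact and no strict improvement is possible, must be excluded explicitly. The same boundary issue affects the preceding theorem when all $b_i=1$.
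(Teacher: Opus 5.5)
Your diagnosis is correct, and it is where you part ways with the paper. The paper's sketch asserts the reverse of your factorisation, namely $S_{\text{prod}}=1-\prod_i(1-b_i)<\max_i b_i$. From this, together with continuity, monotonicity and the limit $S_{\text{attention}}(\beta)\to\max_i b_i$, it concludes that some finite $\beta^*$ gives $S_{\text{attention}}(\beta^*)>S_{\text{prod}}$. In fact $S_{\text{attention}}(\beta)\le\max_i b_i\le S_{\text{prod}}$ for every $\beta$, so that target inequality is never attainable, and the sketch's key step fails for every $\mathbf b$. The paper's own illustrative subsection concedes that probabilistic OR overestimates the maximum.

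Your reformulation with absolute errors, under the hypotheses $m\ge 2$ and $b_{\max}<1$, is the right repair. These hypotheses are also necessary: when $\epsilon_{\text{prod}}=0$, no nonnegative attention error can be strictly smaller. Your Step~2 also gives more than the paper's route would, even with the sign corrected:
\begin{itemize}
\item It uses only convergence. It does not need the monotonicity of $S_{\text{attention}}$ in $\beta$, which the sketch asserts without proof. (That monotonicity is true, since $\tfrac{d}{d\beta}S_{\text{attention}}$ is the variance of $\mathbf b$ under the weights $\alpha(\beta)$, but it is unnecessary.)
\item It gives an explicit rate $(m-1)e^{-\beta\delta}$ and an explicit threshold.
\item It shows the inequality holds for every larger $\beta$ as well.
\end{itemize}

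Two small corrections to your side remarks. First, when $b_{\max}=1$ strictness fails in every case, including all $b_i=1$, where both errors are $0$. So the ``unless'' should go; your hypothesis $b_{\max}<1$ already covers this. Second, the boundary failure of the preceding theorem is wider than the case all $b_i=1$. For a minimiser $k$, $b_{\min}-\prod_i b_i=b_{\min}\bigl(1-\prod_{i\ne k}b_i\bigr)$, so the product error vanishes whenever every other coordinate equals $1$. For example, with $\mathbf b=(0.5,1)$ one has $\epsilon_{\text{prod}}=0<\epsilon_{\text{attention}}(\beta)$ for all finite $\beta$.
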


\begin{proof}[Sketch]
The product-based $s$-norm under-approximates the disjunction, since:
\[
S_{\text{prod}} = 1 - \prod_{i=1}^m (1 - b_i) < \max_i b_i,
\]
with equality only when all but one $b_i$ are zero. The softmax operator is a smooth approximation to $\max(b_i)$ from below and satisfies:
\[
\lim_{\beta \to \infty} S_{\text{attention}}(\beta) = \max_i b_i.
\]
Because $S_{\text{attention}}(\beta)$ is continuous and increasing in $\beta$ and converges to the exact max, there always exists a finite $\beta^* > 0$ such that:
\[
S_{\text{attention}}(\beta^*) > S_{\text{prod}} \quad \Rightarrow \quad \epsilon_{\text{attention}}(\beta^*) < \epsilon_{\text{prod}}.
\]
Hence, softmax attention is strictly more accurate than the product-based disjunction for sufficiently large (but finite) $\beta$.
\end{proof}

\subsection{Illustrative Comparison of Conjunction and Disjunction Operators}
To further illustrate the advantages of the proposed attention-based operators, we present a simple example comparing them with commonly used alternatives for approximating logical conjunction and disjunction.

We consider four fuzzy membership functions defined over the domain $x \in [0,1]$, denoted by $\{\mu_1(x), \mu_2(x), \mu_3(x), \mu_4(x)\}$. These functions include both linear and nonlinear (Gaussian-shaped) patterns, intentionally chosen to produce diverse and challenging aggregation scenarios.

For conjunction, we compare the true minimum $\min_i \mu_i(x)$ with three differentiable approximations: the proposed attention-based softmin operator, the product t-norm $\prod_i \mu_i(x)$, and a regularized geometric mean (RGM). For disjunction, we compare the true maximum $\max_i \mu_i(x)$ with the attention-based softmax operator, the probabilistic OR, $1 - \prod_i (1 - \mu_i(x))$, and an RGM-based variant.
\begin{figure}[h]
    \centering
    \subfloat[Max Operator]{\includegraphics[width=0.48\textwidth]{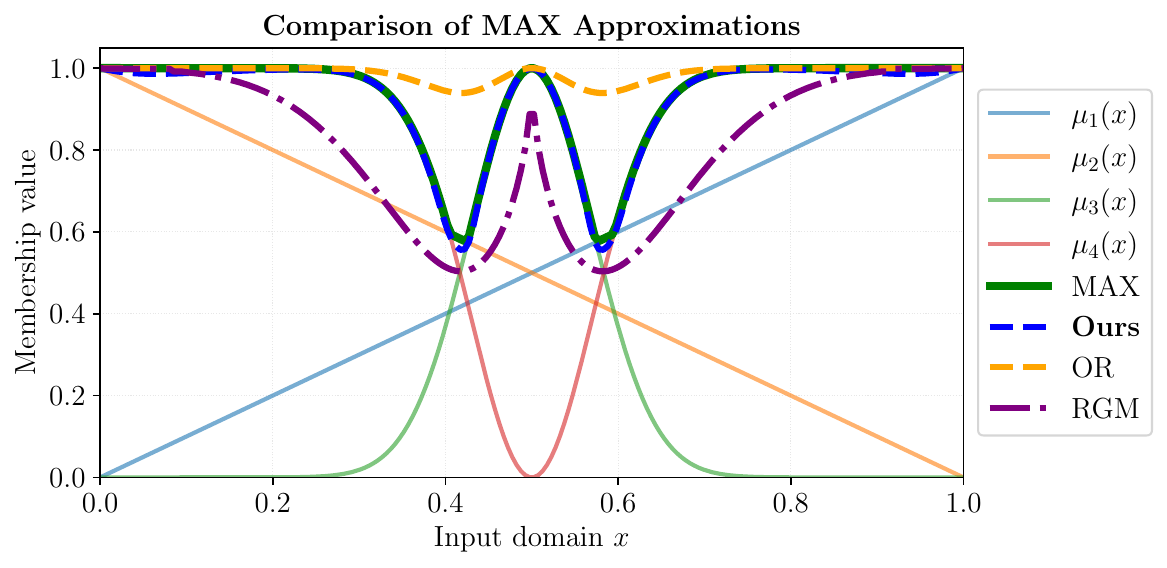}}
    \hfill
    \subfloat[Min Operator]{\includegraphics[width=0.48\textwidth]{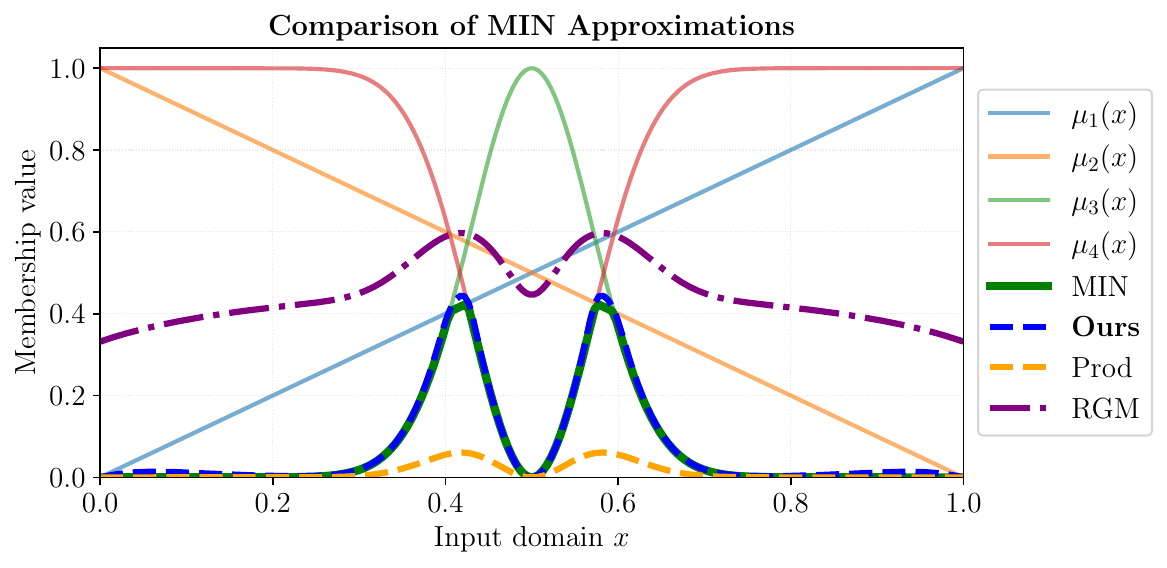}}
    \caption{Comparison of logical operators. The proposed attention-based operators closely approximate the true MIN/MAX while remaining smooth and stable.}
    \label{fig:fuzzy_operators}
\end{figure}

Figure~\ref{fig:fuzzy_operators} shows that the proposed attention-based operators closely track the true minimum and maximum across the entire domain. In the conjunction case, the product t-norm exhibits a strong bias toward zero whenever any input is small, resulting in overly conservative outputs and poor gradient behavior. The regularized geometric mean partially mitigates this effect but still deviates from the true minimum, especially in regions where inputs differ significantly. In contrast, the softmin operator adaptively concentrates weight on the smallest input, yielding a much tighter and smoother approximation.

A similar trend is observed for disjunction. The probabilistic OR operator tends to overestimate the maximum when multiple inputs take moderately large values, while the RGM-based operator introduces systematic bias due to its multiplicative structure. The proposed softmax operator, however, selectively emphasizes the largest input and provides a close and stable approximation to the true maximum.

This example highlights that the attention-based formulation achieves both higher fidelity to logical min/max behavior and improved numerical properties. In particular, it avoids the degeneracy and vanishing-gradient issues of product-based operators while preserving smoothness, making it well-suited for end-to-end differentiable rule learning.

\section{Syntactic Formats for Structured Rules}
\label{app:syntactic-formats}
In ILP, structured rules must adhere to certain syntactic constraints to ensure logical validity and semantic interpretability. ANDRE incorporates these constraints during training through syntactic loss functions (see Section~\ref{sec:methodology}). Below, we detail the two fundamental syntactic formats enforced in this framework.

\begin{itemize}[leftmargin=1.5em, itemsep=0.1em]
    \item \textbf{Head Variable Inclusion:}  
    All variables that appear in the head of the rule must also occur at least once in the body predicates. This requirement ensures that the output of the rule is logically grounded in its input facts and that no unbound variables are introduced during inference.
    
    \textit{Valid example:}
    \[
    \texttt{grandparent}(X_1, X_3)\, \texttt{:- parent}(X_1, X_2),\, \texttt{parent}(X_2, X_3).
    \]
    In this rule, the head predicate \texttt{grandparent} involves variables \( X_1 \) and \( X_3 \), both of which appear in the body predicates, satisfying the head variable inclusion constraint.

    \textit{Invalid example:}
    \[
    \texttt{grandparent}(X_1, X_3)\, \texttt{:- parent}(X_3, X_2),\, \texttt{parent}(X_2, X_3).
    \]
    Here, variable \( X_1 \) appears in the head but is missing from the body, violating the condition and making the rule syntactically invalid.

    \item \textbf{Auxiliary Variable Connectivity:}  
    Auxiliary variables—those that do not appear in the head—must either:
    \begin{enumerate}
        \item Appear in at least two distinct body predicates, forming a connection between them, or
        \item Be completely absent from the rule (i.e., not appear in any body predicates).
    \end{enumerate}
    This condition ensures that auxiliary variables are semantically meaningful and contribute to the logical relationship expressed in the rule.

    \textit{Valid example (connected auxiliary):}
    \[
    \texttt{grandparent}(X_1, X_3)\, \texttt{:- parent}(X_1, X_2),\, \texttt{parent}(X_2, X_3).
    \]
    Variable \( X_2 \) does not appear in the head and thus is auxiliary, but it occurs in both body predicates, satisfying the connectivity constraint.

    \textit{Invalid example (disconnected auxiliary):}
    \[
    \texttt{grandparent}(X_1, X_3)\, \texttt{:- parent}(X_1, X_2),\, \texttt{lives\_in}(X_4, \texttt{City}).
    \]
    Variable \( X_4 \) appears only once and does not participate in connecting any logical entities relevant to the head predicate. Such disconnected auxiliary variables weaken interpretability and are discouraged.

\end{itemize}

These two syntactic formats are not only essential for maintaining logical consistency, but they also promote generalizability and ease of interpretation—both of which are central goals of ANDRE's framework.

\section{Loss Scheduling Strategy}
\label{app:loss-scheduling}
% \addcontentsline{toc}{section}{Appendix X: Loss Scheduling Strategy}

The total training objective of ANDRE (Eq.~(\ref{eq:total-loss})) combines semantic and syntactic losses through a set of scalar coefficients
$\{\lambda_E, \lambda_S, \lambda_R, \lambda_C, \lambda_D\}$.
Rather than keeping these coefficients fixed throughout training, we adopt a \emph{progressive loss scheduling strategy}
to stabilize optimization and improve rule extraction under noisy and probabilistic supervision.

Different loss components serve distinct purposes at different stages of learning.
Early in training, ANDRE must prioritize semantic alignment with the data in order to identify promising predicate structures.
Overly strong syntactic or discretization constraints at this stage can prematurely restrict exploration and lead to poor local optima.
Conversely, in later training stages, enforcing syntactic validity and rule sparsity becomes essential for interpretability and symbolic extraction.

To address this trade-off, we gradually increase the influence of syntactic regularizers while simultaneously relaxing diversity constraints,
allowing ANDRE to transition smoothly from \emph{exploration} to \emph{structural consolidation}.

\textbf{Scheduled Loss Coefficients:} Let $t \in \{0, \dots, T\}$ denote the current training epoch, and define the normalized training progress
$\rho = t / T \in [0,1]$.
The loss coefficients are scheduled as follows.

\textit{Entropy Regularization ($\lambda_E$).}
The entropy loss (Eq.~(\ref{eq:entropy-loss})) is introduced linearly to encourage early exploration over the symbolic subpredicate space:
\begin{equation*}
\lambda_E(t) = \rho \, \lambda_E^{\max}.
\end{equation*}
This prevents premature convergence to arbitrary predicate selections and improves robustness in noisy settings.

\textit{Similarity Loss ($\lambda_S$).}
The similarity loss promotes diversity across subrules by penalizing cosine similarity between their weight vectors.
Unlike the other terms, its coefficient is \emph{decreased} during training:
\begin{equation*}
\lambda_S(t) = \lambda_S^{\max} - \rho^2 \left(\lambda_S^{\max} - \lambda_S^{\min}\right).
\end{equation*}
This allows strong diversity pressure early on, while permitting convergence toward structurally similar subrules if supported by the data.

\textit{Range-Restricted Loss ($\lambda_R$).}
The range-restricted loss (Eq.~(14)) enforces head variable inclusion.
Its coefficient follows a quadratic schedule:
\begin{equation*}
\lambda_R(t) = \frac{\rho^2}{|K_h| \cdot n} \, \lambda_R^{\max}.
\end{equation*}
This delayed activation allows the model to first discover semantically relevant predicates before enforcing strict variable coverage.

\textit{Connectedness Loss ($\lambda_C$).}
The connectedness loss (Eq.~(\ref{eq:conn-loss})) penalizes singleton auxiliary variables.
To avoid early over-penalization, its weight is also increased quadratically:
\begin{equation*}
\lambda_C(t) = \frac{\rho^2}{|K_a| \cdot n} \, \lambda_C^{\max}.
\end{equation*}
This ensures that auxiliary-variable connectivity emerges naturally from the learned rule structure.

\textit{Digitization Loss ($\lambda_D$).}
The digitization loss encourages integer-valued variable usage counts.
Because this loss is highly non-convex, it is introduced cautiously using:
\begin{equation*}
\lambda_D(t) = \rho^2 \, \lambda_D^{\max}.
\end{equation*}
In practice, $\lambda_D^{\max}$ is kept small to avoid destabilizing training.

\textit{Final Objective:}

With scheduling applied, the total loss at epoch $t$ becomes:
\begin{equation*}
\mathcal{L}(t) = \mathcal{L}_{\mathrm{BCE}}
+ \lambda_E(t)\mathcal{L}_E
+ \lambda_S(t)\mathcal{L}_S
+ \lambda_R(t)\mathcal{L}_R
+ \lambda_C(t)\mathcal{L}_C
+ \lambda_D(t)\mathcal{L}_D.
\end{equation*}

This scheduling strategy was used consistently across all experiments unless otherwise stated.
Empirically, it significantly improved optimization stability, reduced premature rule collapse,
and enabled reliable symbolic rule extraction—particularly under probabilistic inputs and label noise.

\section{Predicate Identification}
\label{app:pred-identification}
To ensure interpretability, it is essential to explicitly extract symbolic subrules after training. Although the model learns each probability distribution \( P_{ij} \) for \( \texttt{B}_{ij} \), which reflects the importance of candidate subpredicates, a post-processing step is required to derive the final symbolic rules. 

During training, ANDRE encourages one of the probabilities in \( P_{ij} \) to converge to \( 1 \), with the remaining values approaching \( 0 \). In this case, the corresponding symbolic subpredicate is considered the most confident selection: if \( p_{ij}^{(1)} \rightarrow 1 \), \( p_{ij}^{(2)} \rightarrow 1 \), or \( p_{ij}^{(3)} \rightarrow 1 \), then \( \texttt{b}_j \), \( \neg \texttt{b}_j \), or \( \texttt{1} \) is selected, respectively.

In cases where none of the probabilities fully converges to \( 1 \), but one clearly dominates, identifying the most confident subpredicate requires evaluating the certainty of the distribution. To do so, the entropy of the distribution is computed as: $\mathcal{H} = \sum_{k=1}^{3} p_{ij}^{(k)} \log \left( p_{ij}^{(k)} + \epsilon \right)$,  where \( \epsilon \) is a small constant added for numerical stability. Lower entropy values indicate greater certainty. Based on this entropy, the most confident subpredicate \( \texttt{B}_{ij}^{\text{andre}} \) is extracted using the following criterion:
\begin{equation}
    \texttt{B}_{ij}^{\text{andre}} = \begin{cases}
        \texttt{S}_j \left[ \argmaxA_{k} \, \mathcal{S}_j [k] \right], & \text{if} \;\; \mathcal{H} \leq \eta' \\
        \texttt{1}, & \text{otherwise,}
    \end{cases}
\end{equation}
where \( \texttt{B}_{ij}^{\text{andre}} \in \texttt{B}^{n \times m} \) denotes the extracted symbolic subpredicate, and \( n \) and \( m \) are the number of subrules and body predicates, respectively. The constant \( \eta' \in [0, \ln(3)] \) is selected based on the model's final accuracy and the noise level in the data. A lower value of $\eta'$ imposes a stricter certainty requirement for subpredicate selection.

After this selection process, ANDRE constructs the matrix \( \texttt{B}_{ij}^{\text{andre}} \), where each entry corresponds to one of the symbolic subpredicates: \( \texttt{b}_j \), \( \neg \texttt{b}_j \), or \( \texttt{1} \). These entries then replace \( \texttt{B}_{ij} \) in Eq.~(\ref{subpredicates}) to form the subrules, thereby producing the final rule as specified in Eq.~(\ref{rule_form}).

\section{ANDRE's Parameters}
\label{app:andre-params}
Table~\ref{tab:andre_params} summarizes the hyperparameters used in all experiments with ANDRE's framework. These parameters were selected based on empirical tuning and prior best practices in differentiable rule learning. They govern various aspects of optimization, loss weighting, attention sharpness, and symbolic rule extraction behavior. Where applicable, different values are recommended under noisy conditions to improve robustness and generalization.
\begin{table}[H]
    \centering
    \scriptsize
    \caption{Hyperparameter Settings Used in ANDRE}
    \vspace{0.1in}
    \renewcommand{\arraystretch}{1.2}
    \begin{tabular}{lll}
        \hline
        \textbf{Parameter} & \textbf{Symbol} & \textbf{Value} \\
        \hline
        Learning rate & $\alpha$ & $0.01$ \\
        Learning rate decay & $\alpha'$ & $0.0001$ \\
        Sigmoid center & $\gamma$ & $0.5$ \\
        % Range-restricted loss center & $\gamma_r$ & $0.5$ \\
        % Connected loss center & $\gamma_c$ & $1.0$ \\
        Range-Restricted Loss Coefficient & $\eta$ & $0.1$ \\
        Connected loss coefficient & $c_1$ & $1.0$ \\
        Connected loss coefficient & $c_2$ & $12.5$ \\
        Attention sharpness coefficient & $\beta$ & $20$ \\
        Sigmoid sharpening factor & $\lambda$ & $10$ \\
        % Maximum number of subrules & $R_{\max}$ & $5$ \\
        Number of random restarts per subrule & $N_r$ & $3$ \\
        Training epochs per restart & $T$ & $500$ \\
        Batch size & $b$ & $[128, 512, 4096]$ \\
        Similarity loss coefficient & $\lambda_{\text{S}}$ & $0.2$ \\
        Entropy loss coefficient & $\lambda_{\text{E}}$ & $0.1$ \\
        Range-restricted loss coefficient & $\lambda_{\text{R}}$ & $1.00$ \\
        Connected loss coefficient & $\lambda_{\text{C}}$ & $5.00$ \\
        Digitization loss coefficient & $\lambda_{\text{D}}$ & $0.001$ \\
        Accuracy threshold for early stopping & $\tau$ & $0.95$ (reduced in noisy settings) \\
        Entropy threshold for predicate certainty & $\eta'$ & $0.4$ (standard), $0.8$ (noisy data) \\
        
        \hline
    \end{tabular}
    \label{tab:andre_params}
\end{table}

\section{Visual Examples of ANDRE}
\label{ap:visual-example}

\subsection{Structured Rule: Grandparent Task}
\addcontentsline{toc}{subsection}{Visual Example — Grandparent Task}

To further illustrate the internal learning dynamics and interpretability of ANDRE on structured relational data,
we present a visual case study on the \emph{Grandparent} task.
This task involves learning a first-order rule with auxiliary variables and multiple valid subrule instantiations,
making it a representative benchmark for evaluating both semantic and syntactic learning behavior.

\paragraph{Task Description.}
The objective is to learn the head predicate
\(\texttt{grandparent}(X_1, X_3),\)
given background knowledge consisting of \texttt{mother} and \texttt{father} relations.
The correct logical definition requires identifying transitive parent relationships through an auxiliary variable $X_2$:
\[
\texttt{grandparent}(X_1, X_3) \; \leftarrow \; \texttt{parent}(X_1, X_2) \wedge \texttt{parent}(X_2, X_3),
\]
where \texttt{parent} may be instantiated as either \texttt{mother} or \texttt{father}.

Instead of the propositionalization strategy, the trainable samples are generated using random probabilistic predicate valuations and the known target logical rules, resulting in a supervised dataset $\mathcal{E} \in \{0,1\}^{N \times (m+1)},$
where each input example consists of valuations of candidate body predicates
\[
\mathcal{B} = \{\texttt{mother}(X_1,X_2), \texttt{mother}(X_2,X_3), \texttt{father}(X_1,X_2), \texttt{father}(X_2,X_3), \dots \},
\]
and a Boolean label indicating whether the corresponding \texttt{grandparent} fact is true. In this example, we generate $1000$ samples, and $m=6$ and $N=4$ to better visualize the results.
ANDRE is trained end-to-end without prior knowledge of rule templates,
explicit clause enumeration, or predicate relevance.

\paragraph{Optimization Dynamics.}
During training, ANDRE jointly optimizes semantic and syntactic objectives using the loss formulation in Eq.~(\ref{eq:total-loss}).
The binary cross-entropy loss guides semantic correctness,
while entropy, range-restricted, connectedness, digitization, and similarity losses
gradually enforce rule sparsity, variable grounding, and structural validity.
Loss coefficients are scheduled over training as described in Appendix~\ref{app:loss-scheduling}
to stabilize optimization and avoid premature structural constraints.

Figure~\ref{fig:grandparent-training} shows the evolution of the total loss during training.
After an initial phase dominated by semantic alignment,
the loss decreases steadily as the model converges to a stable and interpretable rule structure.
% \begin{figure}[t]
%     \centering
%     \begin{subfigure}{\linewidth}
%         \centering
%         \includegraphics[width=0.48\linewidth]{training_validation_loss.pdf}
%         % \fbox{\parbox[c][4cm][c]{\linewidth}{\centering
%         % Training \& Validation Loss}}
%         % \caption{Training and validation loss over epochs.}
%         \label{fig:grandparent-loss}
%     \end{subfigure}
%     \hfill
%     \begin{subfigure}{\linewidth}
%         \centering
%         \includegraphics[width=0.48\linewidth]{training_validation_accuracy.pdf}
%         % \fbox{\parbox[c][4cm][c]{\linewidth}{\centering
%         % Training \& Validation Accuracy}}
%         % \caption{Training and validation accuracy over epochs.}
%         \label{fig:grandparent-accuracy}
%     \end{subfigure}
%     \caption{Optimization behavior of ANDRE on the Grandparent task.
%     Left: evolution of training and validation loss.
%     Right: corresponding training and validation accuracy.
%     The curves indicate stable convergence and strong generalization.}
%     \label{fig:grandparent-training}
% \end{figure}

\begin{figure}[h]
    \centering
    \subfloat[Loss]{\includegraphics[width=0.48\textwidth]{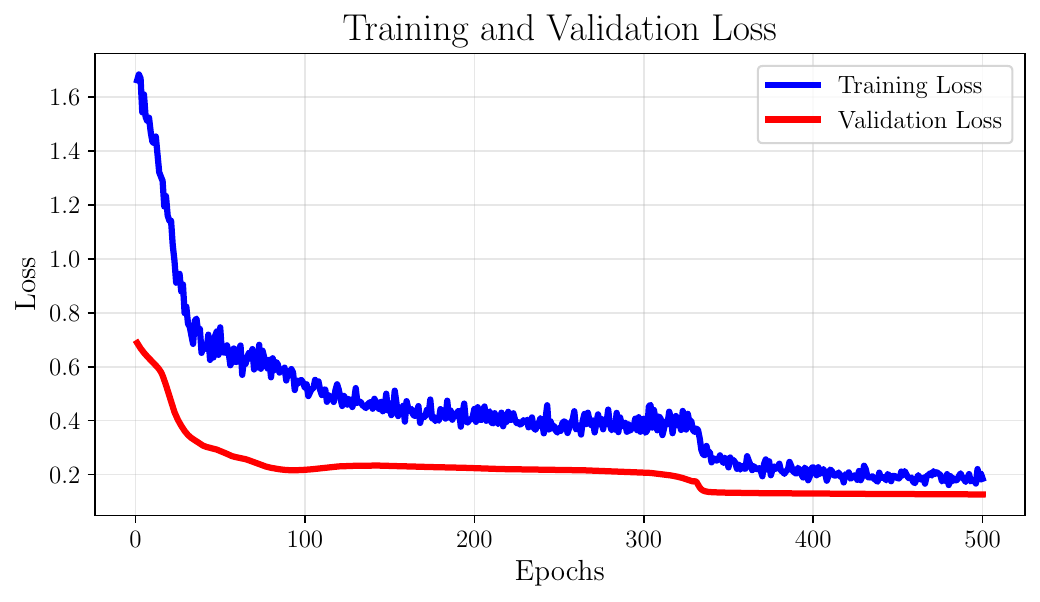}}
    \hfill
    \subfloat[Accuracy]{\includegraphics[width=0.48\textwidth]{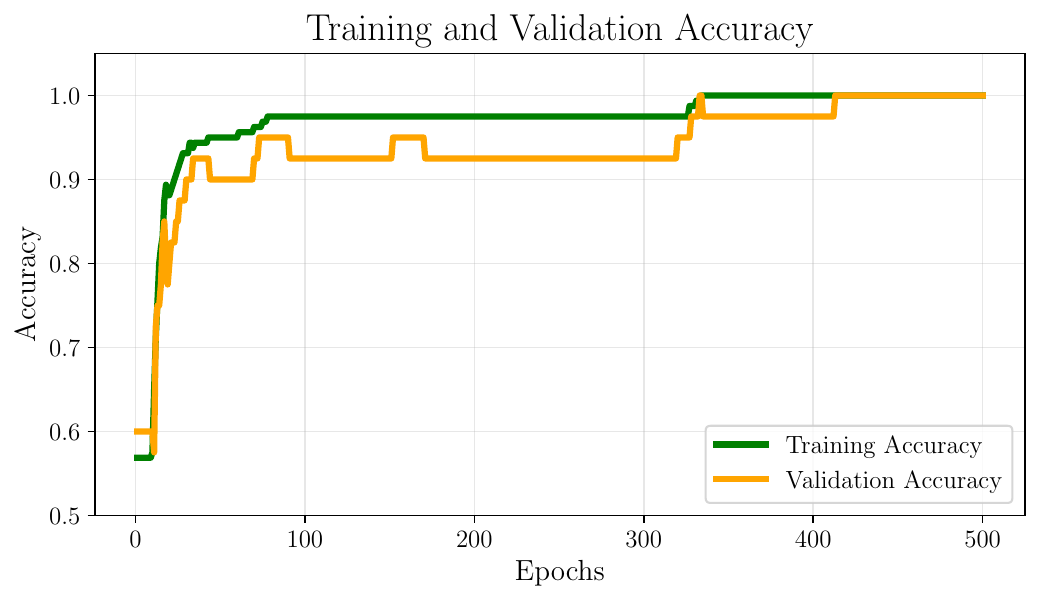}}
    \caption{Optimization behavior of ANDRE on the Grandparent task.
    Left: evolution of training and validation loss.
    Right: corresponding training and validation accuracy.
    The curves indicate stable convergence and strong generalization.}
    \label{fig:grandparent-training}
\end{figure}

\paragraph{Learned Predicate Weights and Rule Extraction.}
Upon convergence, ANDRE produces sharp probability distributions over the symbolic subpredicate space
$\{b_j, \neg b_j, 1\}$ for each body predicate in each subrule.
High-confidence selections correspond to predicates that consistently participate
in valid grandparent relationships, while irrelevant predicates are suppressed.

Figure~\ref{fig:grandparent-weights} illustrates the final learned subpredicate weights. Accordingly, after applying the predicate identification procedure described in Appendix~\ref{app:pred-identification}, ANDRE extracts the following symbolic rule set for the \texttt{grandparent} task:
\begin{equation*}
\begin{aligned}
\texttt{grandparent}(X_1, X_3) \leftarrow {} &
\texttt{mother}(X_1, X_2) \wedge \texttt{mother}(X_2, X_3), \\
\texttt{grandparent}(X_1, X_3) \leftarrow {} &
\texttt{father}(X_1, X_2) \wedge \texttt{father}(X_2, X_3), \\
\texttt{grandparent}(X_1, X_3) \leftarrow {} &
\texttt{mother}(X_1, X_2) \wedge \texttt{father}(X_2, X_3), \\
\texttt{grandparent}(X_1, X_3) \leftarrow {} &
\texttt{father}(X_1, X_2) \wedge \texttt{mother}(X_2, X_3),
\end{aligned}
\end{equation*}
which are both semantically and syntactically correct. Variables $X_1, X_3$ are head variables, while $X_2$ is an auxiliary variable.

For each rule, we present the accuracy over training and validation sets. Also, we consider ($N_b, N_r, \frac{N_r}{N_b})$ as an evaluation metric showing the quality of each rule. $N_b$ is the number of trainable samples each rule body satisfies, while $N_r$ is the number of trainable samples that both rule body and head satisfy. When $\frac{N_r}{N_b}$ is higher, it means the extracted rule is satisfying all relevant samples in the training samples. Below are the extracted rules for the grandparent task:
\begin{lstlisting}[style=stateSnapshot, caption={Extracted Rules for \texttt{grandparent}$(X_1, X_3)$ (Synthetic Dataset)}]
[1] grandparent(X1, X3) :- father(X1, X2) and mother(X2, X3).
  Training Acc: 0.6825 | Eval Acc: 0.6900
  Val Coverage: (N_b=53, N_r=53, N_r/N_b=1.00)
  Train Coverage: (N_b=219, N_r=219, N_r/N_b=1.00)

[2] grandparent(X1, X3) :- father(X1, X2) and father(X2, X3).
  Training Acc: 0.6725 | Eval Acc: 0.6850
  Val Coverage: (N_b=52, N_r=52, N_r/N_b=1.00)
  Train Coverage: (N_b=211, N_r=211, N_r/N_b=1.00)

[3] grandparent(X1, X3) :- mother(X1, X2) and mother(X2, X3).
  Training Acc: 0.6850 | Eval Acc: 0.6500
  Val Coverage: (N_b=49, N_r=47, N_r/N_b=0.9592)
  Train Coverage: (N_b=225, N_r=223, N_r/N_b=0.9911)

[4] grandparent(X1, X3) :- father(X2, X3) and mother(X1, X2).
  Training Acc: 0.6538 | Eval Acc: 0.65
  Val Coverage: (N_b=45, N_r=45, N_r/N_b=1.00)
  Train Coverage: (N_b=196, N_r=196, N_r/N_b=1.00)
\end{lstlisting}
\vspace{4mm}

% \[
% \begin{aligned}
% \texttt{grandparent}(X_1, X_3) \leftarrow {} &
% \texttt{father}(X_1, X_2) \wedge \texttt{father}(X_2, X_3). \\
% \texttt{grandparent}(X_1, X_3) \leftarrow {} &
% \texttt{mother}(X_1, X_2) \wedge \texttt{father}(X_2, X_3). \\
% \texttt{grandparent}(X_1, X_3) \leftarrow {} &
% \texttt{mother}(X_1, X_2) \wedge \texttt{mother}(X_2, X_3). \\
% \texttt{grandparent}(X_1, X_3) \leftarrow {} &
% \texttt{father}(X_1, X_2) \wedge \texttt{mother}(X_2, X_3).
% \end{aligned}
% \]

These subrules collectively form a complete and interpretable definition of the grandparent relation,
fully consistent with first-order logic and human intuition.

% \begin{figure}[h]
%     \centering
%     \includegraphics[width=0.95\linewidth]{andre_weights_visualization.pdf}
%     % \fbox{\parbox[c][5cm][c]{0.95\linewidth}{\centering Final learned ANDRE subpredicate weights}}
%     \caption{Final softmax-normalized subpredicate probabilities for the Grandparent task.
%     Each subrule converges to a sparse and interpretable predicate configuration.}
%     \label{fig:grandparent-weights}
% \end{figure}

\paragraph{Discussion.}
This visual example demonstrates ANDRE’s ability to learn multi-clause first-order rules
directly from data using a fully differentiable optimization process.
The model correctly handles auxiliary variables through syntactic regularization
and extracts multiple valid subrules without relying on predefined templates.
These results further highlight ANDRE’s robustness, interpretability,
and suitability for structured relational reasoning.

\subsection{Unstructured Rule: Arbitrary Task}
To demonstrate ANDRE's internal behavior and learning dynamics, we constructed a controlled synthetic dataset \( E \in \mathbb{R}^{100 \times (4+1)} \) consisting of 100 examples. Each row \( e \in E \) represents a training sample composed of fuzzy truth values assigned to four potential body predicates \( \texttt{b} = \{\texttt{b}_1, \texttt{b}_2, \texttt{b}_3, \texttt{b}_4\} \), and a corresponding Boolean label \( h \in \{0, 1\} \) representing the ground truth. These labels are generated based on a known logical rule composed of two subrules: $\texttt{h} \leftarrow \texttt{h}_1 \vee \texttt{h}_2$, where the subrules are defined as: $\texttt{h}_1 \leftarrow \neg \texttt{b}_2 \wedge \texttt{b}_4$ and $\texttt{h}_2 \leftarrow \texttt{b}_1 \wedge \neg \texttt{b}_3$. The corresponding fuzzy condition for generating Boolean labels is encoded as:
\begin{equation*}
h =\; \left( (1 - b_2) > 0.5 \;\land\; b_4 > 0.5 \right)
    \lor\; \left( b_1 > 0.5 \;\land\; (1 - b_3) > 0.5 \right).
\end{equation*}
The objective for ANDRE is to infer both the correct number of subrules and the structure of each subrule (i.e., which predicates to include, exclude, or negate), using only the probabilistic inputs and Boolean outputs without prior knowledge of the underlying rule structure.

The symbolic subrule matrix \( \texttt{B}_{ij}^{\text{andre}} \in \{\texttt{b}_j, \neg \texttt{b}_j, \texttt{1} \}^{n \times 4} \) is initialized as empty and must be inferred by the model during training, where \( n \) is the number of subrules, and \( j = 1,\dots,4 \) refers to the candidate predicates.

Training proceeds by feeding probabilistic valuations \( b \) into the network, computing the predicted labels \( \hat{h} \), and comparing them with the ground-truth labels \( h \). The binary cross-entropy loss \( \mathcal{L}_{\text{BCE}} \) and the entropy regularization loss \( \mathcal{L}_{\text{E}} \) are used to guide training (see Eq.~\ref{eq:bce-loss} and Eq.~\ref{eq:entropy-loss}). Since this example uses only zero-arity predicates (i.e., propositional inputs), no head or auxiliary variables are present, so all syntactic losses are deactivated in this example. Only semantic losses are active. The total loss is minimized via gradient-based backpropagation.
\begin{figure}[h]
    \centering
    \includegraphics[width=0.6\linewidth]{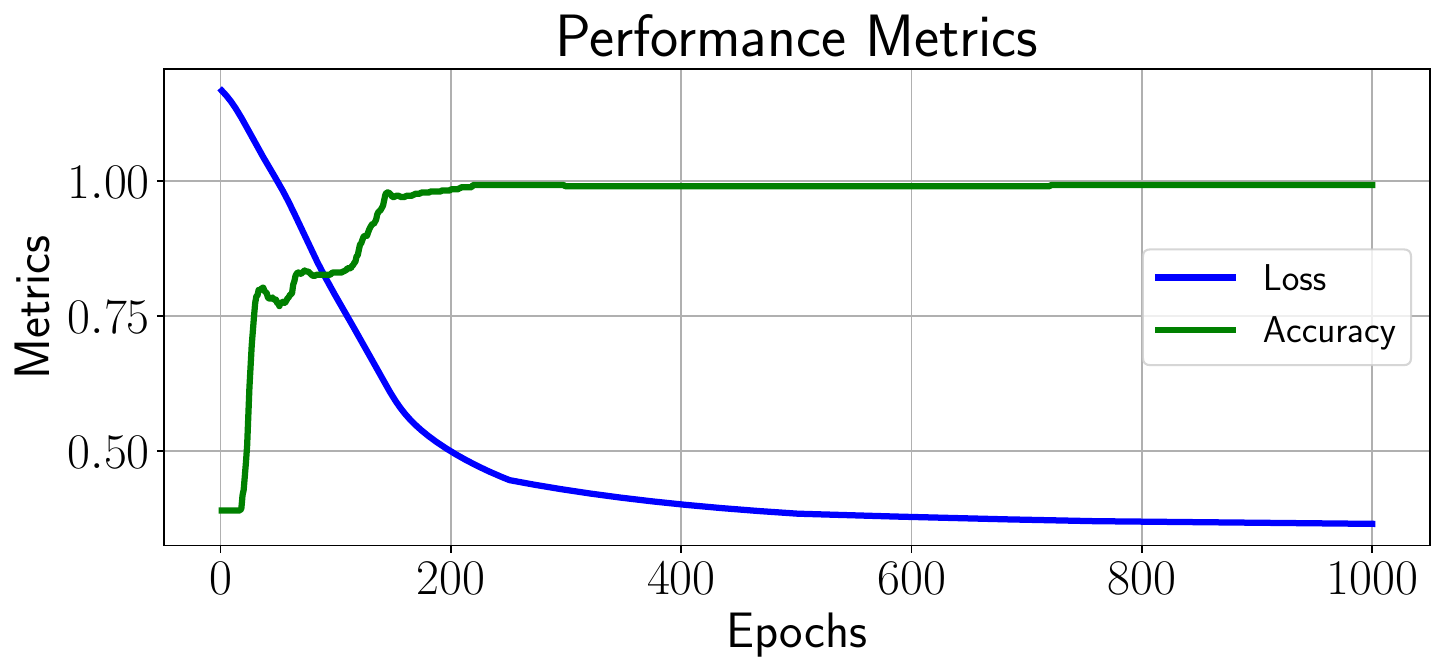}
    \caption{ANDRE performance metrics during training.}
    \label{fig:loss_acc}   
\end{figure}

Figure~\ref{fig:loss_acc} illustrates how the loss and accuracy metrics evolve during training. %The training is divided into two curriculum phases, each lasting 1000 epochs. During the first phase, only a single subrule is trained. The model achieves an accuracy of $0.822$, which does not meet the early stopping threshold \( \tau = 0.95 \). Consequently, a second subrule is introduced. In the second phase, the additional subrule allows the model to capture more complex patterns, resulting in improved performance. The accuracy increases to $0.953$ and the total loss decreases. Since the threshold is now satisfied, training terminates automatically.
Accordingly, the loss converges to the minimum value during the training, and the training accuracy converged to $1.0$, meaning that model could successfully find the global optimum.

Figure~\ref{fig:subrule_weights} depicts the evolution of the softmax-normalized weights across training. For subrule \( \hat{\texttt{h}}_1 \), the converged subpredicates are: $\{\texttt{1}, \neg \texttt{b}_2, \texttt{1}, \texttt{b}_4\}$, 
corresponding to: $\hat{\texttt{h}}_1 \leftarrow \neg \texttt{b}_2 \wedge \texttt{b}_4$. For subrule \( \hat{\texttt{h}}_2 \), the model converges to: $\{\texttt{b}_1, \texttt{1}, \neg \texttt{b}_3, \texttt{1}\}$, which translates to: $\hat{\texttt{h}}_2 \leftarrow \texttt{b}_1 \wedge \neg \texttt{b}_3$. Thus, the final symbolic matrix extracted by ANDRE is:
\begin{equation*}
\texttt{B}_{ij}^{\text{andre}} = \begin{bmatrix}
        \texttt{1}  & \neg \texttt{b}_2 & \texttt{1} & \texttt{b}_4 \\
        \texttt{b}_1 & \texttt{1} & \neg \texttt{b}_3 & \texttt{1} 
    \end{bmatrix}.
\end{equation*}

From this matrix, the overall rule reconstructed by ANDRE is:
\begin{equation*}
\hat{\texttt{h}} \leftarrow (\neg \texttt{b}_2 \wedge \texttt{b}_4) \vee (\texttt{b}_1 \wedge \neg \texttt{b}_3),
\end{equation*}
which exactly matches the target rule used to generate the dataset. The total training time was approximately $1.83$ seconds, highlighting the efficiency of ANDRE even on small datasets.

\begin{figure}[H]
    \centering
    \includegraphics[width=\linewidth]{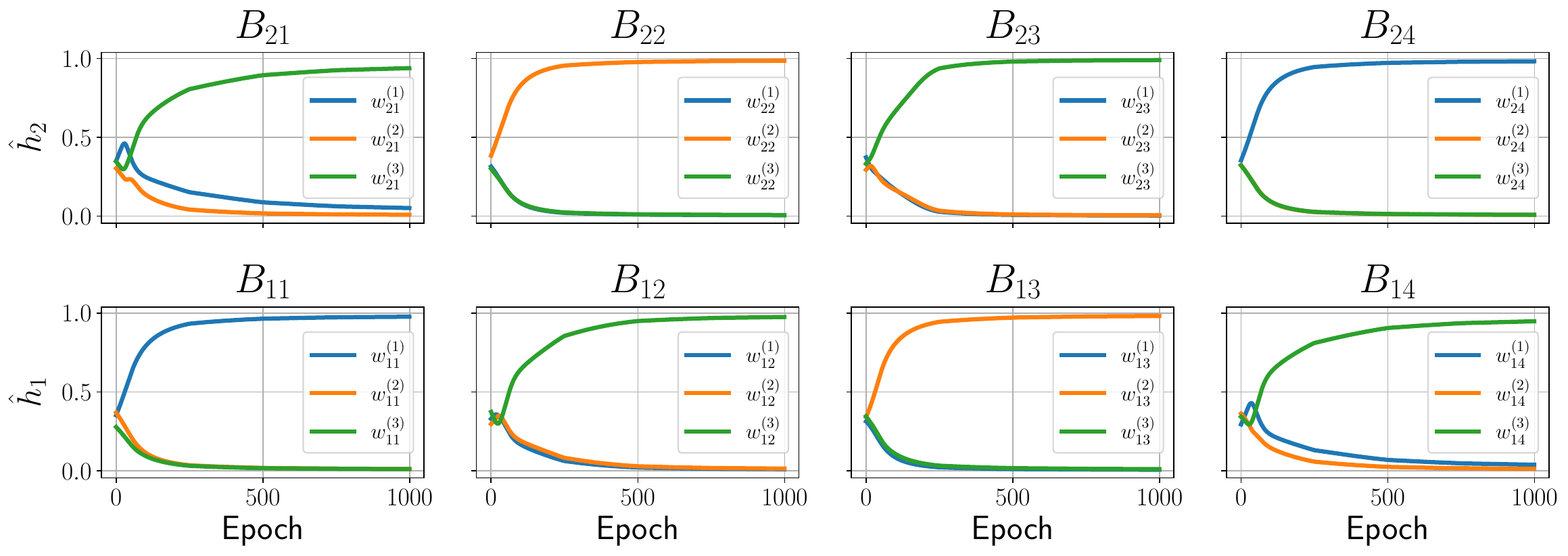}
    \caption{Final subpredicate probabilities for each \( \texttt{B}_{ij} \). Each subplot shows the softmax-normalized probability distribution over three symbolic forms: \( \texttt{b}_j \) (blue), \( \neg \texttt{b}_j \) (orange), and \( \texttt{1} \) (green). Sharp convergence indicates high confidence in subpredicate selection.}
    \label{fig:subrule_weights}
\end{figure}

\section{ANDRE Results on Classical ILP Tasks}
\label{app:classical-ilp}

This section presents ANDRE's performance on a set of classical ILP benchmark tasks widely used in the literature. For each task, we provide the formal representation of the data and background structure required for propositionalization, followed by the final symbolic rule extracted by ANDRE after training.

Throughout this section, the following notations are used consistently:
\begin{enumerate}
    \item \( X \): the complete set of logical variables used in the task.
    \item \( X^h \): the subset of variables appearing in the head predicate.
    \item \( \mathbb{E} \): the domain of possible constant values for the variables in \( X \).
    \item \( \texttt{b} = \{\texttt{b}_1, \texttt{b}_2, \ldots, \texttt{b}_m\} \): the set of candidate body predicates.
    \item \( \texttt{h} \): the symbolic head predicate to be learned.
    \item \( \mathbb{B} \): the background knowledge containing supporting facts.
    \item \( \mathcal{P} \): the set of positive examples (grounded instances of the head predicate).
    \item \( \mathcal{N} \): the set of negative examples.
\end{enumerate}

For each task, we apply a propositionalization procedure to convert the symbolic representations into a trainable dataset. After training ANDRE, we report the discovered symbolic rule, which is expected to generalize over both training and test samples.

\vspace{0.5em}
\noindent\textbf{1. The \texttt{Predecessor} Task:}

\noindent\textit{Objective:} Learn a rule that identifies when one number is the predecessor of another, using background knowledge about numeric succession.

\begin{itemize}
    \item Variable set: \( X = \{X_1, X_2\} \)
    \item Head variable set: \( X^h = X \)
    \item Domain of constants: \( \mathbb{E} = \{0, 1, \ldots, 8\} \)
    \item Body predicates: \( \texttt{b} = \{\texttt{successor}(X_2, X_1)\} \)
    \item Head predicate: \( \texttt{h} = \texttt{predecessor}(X_1, X_2) \)
    \item Background knowledge:
    \[
    \mathbb{B} = \{\texttt{successor}(X_i, X_i{+}1) \mid X_i \in \mathbb{E}\} \cup \{\texttt{zero}(0)\}
    \]
    \item Positive examples:
    \[
    \mathcal{P} = \{\texttt{predecessor}(X_j, X_i) \mid X_j = X_i{+}1, \; X_i \in \mathbb{E} \}
    \]
    \item Negative examples:
    \[
    \mathcal{N} = \{\texttt{predecessor}(X_j, X_i) \mid X_j \neq X_i{+}1, \; X_i \in \mathbb{E} \}
    \]
\end{itemize}

\noindent After training, the learned rule extracted by ANDRE is:
\begin{equation}
\texttt{predecessor}(X_1, X_2) \; \texttt{:- successor}(X_2, X_1).
\end{equation}

\noindent This result aligns perfectly with the expected inverse of the \texttt{successor} relation, demonstrating ANDRE’s ability to infer correct relational structure from structured examples and background knowledge.

\vspace{1em}
\noindent\textbf{2. The \texttt{Odd} Task:}

\noindent\textit{Objective:} Learn the logical patterns that define odd numbers using predecessor and parity relationships, and generalize them to unseen numerical values.

\begin{itemize}
    \item Variable set: \( X = \{X_1, X_2, X_3\} \)
    \item Head variable set: \( X^h = \{X_3\} \)
    \item Auxiliary variable set: \( X^a = X \setminus X^h \)
    \item Domain of constants: \( \mathbb{E} = \{0, 1, \ldots, 30\} \)
    \item Body predicates:
    \[
    \begin{aligned}
    \texttt{b} = \{&\texttt{zero}(X_1), \texttt{zero}(X_2), \texttt{zero}(X_3), \texttt{successor}(X_1,X_2), \\
    &\texttt{successor}(X_2,X_3), \texttt{successor}(X_1,X_3), \texttt{odd}(X_1), \texttt{odd}(X_2)\}
    \end{aligned}
    \]
    \item Head predicate: \( \texttt{h} = \texttt{odd}(X_3) \)
    \item Background knowledge:
    \[
    \mathbb{B} = \{\texttt{successor}(X_i, X_i{+}1) \mid X_i \in \mathbb{E}\} \cup \{\texttt{zero}(0)\}
    \]
    \item Positive examples:
    \[
    \mathcal{P} = \{\texttt{odd}(X_i) \mid X_i \bmod 2 = 1,\; X_i \in \mathbb{E} \cup \{31\}\}
    \]
    \item Negative examples:
    \[
    \mathcal{N} = \{\texttt{odd}(X_i) \mid X_i \bmod 2 = 0,\; X_i \in \mathbb{E}\}
    \]
\end{itemize}

\noindent After applying the propositionalization method, the training dataset \( E \) is constructed and ANDRE is trained. On unseen test data, the extracted subrules are:
\begin{enumerate}
    \item \texttt{odd}(X\textsubscript{3}) :- \texttt{zero}(X\textsubscript{2}), \texttt{successor}(X\textsubscript{2}, X\textsubscript{3}).
    \item \texttt{odd}(X\textsubscript{3}) :- \texttt{successor}(X\textsubscript{2}, X\textsubscript{3}), \( \neg \texttt{odd}(X\textsubscript{2}) \).
    \item \texttt{odd}(X\textsubscript{3}) :- \texttt{successor}(X\textsubscript{2}, X\textsubscript{3}), \texttt{successor}(X\textsubscript{1}, X\textsubscript{2}), \texttt{odd}(X\textsubscript{1}).
\end{enumerate}

\vspace{1em}
\noindent\textbf{3. The \texttt{Even} Task:}

\noindent\textit{Objective:} Discover the rule structure that governs even numbers, exploiting arithmetic successor relationships and parity predicates.

\begin{itemize}
    \item Variable set: \( X = \{X_1, X_2, X_3\} \)
    \item Head variable set: \( X^h = \{X_3\} \)
    \item Auxiliary variable set: \( X^a = X \setminus X^h \)
    \item Domain of constants: \( \mathbb{E} = \{0, 1, \ldots, 30\} \)
    \item Body predicates:
    \[
    \begin{aligned}
    \texttt{b} = \{&\texttt{zero}(X_1), \texttt{zero}(X_2), \texttt{zero}(X_3), \texttt{successor}(X_1,X_2), \\
    &\texttt{successor}(X_2,X_3), \texttt{successor}(X_1,X_3), \texttt{even}(X_1), \texttt{even}(X_2)\}
    \end{aligned}
    \]
    \item Head predicate: \( \texttt{h} = \texttt{even}(X_3) \)
    \item Background knowledge:
    \[
    \mathbb{B} = \{\texttt{successor}(X_i, X_i{+}1) \mid X_i \in \mathbb{E}\} \cup \{\texttt{zero}(0)\}
    \]
    \item Positive examples:
    \[
    \mathcal{P} = \{\texttt{even}(X_i) \mid X_i \bmod 2 = 0,\; X_i \in \mathbb{E}\}
    \]
    \item Negative examples:
    \[
    \mathcal{N} = \{\texttt{even}(X_i) \mid X_i \bmod 2 = 1,\; X_i \in \mathbb{E}\}
    \]
\end{itemize}

\noindent After training, ANDRE extracted the following subrules from the propositionalized dataset:
\begin{enumerate}
    \item \texttt{even}(X\textsubscript{3}) :- \texttt{zero}(X\textsubscript{3}).
    \item \texttt{even}(X\textsubscript{3}) :- \texttt{successor}(X\textsubscript{2}, X\textsubscript{3}), \( \neg \texttt{even}(X\textsubscript{2}) \).
    \item \texttt{even}(X\textsubscript{3}) :- \texttt{successor}(X\textsubscript{2}, X\textsubscript{3}), \texttt{successor}(X\textsubscript{1}, X\textsubscript{2}), \texttt{even}(X\textsubscript{1}).
\end{enumerate}

\vspace{1em}
\noindent\textbf{4. The \texttt{LessThan} Task:}

\noindent\textit{Objective:} Learn transitive and arithmetic rules that define the less-than relation between two integers using a successor-based formulation.

\begin{itemize}
    \item Variable set: \( X = \{X_1, X_2, X_3\} \)
    \item Head variable set: \( X^h = \{X_1, X_3\} \)
    \item Auxiliary variable set: \( X^a = \{X_2\} \)
    \item Domain of constants: \( \mathbb{E} = \{0, 1, \ldots, 9\} \)
    \item Body predicates:
    \[
    \begin{aligned}
    \texttt{b} = \{ &\texttt{successor}(X_1,X_2), \texttt{successor}(X_2,X_3), \texttt{successor}(X_1,X_3),\\
    &\texttt{lessThan}(X_3,X_2), \texttt{lessThan}(X_2,X_1)\}
    \end{aligned}
    \]
    \item Head predicate: \( \texttt{h} = \texttt{lessThan}(X_3, X_1) \)
    \item Background knowledge:
    \[
    \mathbb{B} = \{\texttt{successor}(X_i, X_i{+}1) \mid X_i \in \mathbb{E}\} \cup \{\texttt{zero}(0)\}
    \]
    \item Positive examples:
    \[
    \mathcal{P} = \{\texttt{lessThan}(X_i, X_j) \mid X_i < X_j,\; X_i, X_j \in \mathbb{E}\}
    \]
    \item Negative examples:
    \[
    \mathcal{N} = \{\texttt{lessThan}(X_i, X_j) \mid X_i \geq X_j,\; X_i, X_j \in \mathbb{E}\}
    \]
\end{itemize}

\noindent After applying the propositionalization strategy, ANDRE is trained to learn rules. The test set includes integer pairs beyond 9 to evaluate generalization. The extracted subrules are:
\begin{enumerate}
    \item \texttt{lessThan}(X\textsubscript{3}, X\textsubscript{1}) :- \texttt{successor}(X\textsubscript{1}, X\textsubscript{3}).
    \item \texttt{lessThan}(X\textsubscript{3}, X\textsubscript{1}) :- \texttt{lessThan}(X\textsubscript{3}, X\textsubscript{2}), \texttt{lessThan}(X\textsubscript{2}, X\textsubscript{1}).
\end{enumerate}

\vspace{1em}
\noindent\textbf{5. The \texttt{Grandparent} Task:}

\noindent\textit{Objective:} Learn the definition of a grandparent based on transitive parent relationships using both \texttt{mother} and \texttt{father} facts provided in the background knowledge.

\begin{itemize}
    \item Variable set: \( X = \{X_1, X_2, X_3\} \)
    \item Head variable set: \( X^h = \{X_1, X_3\} \)
    \item Auxiliary variable set: \( X^a = \{X_2\} \)
    \item Body predicates:
    \[
    \begin{aligned}
    \texttt{b} = \{ &\texttt{father}(X_1,X_2), \texttt{father}(X_2,X_3), \texttt{father}(X_1,X_3),\\
    &\texttt{mother}(X_1,X_2), \texttt{mother}(X_2,X_3), \texttt{mother}(X_1,X_3)\}
    \end{aligned}
    \]
    \item Head predicate: \( \texttt{h} = \texttt{grandparent}(X_1, X_3) \)
    \item Background knowledge:
    \[
    \begin{aligned}
    \mathbb{B} = \{ &\texttt{mother}(a,c),\; \texttt{mother}(c,e),\; \texttt{mother}(b,d),\; \texttt{mother}(d,f), \\
    &\texttt{father}(g,h),\; \texttt{father}(h,i),\; \texttt{father}(j,k),\; \texttt{father}(k,l), \\
    &\texttt{mother}(m,n),\; \texttt{father}(n,o),\; \dots \}
    \end{aligned}
    \]
    \item Positive examples:
    \[
    \begin{aligned}
    \mathcal{P} = \{&\texttt{grandparent}(a,e),\; \texttt{grandparent}(b,f),\; \texttt{grandparent}(g,i), \\
    &\texttt{grandparent}(j,l),\; \texttt{grandparent}(m,o),\; \dots \}
    \end{aligned}
    \]
\end{itemize}

\noindent Using the extracted dataset, ANDRE learns the following subrules:
\begin{enumerate}
    \item \texttt{grandparent}(X\textsubscript{1}, X\textsubscript{3}) :- \texttt{mother}(X\textsubscript{1}, X\textsubscript{2}), \texttt{mother}(X\textsubscript{2}, X\textsubscript{3}).
    \item \texttt{grandparent}(X\textsubscript{1}, X\textsubscript{3}) :- \texttt{father}(X\textsubscript{1}, X\textsubscript{2}), \texttt{father}(X\textsubscript{2}, X\textsubscript{3}).
    \item \texttt{grandparent}(X\textsubscript{1}, X\textsubscript{3}) :- \texttt{mother}(X\textsubscript{1}, X\textsubscript{2}), \texttt{father}(X\textsubscript{2}, X\textsubscript{3}).
    \item \texttt{grandparent}(X\textsubscript{1}, X\textsubscript{3}) :- \texttt{father}(X\textsubscript{1}, X\textsubscript{2}), \texttt{mother}(X\textsubscript{2}, X\textsubscript{3}).
\end{enumerate}

\vspace{1em}
\noindent\textbf{6. The \texttt{Son} Task:}

\noindent\textit{Objective:} Learn how the \texttt{son} relationship can be derived from \texttt{father}, \texttt{brother}, and \texttt{sister} facts using transitivity and kinship inference.

\begin{itemize}
    \item Variable set: \( X = \{X_1, X_2, X_3\} \)
    \item Head variable set: \( X^h = \{X_1, X_3\} \)
    \item Auxiliary variable set: \( X^a = \{X_2\} \)
    \item Body predicates:
    \[
    \begin{aligned}
    \texttt{b} = \{ & \texttt{father}(X_1,X_2),\; \texttt{father}(X_2,X_3),\; \texttt{father}(X_1,X_3), \\
    & \texttt{brother}(X_1,X_2),\; \texttt{sister}(X_1,X_2),\; \texttt{son}(X_2,X_3) \}
    \end{aligned}
    \]
    \item Head predicate: \( \texttt{h} = \texttt{son}(X_1, X_3) \)
    \item Background knowledge:
    \[
    \begin{aligned}
    \mathbb{B} = \{ &\texttt{father}(a,b),\; \texttt{father}(a,c),\; \texttt{father}(d,e),\; \texttt{father}(d,f), \\
    &\texttt{father}(g,h),\; \texttt{father}(g,i),\; \texttt{brother}(b,c),\; \texttt{brother}(c,b), \\
    &\texttt{brother}(e,f),\; \texttt{sister}(f,e),\; \texttt{sister}(h,i),\; \texttt{sister}(i,h),\; \dots \}
    \end{aligned}
    \]
    \item Positive examples:
    \[
    \mathcal{P} = \{\texttt{son}(b,a),\; \texttt{son}(c,a),\; \texttt{son}(e,d),\; \dots \}
    \]
\end{itemize}

\noindent After applying the propositionalization strategy, ANDRE is trained and infers the following subrules:
\begin{enumerate}
    \item \texttt{son}(X\textsubscript{1}, X\textsubscript{3}) :- \texttt{brother}(X\textsubscript{1}, X\textsubscript{2}), \texttt{son}(X\textsubscript{2}, X\textsubscript{3}).
    \item \texttt{son}(X\textsubscript{1}, X\textsubscript{3}) :- \texttt{father}(X\textsubscript{3}, X\textsubscript{1}).
\end{enumerate}

\vspace{1em}
\noindent\textbf{7. The \texttt{Relatedness} Task:}

\noindent\textit{Objective:} Determine whether two individuals are related, based on transitive closure over \texttt{parent} relationships and recursive definitions of \texttt{related}.

\begin{itemize}
    \item Variable set: \( X = \{X_1, X_2, X_3\} \)
    \item Head variable set: \( X^h = \{X_1, X_3\} \)
    \item Auxiliary variable set: \( X^a = \{X_2\} \)
    \item Body predicates:
    \[
    \begin{aligned}
    \texttt{b} = \{ & \texttt{parent}(X_1,X_2),\; \texttt{parent}(X_2,X_3),\; \texttt{parent}(X_1,X_3), \\
    & \texttt{related}(X_1,X_2),\; \texttt{related}(X_2,X_3) \}
    \end{aligned}
    \]
    \item Head predicate: \( \texttt{h} = \texttt{related}(X_1, X_3) \)
    \item Background knowledge:
    \[
    \begin{aligned}
    \mathbb{B} = \{ &\texttt{parent}(a,b),\; \texttt{parent}(a,c),\; \texttt{parent}(c,e),\; \texttt{parent}(c,f), \\
    &\texttt{parent}(d,c),\; \texttt{parent}(g,h),\; \dots \}
    \end{aligned}
    \]
    \item Positive examples:
    \[
    \begin{aligned}
    \mathcal{P} = \{ &\texttt{related}(a,b),\; \texttt{related}(a,c),\; \texttt{related}(a,e),\; \texttt{related}(a,f), \\
    &\texttt{related}(f,a),\; \texttt{related}(a,a),\; \texttt{related}(d,b),\; \texttt{related}(h,g),\; \dots \}
    \end{aligned}
    \]
\end{itemize}

\noindent ANDRE is then trained using the propositionalized dataset. The extracted subrules are as follows:
\begin{enumerate}
    \item \texttt{related}(X\textsubscript{1}, X\textsubscript{3}) :- \texttt{parent}(X\textsubscript{1}, X\textsubscript{3}).
    \item \texttt{related}(X\textsubscript{1}, X\textsubscript{3}) :- \texttt{parent}(X\textsubscript{3}, X\textsubscript{1}).
    \item \texttt{related}(X\textsubscript{1}, X\textsubscript{3}) :- \texttt{related}(X\textsubscript{3}, X\textsubscript{1}).
    \item \texttt{related}(X\textsubscript{1}, X\textsubscript{3}) :- \texttt{related}(X\textsubscript{1}, X\textsubscript{2}), \texttt{related}(X\textsubscript{2}, X\textsubscript{3}).
\end{enumerate}

\vspace{1em}
\noindent\textbf{8. The \texttt{Father} Task:}

\noindent\textit{Objective:} Infer the \texttt{father} relationship using background assumptions involving marriage and motherhood.

\begin{itemize}
    \item Variable set: \( X = \{X_1, X_2, X_3\} \)
    \item Head variable set: \( X^h = \{X_1, X_3\} \)
    \item Auxiliary variable set: \( X^a = \{X_2\} \)
    \item Body predicates:
    \[
    \begin{aligned}
    \texttt{b} = \{ & \texttt{mother}(X_1,X_2),\; \texttt{mother}(X_2,X_3),\; \texttt{mother}(X_1,X_3), \\
    & \texttt{husband}(X_1,X_2),\; \texttt{husband}(X_2,X_3),\; \texttt{husband}(X_1,X_3) \}
    \end{aligned}
    \]
    \item Head predicate: \( \texttt{h} = \texttt{father}(X_1, X_3) \)
    \item Background knowledge:
    \[
    \begin{aligned}
    \mathbb{B} = \{ 
    &\texttt{aunt\_of}(\text{Howard}, \text{Anne}),\;
    \texttt{aunt\_of}(\text{Anne}, \text{Henry}), \\
    &\texttt{mother\_of}(\text{Anne}, \text{Elizabeth}),\;
    \texttt{married\_to}(\text{Henry}, \text{Anne}), \\
    &\texttt{brother\_of}(\text{John}, \text{Margaret}),\;
    \texttt{brother\_of}(\text{Henry}, \text{Louis}), \\
    &\texttt{married\_to}(\text{Louis}, \text{Adele}),\;
    \texttt{mother\_of}(\text{Adele}, \text{Philip}), \\
    &\texttt{brother\_of}(\text{Philip}, \text{Margaret}),\; \ldots 
    \}
    \end{aligned}
    \]
    \item Positive examples:
    \[
    \mathcal{P} = \{\texttt{father}(\text{Louis}, \text{Philip}),\;
                   \texttt{father}(\text{Henry}, \text{Elizabeth}),\; \ldots \}
    \]
\end{itemize}

\noindent After applying the propositionalization strategy, ANDRE is trained and infers the following subrule:
\begin{enumerate}
    \item \texttt{father}(X\textsubscript{1}, X\textsubscript{3}) :- \texttt{husband}(X\textsubscript{1}, X\textsubscript{2}), \texttt{mother}(X\textsubscript{2}, X\textsubscript{3}).
\end{enumerate}

\vspace{1em}
\noindent\textbf{9. The \texttt{Directed Edge} Task:}

\noindent\textit{Objective:} Determine whether two nodes are connected by a directed edge in either direction, using basic edge facts.

\begin{itemize}
    \item Variable set: \( X = \{X_1, X_2\} \)
    \item Head variable set: \( X^h = \{X_1, X_2\} \)
    \item Body predicates:
    \[
    \texttt{b} = \{\texttt{edge}(X_1,X_2),\; \texttt{d-edge}(X_2,X_1)\}
    \]
    \item Head predicate: \( \texttt{h} = \texttt{d-edge}(X_1, X_2) \)
    \item Background knowledge:
    \[
    \mathbb{B} = \{ \texttt{edge}(a,b),\; \texttt{edge}(b,d),\; \texttt{edge}(c,c),\; \ldots \}
    \]
    \item Positive examples:
    \[
    \mathcal{P} = \{ \texttt{d-edge}(a,b),\; \texttt{d-edge}(b,a),\; 
                    \texttt{d-edge}(b,d),\; \texttt{d-edge}(d,b),\;
                    \texttt{d-edge}(c,c),\; \ldots \}
    \]
\end{itemize}

\noindent After propositionalization, ANDRE learns the following rules:
\begin{enumerate}
    \item \texttt{d-edge}(X\textsubscript{1}, X\textsubscript{2}) :- \texttt{d-edge}(X\textsubscript{2}, X\textsubscript{1}).
    \item \texttt{d-edge}(X\textsubscript{1}, X\textsubscript{2}) :- \texttt{edge}(X\textsubscript{1}, X\textsubscript{2}).
\end{enumerate}

\vspace{1em}
\noindent\textbf{10. The \texttt{Connectedness} Task:}

\noindent\textit{Objective:} Learn the \texttt{connectedness} relation, which holds true if there exists a direct or transitive path (via one or more \texttt{edge} relations) between two nodes.

\begin{itemize}
    \item Variable set: \( X = \{X_1, X_2, X_3\} \)
    \item Head variable set: \( X^h = \{X_1, X_3\} \)
    \item Auxiliary variable set: \( X^a = \{X_2\} \)
    \item Body predicates:
    \[
    \begin{aligned}
    \texttt{b} = \{ &\texttt{edge}(X_1,X_3),\; \texttt{edge}(X_3,X_1),\; \texttt{edge}(X_2,X_3), \\
    &\texttt{connectedness}(X_1,X_2),\; \texttt{connectedness}(X_2,X_3) \}
    \end{aligned}
    \]
    \item Head predicate: \( \texttt{h} = \texttt{connectedness}(X_1, X_3) \)
    \item Background knowledge:
    \[
    \mathbb{B} = \{ \texttt{edge}(a,b),\; \texttt{edge}(b,c),\; \texttt{edge}(c,d),\; \texttt{edge}(b,a),\; \ldots \}
    \]
    \item Positive examples:
    \[
    \begin{aligned}
    \mathcal{P} = \{&
    \texttt{connectedness}(a,b),\; \texttt{connectedness}(b,c),\; \texttt{connectedness}(c,d),\; \\ &\texttt{connectedness}(b,a), \;
    \texttt{connectedness}(a,c),\; \texttt{connectedness}(a,d),\;  \\ &\texttt{connectedness}(a,a), \;\texttt{connectedness}(b,d),\; \texttt{connectedness}(b,b),\; \ldots \}
    \end{aligned}
    \]
\end{itemize}

\noindent After applying the propositionalization strategy, the dataset \( E \) is generated and used to train ANDRE. The learned symbolic rules are as follows:
\begin{enumerate}
    \item \texttt{connectedness}(X\textsubscript{1}, X\textsubscript{3}) :- \texttt{edge}(X\textsubscript{1}, X\textsubscript{3}).
    \item \texttt{connectedness}(X\textsubscript{1}, X\textsubscript{3}) :- \texttt{edge}(X\textsubscript{3}, X\textsubscript{1}).
    \item \texttt{connectedness}(X\textsubscript{1}, X\textsubscript{3}) :- \texttt{connectedness}(X\textsubscript{1}, X\textsubscript{2}), \texttt{edge}(X\textsubscript{2}, X\textsubscript{3}).
\end{enumerate}

\section{ANDRE on Knowledge Bases}
\addcontentsline{toc}{section}{Appendix X: ANDRE on Knowledge-base Datasets}
\label{app:andre-knowledge-base}
Table~\ref{tab:dataset_statistics} shows the statistical details of the knowledge bases used to evaluate the performance of ANDRE.
\begin{table}[h]
\centering
\scriptsize
\caption{Dataset statistics of Countries, Nations, UMLS, Kinship, UW-CSE, and
Alzheimers-amine.}
\label{tab:dataset_statistics}
\begin{tabular}{lccccc}
\toprule
\textbf{Dataset} & \textbf{\#Constant} & \textbf{\#Relation} & \textbf{\#Raw Facts} & \textbf{\# Trainable Samples} & \textbf{\# Test Samples} \\
\midrule
Countries-S1 & 252    & 2   & 1,110 & 1,812,378 & 95,388 \\
Countries-S2 & 252    & 2   & 1,063 & 1,197,092 & 299,272 \\
Nations            & 14     & 56  & 2,364              & 36,496 & 1,920 \\
UMLS               & 135    & 49  & 5,598              & 715,609 & 37,663 \\
% Kinship            & 104    & 26  & 9,612              & 1,074 \\
UW-CSE             & 1,209  & 15  & 2,675              & 3,861,667 & 429,074\\
Alzheimers   & 147    & 32  & 980                & 46,337 & 2,438 \\
% WN18               & 40,943 & 18  & 146,442            & 5,000 \\
% WN18RR             & 40,943 & 11  & 89,869             & 3,134 \\
% FB15KSelected      & 14,541 & 237 & 289,650            & 20,466 \\
\bottomrule
\end{tabular}
\end{table}
The rest of appendix reports some symbolic rules extracted by ANDRE on knowledge-base reasoning benchmarks.
For each task, we present representative rules discovered across different restarts and rule configurations.
All rules are expressed in first-order logic form and extracted using the predicate identification procedure
described in Section~\ref{app:pred-identification}.
Only training and validation accuracy is reported here for brevity.

\subsection{Countries Dataset}
\addcontentsline{toc}{subsection}{Countries Dataset}

The \textit{Countries} task aims to learn the target predicate
\(\texttt{locatedIn}(X_1, X_2),\)
using background relations such as \texttt{neighborOf} and recursive \texttt{locatedIn} facts.
Below, we present all representative rules extracted by ANDRE, formatted in a prompt-style box for clarity.

\begin{lstlisting}[style=stateSnapshot, caption={Extracted Rules for \texttt{locatedIn}$(X_1, X_2)$ (Nations Dataset)}]
[1] locatedIn(X1, X2) :- locatedIn(X3, X2) and neighborOf(X3, X1).
  Training Acc: 0.9927 | Eval Acc: 0.9930

[2] locatedIn(X1, X2) :- locatedIn(X1, X3) and locatedIn(X3, X2).
  Training Acc: 0.9926 | Eval Acc: 0.9930

[3] locatedIn(X1, X2) :- locatedIn(X3, X1) and neighborOf(X2, X3).
  Training Acc: 0.9925 | Eval Acc: 0.9928

[4] locatedIn(X1, X2) :- locatedIn(X3, X1) and neighborOf(X3, X2).
  Training Acc: 0.9925 | Eval Acc: 0.9928

[5] locatedIn(X1, X2) :- locatedIn(X1, X3) and neighborOf(X3, X2).
  Training Acc: 0.9925 | Eval Acc: 0.9928

[6] locatedIn(X1, X2) :- neighborOf(X1, X3) and neighborOf(X2, X3).
  Training Acc: 0.9925 | Eval Acc: 0.9928

[7] locatedIn(X1, X2) :- locatedIn(X2, X3) and neighborOf(X3, X1).
  Training Acc: 0.9925 | Eval Acc: 0.9928

\end{lstlisting}

\vspace{3mm}
% \noindent\textbf{Note.}
% Due to the strong class imbalance in the Countries dataset, multiple structurally distinct rules
% achieve near-identical accuracy despite differing semantic validity.

\subsection{Nations Dataset}
\addcontentsline{toc}{subsection}{Nations Dataset}

The \textit{Nations} task aims to learn semantic relations between geopolitical entities
using background predicates describing economic, political, and geographic interactions.
Below, we report representative rules extracted by ANDRE for different target predicates.

\subsubsection*{\texttt{blockpositionindex}$(X_1, X_2)$}
\addcontentsline{toc}{subsubsection}{\texttt{isa}$(X_1, X_2)$}

The target predicate \texttt{blockpositionindex}$(X_1, X_2)$ captures hierarchical relationships between entities
in the Nations knowledge base. The following rules were extracted by ANDRE:

\begin{lstlisting}[style=stateSnapshot, caption={Extracted Rules for \texttt{blockpositionindex}$(X_1, X_2)$ (Nations Dataset)}]
[1] blockpositionindex(X1, X2) :- blockpositionindex(X2, X1).
  Training Acc: 0.9590 | Eval Acc: 0.9620
  Val Coverage: (N_b=524, N_r=488, N_r/N_b=0.9313)
  Train Coverage: (N_b=9668, N_r=8920, N_r/N_b=0.9226)

[2] blockpositionindex(X1, X2) :-not(timesincewar(X1, X2)) and blockpositionindex(X2, X1).
  Training Acc: 0.9474 | Eval Acc: 0.9474
  Val Coverage: (N_b=486, N_r=455, N_r/N_b=0.9362)
  Train Coverage: (N_b=9019, N_r=8383, N_r/N_b=0.9295)

[3] blockpositionindex(X1, X2) :- commonbloc0(X1, X2).
  Training Acc: 0.8625 | Eval Acc: 0.8568
  Val Coverage: (N_b=290, N_r=270, N_r/N_b=0.9310)
  Train Coverage: (N_b=5394, N_r=5022, N_r/N_b=0.9310)

[4] blockpositionindex(X1, X2) :- commonbloc0(X2, X1).
  Training Acc: 0.8627 | Eval Acc: 0.8536
  Val Coverage: (N_b=284, N_r=264, N_r/N_b=0.9296)
  Train Coverage: (N_b=5400, N_r=5028, N_r/N_b=0.9311)

[5] blockpositionindex(X1, X2) :- not(relintergovorgs(X2, X1)) and embassy(X2, X1) and not(commonbloc2(X1, X2)) and not(reltreaties(X1, X2)) and not(reldiplomacy(X1, X2)) and conferences(X2, X1).
  Training Acc: 0.7538 | Eval Acc: 0.7474
  Val Coverage: (N_b=44, N_r=42, N_r/N_b=0.9545)
  Train Coverage: (N_b=803, N_r=742, N_r/N_b=0.9240)
\end{lstlisting}

\vspace{2mm}
\noindent\textbf{Note.}
% Rules for the \texttt{blockpositionindex} predicate vary significantly in predictive strength, reflecting differences in semantic alignment and coverage across background relations in the Nations knowledge base.
We refer to Appendix~\ref{ap:visual-example} for the definitions of $(N_r, N_b, \frac{N_r}{N_b})$.

\vspace{3mm}
\subsubsection*{\texttt{intergovorgs3}$(X_1, X_2)$}
\addcontentsline{toc}{subsubsection}{\texttt{intergovorgs3}$(X_1, X_2)$}

The target predicate \texttt{intergovorgs3}$(X_1, X_2)$ models intergovernmental organization
relationships in the Nations knowledge base.
Below, we present representative rules extracted by ANDRE after removing redundant variants with unnecessary or overlapping literals.

\begin{lstlisting}[style=stateSnapshot, caption={Extracted Rules for \texttt{intergovorgs}$(X_1, X_2)$ (Nations Dataset)}]
[1] intergovorgs3(X1, X2) :- embassy(X1, X2) and ngoorgs3(X1, X2) and not(ngoorgs3(X2, X1)) and not(exports3(X1, X2)) and not(releconomicaid(X2, X1)) and not(releconomicaid(X3, X1)) and not(duration(X2, X1)) and not(lostterritory(X1, X3)).
      Training Acc: 0.7466 | Eval Acc: 0.7482
      Val Coverage: (N_b=648, N_r=529, N_r/N_b=0.8164)
      Train Coverage: (N_b=12647, N_r=10135, N_r/N_b=0.8014)

[2] intergovorgs3(X1, X2) :- ngoorgs3(X1, X2) and not(economicaid(X2, X1)) and ngo(X2, X1) and not(relngo(X2, X1)) and not(relexportbooks(X1, X3)) and not(violentactions(X2, X1)) and not(warning(X1, X3)).
  Training Acc: 0.6836 | Eval Acc: 0.6819
  Val Coverage: (N_b=430, N_r=370, N_r/N_b=0.8605)
  Train Coverage: (N_b=8473, N_r=7145, N_r/N_b=0.8433)

[3] intergovorgs3(X1, X2) :- not(accusation(X1, X2)) and intergovorgs(X1, X2) and ngo(X2, X1) and not(aidenemy(X2, X3)) and not(releconomicaid(X2, X1)) and not(expeldiplomats(X2, X1)) and treaties(X2, X1) and not(lostterritory(X1, X3)).
  Training Acc: 0.6867 | Eval Acc: 0.6806
  Val Coverage: (N_b=466, N_r=387, N_r/N_b=0.8305)
  Train Coverage: (N_b=8934, N_r=7420, N_r/N_b=0.8305)

[4] intergovorgs3(X1, X2) :- not(accusation(X2, X1)) and not(commonbloc2(X1, X2)) and relngo(X1, X2) and timesinceally(X1, X2) and not(relemigrants(X2, X3)) and not(lostterritory(X2, X3)).
  Training Acc: 0.6551 | Eval Acc: 0.6733
  Val Coverage: (N_b=347, N_r=322, N_r/N_b=0.9280)
  Train Coverage: (N_b=6473, N_r=5737, N_r/N_b=0.8863)

[5] intergovorgs3(X1, X2) :- not(militaryalliance(X1, X2)) and intergovorgs(X1, X2) and not(expeldiplomats(X2, X3)) and not(relngo(X2, X1)) and not(relexportbooks(X1, X3)).
  Training Acc: 0.6724 | Eval Acc: 0.6713
  Val Coverage: (N_b=312, N_r=303, N_r/N_b=0.9712)
  Train Coverage: (N_b=5949, N_r=5723, N_r/N_b=0.9620)

[6] intergovorgs3(X1, X2) :- not(commonbloc2(X1, X2)) and ngoorgs3(X1, X2) and relngo(X1, X2) and not(relngo(X2, X1)) and not(students(X1, X3)) and not(tourism(X2, X1)) and not(dependent(X1, X3)) and not(violentactions(X1, X2)) and not(severdiplomatic(X1, X3)).
  Training Acc: 0.6517 | Eval Acc: 0.6468
  Val Coverage: (N_b=349, N_r=303, N_r/N_b=0.8682)
  Train Coverage: (N_b=6789, N_r=5846, N_r/N_b=0.8611)

[7] intergovorgs3(X1, X2) :- relintergovorgs(X1, X2) and not(economicaid(X2, X3)) and intergovorgs(X2, X1) and not(eemigrants(X3, X2)) and timesinceally(X2, X1).
  Training Acc: 0.6106 | Eval Acc: 0.6163
  Val Coverage: (N_b=261, N_r=236, N_r/N_b=0.9042)
  Train Coverage: (N_b=4515, N_r=4119, N_r/N_b=0.9123)

[8] intergovorgs3(X1, X2) :- relintergovorgs(X1, X2) and intergovorgs(X2, X1) and timesinceally(X2, X1) and not(exportbooks(X1, X2)) and not(dependent(X3, X1)) and not(warning(X1, X3)).
  Training Acc: 0.6166 | Eval Acc: 0.6130
  Val Coverage: (N_b=256, N_r=231, N_r/N_b=0.9023)
  Train Coverage: (N_b=4676, N_r=4286, N_r/N_b=0.9166)
\end{lstlisting}
\vspace{5mm}
% \noindent\textbf{Note.}
% Multiple rules achieve comparable accuracy despite differing semantic plausibility, reflecting the complexity and noise present in relational geopolitical data.

\vspace{3mm}
\subsubsection*{\texttt{negativecomm}$(X_1, X_2)$}
\addcontentsline{toc}{subsubsection}{\texttt{negativecomm}$(X_1, X_2)$}

The target predicate \texttt{negativecomm}$(X_1, X_2)$ models hostile or adversarial communication
between entities in the Nations knowledge base.
Below, we present representative rules extracted by ANDRE, after removing redundant symmetric variants and rules with overlapping recursive structure.

\begin{lstlisting}[style=stateSnapshot, caption={Extracted Rules for \texttt{negativecomm}$(X_1, X_2)$ (Nations Dataset)}]
[1] negativecomm(X1, X2) :- negativebehavior(X1, X2) and timesinceally(X2, X1).
  Training Acc: 0.9084 | Eval Acc: 0.9223
  Val Coverage: (N_b=124, N_r=124, N_r/N_b=1.0000)
  Train Coverage: (N_b=2228, N_r=2228, N_r/N_b=1.0000)

[2] negativecomm(X1, X2) :- negativebehavior(X1, X2) and accusation(X1, X2).
  Training Acc: 0.9208 | Eval Acc: 0.9142
  Val Coverage: (N_b=141, N_r=129, N_r/N_b=0.9149)
  Train Coverage: (N_b=2799, N_r=2615, N_r/N_b=0.9343)

[3] negativecomm(X1, X2) :- negativebehavior(X1, X2) and blockpositionindex(X1, X2).
  Training Acc: 0.8974 | Eval Acc: 0.9049
  Val Coverage: (N_b=137, N_r=123, N_r/N_b=0.8978)
  Train Coverage: (N_b=2411, N_r=2229, N_r/N_b=0.9245)

[4] negativecomm(X1, X2) :- negativebehavior(X1, X2) and pprotests(X1, X2).
  Training Acc: 0.8855 | Eval Acc: 0.9026
  Val Coverage: (N_b=107, N_r=107, N_r/N_b=1.0000)
  Train Coverage: (N_b=1853, N_r=1853, N_r/N_b=1.0000)

[5] negativecomm(X1, X2) :- negativebehavior(X1, X2) and negativebehavior(X2, X1) and blockpositionindex(X1, X2).
  Training Acc: 0.8695 | Eval Acc: 0.8817
  Val Coverage: (N_b=111, N_r=100, N_r/N_b=0.9009)
  Train Coverage: (N_b=2130, N_r=1860, N_r/N_b=0.8732)

[6] negativecomm(X1, X2) :- negativebehavior(X1, X2) and commonbloc0(X1, X2).
  Training Acc: 0.8627 | Eval Acc: 0.8805
  Val Coverage: (N_b=98, N_r=93, N_r/N_b=0.9490)
  Train Coverage: (N_b=1862, N_r=1671, N_r/N_b=0.8974)

[7] negativecomm(X1, X2) :- negativecomm(X2, X1) and violentactions(X2, X1).
  Training Acc: 0.8754 | Eval Acc: 0.8677
  Val Coverage: (N_b=127, N_r=102, N_r/N_b=0.8031)
  Train Coverage: (N_b=2813, N_r=2250, N_r/N_b=0.7999)

[8] negativecomm(X1, X2) :- negativecomm(X2, X1).
  Training Acc: 0.8754 | Eval Acc: 0.8677
  Val Coverage: (N_b=127, N_r=102, N_r/N_b=0.8031)
  Train Coverage: (N_b=2813, N_r=2250, N_r/N_b=0.7999)

[9] negativecomm(X1, X2) :- negativebehavior(X2, X1) and blockpositionindex(X1, X2) and negativecomm(X2, X1).
  Training Acc: 0.8595 | Eval Acc: 0.8631
  Val Coverage: (N_b=113, N_r=93, N_r/N_b=0.8230)
  Train Coverage: (N_b=2111, N_r=1769, N_r/N_b=0.8380)

[10] negativecomm(X1, X2) :- accusation(X1, X2) and accusation(X2, X1).
  Training Acc: 0.8757 | Eval Acc: 0.8619
  Val Coverage: (N_b=96, N_r=84, N_r/N_b=0.8750)
  Train Coverage: (N_b=2060, N_r=1876, N_r/N_b=0.9107)
\end{lstlisting}
\vspace{5mm}

\subsection{UMLS Dataset}
\addcontentsline{toc}{subsection}{UMLS Dataset}

The \textit{UMLS} dataset consists of biomedical entities connected through a large set of
heterogeneous semantic relations.
The task requires learning abstract ontological relationships from highly noisy and densely
connected knowledge graphs.

\subsubsection*{\texttt{isa}$(X_1, X_2)$}
\addcontentsline{toc}{subsubsection}{\texttt{isa}$(X_1, X_2)$}

The target predicate \texttt{isa}$(X_1, X_2)$ captures hierarchical semantic relationships
between biomedical concepts.
Below, we present representative rules extracted by ANDRE for this predicate.
Rules are shown in their raw, prompt-style form as produced by the model.

\begin{lstlisting}[
    style=stateSnapshot,
    caption={Extracted Rules for \texttt{isa}$(X_1, X_2)$ (UMLS Dataset)},
    label={lst:umls_isa_rules}
]
[1] isa(X1, X2) :- isa(X3, X2) and interacts_with(X1, X3).
  Training Acc: 0.9449 | Eval Acc: 0.9466

[2] isa(X1, X2) :- isa(X3, X2) and conceptually_related_to(X3, X1).
  Training Acc: 0.9437 | Eval Acc: 0.9453

[3] isa(X1, X2) :- connected_to(X3, X1) and practices(X3, X2).
  Training Acc: 0.9437 | Eval Acc: 0.9453

[4] isa(X1, X2) :- not(affects(X3, X1)) and conceptual_part_of(X3, X2).
  Training Acc: 0.9430 | Eval Acc: 0.9449

[5] isa(X1, X2) :- not(isa(X3, X2)) and conceptual_part_of(X1, X3).
  Training Acc: 0.9429 | Eval Acc: 0.9445
\end{lstlisting}

\vspace{3mm}
\subsubsection*{\texttt{interacts\_with}$(X_1, X_2)$}
\addcontentsline{toc}{subsubsection}{\texttt{interacts\_with}$(X_1, X_2)$}

The target predicate \texttt{interacts\_with}$(X_1, X_2)$ captures functional, biochemical,
or causal interactions between biomedical entities in the UMLS knowledge base.
The rules below illustrate how ANDRE infers interaction patterns by leveraging ontological
relations (e.g., \texttt{isa}), negated constraints, and higher-order relational dependencies.

\begin{lstlisting}[
    style=stateSnapshot,
    caption={Extracted Rules for \texttt{interacts\_with}$(X_1, X_2)$ (UMLS Dataset)},
    label={lst:umls_interacts_rules}
]
[1] interacts_with(X1, X2) :- isa(X2, X1) and not(associated_with(X2, X3)) and not(interacts_with(X2, X1)) and not(ingredient_of(X2, X3)).
  Training Acc: 0.8920 | Eval Acc: 0.8946

[2] interacts_with(X1, X2) :- isa(X2, X1) and not(interacts_with(X2, X1)) and not(part_of(X3, X1)) and not(measures(X3, X2)).
  Training Acc: 0.8912 | Eval Acc: 0.8938

[3] interacts_with(X1, X2) :- not(location_of(X3, X2)) and isa(X1, X2) and not(interacts_with(X2, X1)) and not(complicates(X2, X3)).
  Training Acc: 0.8865 | Eval Acc: 0.8879

[4] interacts_with(X1, X2) :- interacts_with(X1, X3) and interacts_with(X3, X2).
  Training Acc: 0.8801 | Eval Acc: 0.8823

[5] interacts_with(X1, X2) :- associated_with(X1, X3) and performs(X2, X3).
  Training Acc: 0.8722 | Eval Acc: 0.8744

[6] interacts_with(X1, X2) :- co_occurs_with(X1, X3) and indicates(X2, X3).
  Training Acc: 0.8722 | Eval Acc: 0.8744

[7] interacts_with(X1, X2) :- treats(X3, X2) and developmental_form_of(X1, X3).
  Training Acc: 0.8722 | Eval Acc: 0.8744

[8] interacts_with(X1, X2) :- ingredient_of(X1, X3) and interconnects(X3, X2).
  Training Acc: 0.8722 | Eval Acc: 0.8744

[9] interacts_with(X1, X2) :- result_of(X2, X3) and adjacent_to(X1, X3).
  Training Acc: 0.8722 | Eval Acc: 0.8744
\end{lstlisting}

\vspace{3mm}

\section{Runtime Comparison}
\label{tab:train-time}

Table~\ref{tab:runtime_comparison} reports the total running time required by
NTP$\lambda$, NeuralLP, DFORL, and ANDRE to generate complete sets of logic programs
on the Countries, Nations, and UMLS datasets.
The results highlight substantial differences in computational efficiency across methods.
NeuralLP consistently achieves the shortest runtime due to its purely neural formulation,
while NTP$\lambda$ incurs significantly higher computational cost, particularly on larger
and more relationally complex datasets such as Nations and UMLS.
ANDRE exhibits intermediate runtime performance, remaining substantially faster than
NTP$\lambda$ while scaling more favorably than DFORL on dense relational datasets.
Overall, these results indicate that ANDRE provides a practical trade-off between
computational efficiency and symbolic rule extraction capability.

\begin{table}[h!]
\centering
\caption{Running time (in minutes) of NTP$\lambda$, NeuralLP, DFORL, and ANDRE when generating all logic programs on Countries, Nations, and UMLS datasets. Bold values indicate the best performance.}
\label{tab:runtime_comparison}
\begin{tabular}{lcccc}
\toprule
\textbf{Dataset} & \textbf{NTP$\lambda$} & \textbf{NeuralLP} & \textbf{DFORL} & \textbf{ANDRE} \\
\midrule
Countries-S1 & 30.1  & \textbf{0.1} & 11    & 3.6 \\
Countries-S2 & 43.2  & \textbf{0.1} & 8     & 3.2 \\
Countries-S3 & 252.4 & \textbf{0.1} & 88.4  & 5.5 \\
Nations      & 600.4 & \textbf{0.4} & 0.6   & 1.2 \\
UMLS         & 150.4 & \textbf{0.6} & 5.2   & 3.5 \\
% Kinship      & 320.0 & \textbf{2.3} & 417   & -- \\
\bottomrule
\end{tabular}
\end{table}

\section{Further Results and Baselines}
\label{ap:further-results}

Table~\ref{tab:uwcse_alzheimers_accuracy} summarizes the predictive accuracy of
CILP++, D-LFIT, DFORL, and ANDRE on the UW-CSE and Alzheimers-amine datasets.
On UW-CSE, ANDRE achieves the highest accuracy, substantially outperforming
prior symbolic and differentiable rule learning baselines.
On Alzheimers-amine, DFORL attains the best performance, while ANDRE achieves
competitive accuracy close to the state of the art.
These results indicate that ANDRE generalizes effectively across diverse relational
domains while maintaining strong predictive performance.

\begin{table}[h]
\centering
\caption{Accuracy results (\%) on the UW-CSE and Alzheimers-amine datasets.
Baseline results are taken from~\citep{gao2024differentiable}.
Bold values indicate the best performance for each dataset.}
\label{tab:uwcse_alzheimers_accuracy}
\begin{tabular}{lcccc}
\toprule
\textbf{Dataset} & \textbf{CILP++} & \textbf{D-LFIT} & \textbf{DFORL} & \textbf{ANDRE} \\
\midrule
UW-CSE             & 81.98 & 79.44 & 46.90 & \textbf{95.19} \\
Alzheimers-amine   & 78.70 & 67.75 & \textbf{99.13} & 97.04 \\
\bottomrule
\end{tabular}
\end{table}

\subsection{Alzheimers-amine Dataset}
\addcontentsline{toc}{subsection}{Alzheimers-amine Dataset}

The \textit{Alzheimers-amine} dataset focuses on molecular interaction patterns
and chemical property relations relevant to Alzheimer’s disease drug discovery.
Predicates encode structural, functional, and physicochemical relationships
between molecular components.

\subsubsection*{\texttt{great\_ne}$(X_1, X_2)$}
\addcontentsline{toc}{subsubsection}{\texttt{great\_ne}$(X_1, X_2)$}

The target predicate \texttt{great\_ne}$(X_1, X_2)$ models a strong negative interaction
between molecular entities.
Below, we present representative rules extracted by ANDRE, illustrating both
simple logical constraints and more complex recursive and relational dependencies.

\begin{lstlisting}[
    style=stateSnapshot,
    caption={Extracted Rules for \texttt{great\_ne}$(X_1, X_2)$ (Alzheimers-amine Dataset)},
    label={lst:alzheimers_great_ne_rules}
]
[1] great_ne(X1, X2) :- not(great_ne(X2, X1)).
  Training Acc: 0.9216 | Eval Acc: 0.9217

[2] great_ne(X1, X2) :- not(great_ne(X2, X1)) and great_ne(X3, X2) and not(r_subst_1(X1, X3)).
  Training Acc: 0.8896 | Eval Acc: 0.8786

[3] great_ne(X1, X2) :- x_subst(X2, X3) and r_subst_1(X1, X3).
  Training Acc: 0.7689 | Eval Acc: 0.7609

[4] great_ne(X1, X2) :- gt(X1, X3) and great_pi_acc(X2, X3).
  Training Acc: 0.7689 | Eval Acc: 0.7609

[5] great_ne(X1, X2) :- not(ring_subst_4(X2, X3)) and ring_subst_4(X3, X1).
  Training Acc: 0.7689 | Eval Acc: 0.7609

[6] great_ne(X1, X2) :- great_ne(X3, X2) and flex(X1, X3).
  Training Acc: 0.7689 | Eval Acc: 0.7609

[7] great_ne(X1, X2) :- r_subst_2(X3, X2) and ring_subst_2(X3, X1).
  Training Acc: 0.7689 | Eval Acc: 0.7609

[8] great_ne(X1, X2) :- great_ne(X3, X2) and ring_subst_2(X3, X1).
  Training Acc: 0.7689 | Eval Acc: 0.7609

[9] great_ne(X1, X2) :- pi_doner(X3, X1) and not(ring_substitutions(X2, X3)).
  Training Acc: 0.7689 | Eval Acc: 0.7609

[10] great_ne(X1, X2) :- great_ne(X2, X3) and pi_doner(X3, X1).
  Training Acc: 0.7689 | Eval Acc: 0.7609

\end{lstlisting}
\vspace{3mm}

\subsection{UW-CSE Dataset}
\addcontentsline{toc}{subsection}{UW-CSE Dataset}

The \textit{UW-CSE} dataset models academic relationships within a university domain,
including roles, courses, projects, and advising relationships.
The task is to infer latent advisory relations from heterogeneous academic facts.

\subsubsection*{\texttt{advisedby}$(X_1, X_2)$}
\addcontentsline{toc}{subsubsection}{\texttt{advisedby}$(X_1, X_2)$}

The target predicate \texttt{advisedby}$(X_1, X_2)$ represents an academic advising
relationship between individuals.
Below, we present representative rules extracted by ANDRE, highlighting recursive,
role-based, and relational patterns commonly observed in academic environments.

\begin{lstlisting}[
    style=stateSnapshot,
    caption={Extracted Rules for \texttt{advisedby}$(X_1, X_2)$ (UW-CSE Dataset)},
    label={lst:uwcse_advisedby_rules}
]
[1] advisedby(X1, X2) :- advisedby(X3, X1) and yearsinprogram(X3, X2).

[2] advisedby(X1, X2) :- courselevel(X1, X3) and hasposition(X2, X3).

[3] advisedby(X1, X2) :- professor(X3, X2) and taughtby(X3, X1).

[4] advisedby(X1, X2) :- courselevel(X2, X3) and not(professor(X3, X1)).

[5] advisedby(X1, X2) :- advisedby(X3, X1) and not(inphase(X3, X2)).

[6] advisedby(X1, X2) :- hasposition(X2, X3) and ta(X3, X1).

[7] advisedby(X1, X2) :- not(hasposition(X3, X1)) and projectmember(X2, X3).

[8] advisedby(X1, X2) :- inphase(X1, X3) and professor(X2, X3).

[9] advisedby(X1, X2) :- not(courselevel(X3, X2)) and publication(X1, X3).

[10] advisedby(X1, X2) :- hasposition(X2, X3) and inphase(X3, X1).

\end{lstlisting}

\vspace{3mm}

\onecolumn
\section{Synthetic Datasets Tabular Results}
\label{ap:results}
\begin{table*}[!h]
\centering
% \scriptsize
\caption{Comparison of Rule Extraction Performance between ANDRE and DFORL on Complex Synthetic Datasets with Varying Number of Subrules}
\vspace{0.1in}
\renewcommand{\arraystretch}{1.2}
% \scriptsize
\begin{tabular}{cccccccc}
\toprule
\multirow{3}{*}{\centering\makecell{\textbf{Dataset}}} &
\multirow{3}{*}{\centering\makecell{\textbf{Sample}\\\textbf{Size}}} &
\multicolumn{4}{c}{\textbf{Accuracy}} &
\multicolumn{2}{c}{\multirow{2}{*}{\centering\makecell{\textbf{Rule Extraction}\\\textbf{Success}}}} \\
\cline{3-6}
& &
\multicolumn{2}{c}{\textbf{ANDRE}} &
\multicolumn{2}{c}{\textbf{DFORL}} &
 & \\
\cline{3-4} \cline{5-6} \cline{7-8}
& & \textbf{Train} & \textbf{Test} & \textbf{Train} & \textbf{Test} & \textbf{ANDRE} & \textbf{DFORL} \\
\toprule
\multirow{4}{*}{$\texttt{R}_1$} & $20$ & $0.95$ & $0.80$ & $0.85$ & $0.84$ & \xmark & \xmark \\
 & $50$ & $0.98$ & $0.96$ & $0.82$ & $0.84$ & \cmark & \xmark \\
 & $100$ & $0.95$ & $0.96$ & $0.83$ & $0.85$ & \cmark & \cmark \\
 & $200$ & $0.96$ & $0.96$ & $0.86$ & $0.84$ & \cmark & \cmark \\
\midrule
\multirow{5}{*}{$\texttt{R}_2$} & $50$ & $0.94$ & $0.88$ & $0.78$ & $0.74$ & \xmark & \xmark \\
 & $100$ & $0.95$ & $0.96$ & $0.74$ & $0.73$ & \cmark & \xmark \\
 & $200$ & $0.95$ & $0.95$ & $0.73$ & $0.73$ & \cmark & \xmark \\
 & $500$ & $0.95$ & $0.95$ & $0.75$ & $0.73$ & \cmark & \xmark \\
 & $1000$ & $0.96$ & $0.97$ & $0.74$ & $0.72$ & \cmark & \xmark \\
\midrule
\multirow{5}{*}{$\texttt{R}_3$} & $200$ & $0.87$ & $0.83$ & $0.75$ & $0.79$ & \xmark & \xmark \\
 & $500$ & $0.95$ & $0.95$ & $0.68$ & $0.68$ & \cmark & \xmark \\
 & $1000$ & $0.90$ & $0.90$ & $0.68$ & $0.70$ & \cmark & \xmark \\
 & $1500$ & $0.90$ & $0.90$ & $0.68$ & $0.67$ & \cmark & \xmark \\
 & $2000$ & $0.90$ & $0.89$ & $0.68$ & $0.65$ & \cmark & \xmark \\
\toprule
\end{tabular}
\label{tab:andre_vs_dforl_full}
\end{table*}

\begin{table*}[!h]
\centering
\caption{Comparison of Rule Extraction Performance between ANDRE and DFORL on Noisy Synthetic Datasets}
\renewcommand{\arraystretch}{1.2}
% \scriptsize
\begin{tabular}{ccccccccc}
\toprule
\multirow{3}{*}{\centering\makecell{\textbf{Dataset}}} &
\multirow{3}{*}{\centering\makecell{\textbf{Sample}\\\textbf{Size}}} &
\multirow{3}{*}{\centering\makecell{\textbf{Noise}\\\textbf{(\%)}}} &
\multicolumn{4}{c}{\textbf{Accuracy}} &
\multicolumn{2}{c}{\multirow{2}{*}{\centering\makecell{\textbf{Rule Extraction}\\\textbf{Success}}}} \\
\cline{4-7}
& & & \multicolumn{2}{c}{\textbf{ANDRE}} & \multicolumn{2}{c}{\textbf{DFORL}} & & \\
\cline{4-5} \cline{6-7} \cline{8-9}
& & & \textbf{Train} & \textbf{Test} & \textbf{Train} & \textbf{Test} & \textbf{ANDRE} & \textbf{DFORL} \\
\midrule
\multirow{4}{*}{$\texttt{R}_4$} & \multirow{4}{*}{$200$} & $10$ & $0.86$ & $0.86$ & $0.71$ & $0.66$ & \cmark & \cmark \\
 &  & $20$ & $0.79$ & $0.74$ & $0.68$ & $0.61$ & \cmark & \xmark \\
 &  & $25$ & $0.74$ & $0.71$ & $0.65$ & $0.60$ & \cmark & \xmark \\
 &  & $30$ & $0.69$ & $0.61$ & $0.64$ & $0.58$ & \xmark & \xmark \\
\midrule
\multirow{5}{*}{$\texttt{R}_5$} & \multirow{5}{*}{$500$} & $5$ & $0.90$ & $0.90$ & $0.62$ & $0.56$ & \cmark & \xmark \\
 &  & $15$ & $0.82$ & $0.82$ & $0.58$ & $0.54$ & \cmark & \xmark \\
 &  & $25$ & $0.73$ & $0.72$ & $0.56$ & $0.54$ & \cmark & \xmark \\
 &  & $35$ & $0.62$ & $0.55$ & $0.54$ & $0.51$ & \xmark & \xmark \\
 &  & $45$ & $0.62$ & $0.55$ & $0.49$ & $0.47$ & \xmark & \xmark \\
\midrule
\multirow{5}{*}{$\texttt{R}_6$} & \multirow{5}{*}{$1000$} & $5$ & $0.84$ & $0.82$ & $0.49$ & $0.49$ & \cmark & \xmark \\
 &  & $10$ & $0.80$ & $0.78$ & $0.49$ & $0.48$ & \cmark & \xmark \\
 &  & $15$ & $0.76$ & $0.76$ & $0.49$ & $0.48$ & \cmark & \xmark \\
 &  & $20$ & $0.73$ & $0.71$ & $0.48$ & $0.48$ & \xmark & \xmark \\
 &  & $25$ & $0.70$ & $0.67$ & $0.49$ & $0.48$ & \xmark & \xmark \\
\toprule
\end{tabular}
\label{tab:andre_vs_dforl_noisy}
\end{table*}

% \begin{table}[h!]
% \centering
% \caption{Ablation Study Results}
% \label{tab:ablation}
% \renewcommand{\arraystretch}{1.2}
% \begin{tabular}{lcc}
% \hline
% \textbf{Method} & ANDRE w/o Attention & ANDRE \\
% \hline
% \textbf{Accuracy} & $0.76$ & $0.92$ \\
% \hline
% \end{tabular}
% \end{table}

\end{document}

%% file: math_commands.tex
\usepackage{amsmath,amsfonts,bm}

\def\eqref#1{equation~\ref{#1}}
\def\1{\bm{1}}

\DeclareMathAlphabet{\mathsfit}{\encodingdefault}{\sfdefault}{m}{sl}
\SetMathAlphabet{\mathsfit}{bold}{\encodingdefault}{\sfdefault}{bx}{n}

%% file: iclr2025_conference.bib
@article{muggleton2012ilp,
  title={ILP turns 20: biography and future challenges},
  author={Muggleton, Stephen and De Raedt, Luc and Poole, David and Bratko, Ivan and Flach, Peter and Inoue, Katsumi and Srinivasan, Ashwin},
  journal={Machine learning},
  volume={86},
  pages={3--23},
  year={2012},
  publisher={Springer}
}

@article{cropper2022inductive,
  title={Inductive logic programming at 30: a new introduction},
  author={Cropper, Andrew and Duman{\v{c}}i{\'c}, Sebastijan},
  journal={Journal of Artificial Intelligence Research},
  volume={74},
  pages={765--850},
  year={2022}
}

@inproceedings{cropper2015logical,
  title={Logical minimisation of meta-rules within meta-interpretive learning},
  author={Cropper, Andrew and Muggleton, Stephen H},
  booktitle={Inductive Logic Programming: 24th International Conference, ILP 2014, Nancy, France, September 14-16, 2014, Revised Selected Papers},
  pages={62--75},
  year={2015},
  organization={Springer}
}

@article{cropper2021learning,
  title={Learning programs by learning from failures},
  author={Cropper, Andrew and Morel, Rolf},
  journal={Machine Learning},
  volume={110},
  number={4},
  pages={801--856},
  year={2021},
  publisher={Springer}
}

@article{evans2018learning,
  title={Learning explanatory rules from noisy data},
  author={Evans, Richard and Grefenstette, Edward},
  journal={Journal of Artificial Intelligence Research},
  volume={61},
  pages={1--64},
  year={2018}
}

@inproceedings{
dong2018neural,
title={{Neural Logic Machines}},
author={Honghua Dong and Jiayuan Mao and Tian Lin and Chong Wang and Lihong Li and Denny Zhou},
booktitle={International Conference on Learning Representations},
year={2019},
url={https://openreview.net/forum?id=B1xY-hRctX},
}

@inproceedings{gao2022learning,
  title     = {{Learning First-Order Rules with Differentiable Logic Program Semantics}},
  author    = {Gao, Kun and Inoue, Katsumi and Cao, Yongzhi and Wang, Hanpin},
  booktitle = {Proceedings of the Thirty-First International Joint Conference on
               Artificial Intelligence, {IJCAI-22}},
  publisher = {International Joint Conferences on Artificial Intelligence Organization},
  editor    = {Lud De Raedt},
  pages     = {3008--3014},
  year      = {2022},
  month     = {7},
  note      = {Main Track},
  doi       = {10.24963/ijcai.2022/417},
  url       = {https://doi.org/10.24963/ijcai.2022/417},
}

@inproceedings{sen2022neuro,
  title={Neuro-symbolic inductive logic programming with logical neural networks},
  author={Sen, Prithviraj and de Carvalho, Breno WSR and Riegel, Ryan and Gray, Alexander},
  booktitle={Proceedings of the AAAI conference on artificial intelligence},
  volume={36},
  number={8},
  pages={8212--8219},
  year={2022}
}

@article{serafini2016logic,
  title={{Logic Tensor Networks}},
  author={Samy Badreddine and Artur S. d'Avila Garcez and Luciano Serafini and Mike Spranger},
  journal={Artificial Intelligence},
  year={2020},
  volume={303},
  pages={103649},
  url={https://api.semanticscholar.org/CorpusID:229678739}
}

@inproceedings{shindo2021differentiable,
  title={Differentiable inductive logic programming for structured examples},
  author={Shindo, Hikaru and Nishino, Masaaki and Yamamoto, Akihiro},
  booktitle={Proceedings of the AAAI Conference on Artificial Intelligence},
  volume={35},
  number={6},
  pages={5034--5041},
  year={2021}
}

@article{shindo2023alpha,
  title={
 $\alpha$-ILP: thinking visual scenes as differentiable logic programs},
  author={Hikaru Shindo and Viktor Pfanschilling and Devendra Singh Dhami and Kristian Kersting},
  journal={Machine Learning},
  year={2023},
  volume={112},
  pages={1465-1497},
  url={https://api.semanticscholar.org/CorpusID:257555026}
}

@article{gao2024differentiable,
  title={A differentiable first-order rule learner for inductive logic programming},
  author={Gao, Kun and Inoue, Katsumi and Cao, Yongzhi and Wang, Hanpin},
  journal={Artificial Intelligence},
  volume={331},
  pages={104108},
  year={2024},
  publisher={Elsevier}
}

@article{garcez2023neurosymbolic,
  title={{Neurosymbolic AI}: The 3rd wave},
  author={Garcez, Artur d’Avila and Lamb, Luis C},
  journal={Artificial Intelligence Review},
  volume={56},
  number={11},
  pages={12387--12406},
  year={2023},
  publisher={Springer}
}

@book{shakarian2023neuro,
  title={Neuro Symbolic Reasoning and Learning},
  author={Shakarian, Paulo and Baral, Chitta and Simari, Gerardo I and Xi, Bowen and Pokala, Lahari},
  year={2023},
  publisher={Springer}
}

@article{muggleton2015meta,
  title={Meta-interpretive learning of higher-order dyadic datalog: Predicate invention revisited},
  author={Muggleton, Stephen H and Lin, Dianhuan and Tamaddoni-Nezhad, Alireza},
  journal={Machine Learning},
  volume={100},
  number={1},
  pages={49--73},
  year={2015},
  publisher={Springer}
}

@article{hu2025minimalist,
  title={{Minimalist Softmax Attention Provably Learns Constrained Boolean Functions}},
  author={Hu, Jerry Yao-Chieh and Zhang, Xiwen and Su, Maojiang and Song, Zhao and Liu, Han},
  journal={arXiv preprint arXiv:2505.19531},
  year={2025}
}

@article{payani2019inductive,
  title={{Inductive Logic Programming via Differentiable Deep Neural Logic Networks}},
  author={Ali Payani and Faramarz Fekri},
  journal={ArXiv},
  year={2019},
  volume={abs/1906.03523},
  url={https://api.semanticscholar.org/CorpusID:182952646}
}

@Article{sharifi2023symbolic,
AUTHOR = {Sharifi, Iman and Yildirim, Mustafa and Fallah, Saber},
TITLE = {{Symbolic Imitation Learning: From Black-Box to Explainable Driving Policies}},
JOURNAL = {Applied Sciences},
VOLUME = {15},
YEAR = {2025},
NUMBER = {23},
ARTICLE-NUMBER = {12464},
URL = {https://www.mdpi.com/2076-3417/15/23/12464},
ISSN = {2076-3417},
DOI = {10.3390/app152312464}
}

@incollection{de2008probabilistic,
  title={Probabilistic inductive logic programming},
  author={De Raedt, Luc and Kersting, Kristian},
  booktitle={Probabilistic inductive logic programming: theory and applications},
  pages={1--27},
  year={2008},
  publisher={Springer}
}

@article{richardson2006markov,
  title={Markov logic networks},
  author={Richardson, Matthew and Domingos, Pedro},
  journal={Machine learning},
  volume={62},
  pages={107--136},
  year={2006},
  publisher={Springer}
}

@article{yang2017differentiable,
  title={Differentiable learning of logical rules for knowledge base reasoning},
  author={Yang, Fan and Yang, Zhilin and Cohen, William W},
  journal={Advances in neural information processing systems},
  volume={30},
  year={2017}
}

@article{socher2013reasoning,
  title={Reasoning with neural tensor networks for knowledge base completion},
  author={Socher, Richard and Chen, Danqi and Manning, Christopher D and Ng, Andrew},
  journal={Advances in neural information processing systems},
  volume={26},
  year={2013}
}

@book{hajek2001metamathematics,
  title={Metamathematics of fuzzy logic},
  author={H{\'a}jek, Petr},
  volume={4},
  year={2001},
  publisher={Springer Science \& Business Media}
}

@inproceedings{inoue2009discovering,
  title={Discovering rules by meta-level abduction},
  author={Inoue, Katsumi and Furukawa, Koichi and Kobayashi, Ikuo and Nabeshima, Hidetomo},
  booktitle={International Conference on Inductive Logic Programming},
  pages={49--64},
  year={2009},
  organization={Springer}
}

@article{Colelough2025NeuroSymbolicAI,
  title={{Neuro-Symbolic AI in 2024: A Systematic Review}},
  author={Brandon Curtis Colelough and William Regli},
  journal={ArXiv},
  year={2025},
  volume={abs/2501.05435},
  url={https://api.semanticscholar.org/CorpusID:274180938}
}

@inproceedings{bouchard2015approximate,
  title={{On Approximate Reasoning Capabilities of Low-Rank Vector Spaces.}},
  author={Bouchard, Guillaume and Singh, Sameer and Trouillon, Theo},
  booktitle={AAAI spring symposia},
  year={2015}
}

@inproceedings{Kok2007StatisticalPI,
  title={Statistical predicate invention},
  author={Stanley Kok and Pedro M. Domingos},
  booktitle={International Conference on Machine Learning},
  year={2007},
  url={https://api.semanticscholar.org/CorpusID:6911541}
}

@InProceedings{uwcse2005,
author="Davis, Jesse
and Burnside, Elizabeth
and de Castro Dutra, In{\^e}s
and Page, David
and Costa, V{\'i}tor Santos",
editor="Gama, Jo{\~a}o
and Camacho, Rui
and Brazdil, Pavel B.
and Jorge, Al{\'i}pio M{\'a}rio
and Torgo, Lu{\'i}s",
title="An Integrated Approach to Learning Bayesian Networks of Rules",
booktitle="Machine Learning: ECML 2005",
year="2005",
publisher="Springer Berlin Heidelberg",
address="Berlin, Heidelberg",
pages="84--95",
isbn="978-3-540-31692-3"
}

@inproceedings{wang-pan-2022-deep,
    title = "Deep Inductive Logic Reasoning for Multi-Hop Reading Comprehension",
    author = "Wang, Wenya  and
      Pan, Sinno",
    editor = "Muresan, Smaranda  and
      Nakov, Preslav  and
      Villavicencio, Aline",
    booktitle = "Proceedings of the 60th Annual Meeting of the Association for Computational Linguistics (Volume 1: Long Papers)",
    month = may,
    year = "2022",
    address = "Dublin, Ireland",
    publisher = "Association for Computational Linguistics",
    url = "https://aclanthology.org/2022.acl-long.343/",
    doi = "10.18653/v1/2022.acl-long.343",
    pages = "4999--5009"
}

@article{vaswani2017attention,
  title={Attention is all you need},
  author={Vaswani, Ashish and Shazeer, Noam and Parmar, Niki and Uszkoreit, Jakob and Jones, Llion and Gomez, Aidan N and Kaiser, {\L}ukasz and Polosukhin, Illia},
  journal={Advances in neural information processing systems},
  volume={30},
  year={2017}
}

@inproceedings{acharya2025integrating,
  title={Integrating neurosymbolic AI in advanced air mobility: a comprehensive survey},
  author={Acharya, Kamal and Sharifi, Iman and Lad, Mehul and Sun, Liang and Song, Houbing},
  booktitle={Proceedings of the Thirty-Fourth International Joint Conference on Artificial Intelligence},
  pages={10362--10370},
  year={2025}
}
